\renewenvironment{abstract}{
  \par\bigskip
  \noindent\textbf{\abstractname}\par
  \smallskip
}{\par\bigskip}
\newtheorem{theorem}{Theorem}[chapter]
\newtheorem{lemma}[theorem]{Lemma}
\newtheorem{proposition}[theorem]{Proposition}
\newtheorem{definition}[theorem]{Definition}
\newtheorem{example}[theorem]{Example}
\DeclareMathOperator*{\argmin}{arg\,min}
\DeclareMathOperator*{\argmax}{arg\,max}
\newcommand{\R}[1]{\mathbb{R}^{#1}}
\newcommand{\C}[1]{\mathbb{C}^{#1}}
\newcommand{\Cov}[1]{\text{Cov}\left\{#1\right\}}
\newcommand{\E}[2]{\mathbb{E}_{#1}\left\{#2\right\}}
\newcommand{\V}[2]{\mathbb{V}_{#1}\left\{#2\right\}}
\newcommand{\Es}[1]{\mathbb{E}_{#1}}
\newcommand{\x}{\boldsymbol{x}}
\newcommand{\z}{\boldsymbol{z}}
\newcommand{\ubold}{\boldsymbol{u}}
\newcommand{\A}{\boldsymbol{A}}
\newcommand{\T}{\boldsymbol{T}}
\newcommand{\m}{\boldsymbol{b}}
\newcommand{\bdim}[1]{\operatorname{boxdim}\left(#1\right)}
\newcommand{\zero}[1]{\boldsymbol{0}}
\newcommand{\G}[2]{\mathcal{N}\left(#1,#2\right)}
\newcommand{\Id}{\boldsymbol{I}}
\newcommand{\bSigma}{\boldsymbol{\Sigma}}
\newcommand{\bPsi}{\boldsymbol{\Psi}}
\newcommand{\btheta}{\boldsymbol{\theta}}
\newcommand{\score}{\nabla \log p_{\y} (\y)}
\newcommand{\F}{\boldsymbol{F}}
\newcommand{\yset}{\mathcal{Y}}
\newcommand{\Q}{\boldsymbol{Q}}
\newcommand{\trace}[1]{\text{trace}\left(#1\right)}
\newcommand{\rank}[1]{\text{rank}\left(#1\right)}
\newcommand{\diag}[1]{\text{diag}\left(#1\right)}
\newcommand{\y}{\boldsymbol{y}}
\newcommand{\boldeta}{\boldsymbol{\eta}}
\newcommand{\bomega}{\boldsymbol{\omega}}
\newcommand{\bepsilon}{\boldsymbol{\epsilon}}
\newcommand{\der}[2]{\frac{\partial {#1}}{\partial {#2}}}
\newcommand{\nder}[3]{\frac{\partial^{#3} {#1}}{\partial {#2}^{#3}}}
\newcommand{\loss}[1]{\mathcal{L}_{\textrm{#1}}(\y,f)}
\newcommand{\lossarg}[2]{\mathcal{L}_{\textrm{#1}}\left(#2, f\right)}
\newcommand{\signalset}{\mathcal{X}}
\newcommand{\const}{\text{const.}}
\newcommand{\JT}[1]{{\color{black}{#1}}}
\newcommand{\rev}[1]{{\color{black}{#1}}}
\crefname{equation}{}{}
\Crefname{equation}{}{}
\title{Self-Supervised Learning from Noisy and Incomplete Data} 
\author{
  Juli\'an Tachella \\
  CNRS, ENS Lyon \\
  \and
  Mike Davies \\
  University of Edinburgh \\
}
\begin{document}

\maketitle

\begin{abstract}
\noindent Many important problems in science and engineering involve inferring a signal from noisy and/or incomplete observations, where the observation process is known. Historically, this problem has been tackled using hand-crafted regularization (e.g., sparsity, total-variation) to obtain meaningful estimates. Recent data-driven methods often offer better solutions by directly learning a solver from examples of ground-truth signals and associated observations. However, in many real-world applications, obtaining ground-truth references for training is expensive or impossible. Self-supervised learning methods offer a promising alternative by learning a solver from measurement data alone, bypassing the need for ground-truth references. This manuscript provides a comprehensive summary of different self-supervised methods for inverse problems, with a special emphasis on their theoretical underpinnings, and presents practical applications in imaging inverse problems.
\end{abstract}

\setcounter{tocdepth}{1}
\tableofcontents
\clearpage

\chapter{Introduction to self-supervised learning for inverse problems}\label{chap: introduction}

Many important problems in science and engineering boil down to inferring a signal or image from noisy and/or incomplete observations, where the measurement process, often a physical system, is a priori known. For example, this includes the large range of applications in sensing and imaging inverse problems, from learning the structure of molecules using computational microscopy to astronomical imaging. In healthcare, medical imaging via computational tomography (CT), Magnetic resonance imaging (MRI), and ultrasound provides a crucial component of early diagnosis of disease. While applications in time series and audio include source separation, acoustic tomography and blind deconvolution.


While, historically, such inverse problems were solved through model-based approaches \cite{fessler_model_based_2010}, the powerful representation learning properties of deep neural networks 
have allowed researchers to develop new state-of-the-art data-driven reconstructions. Such solutions, trained on large quantities of ground truth data, are able to exploit the sophisticated statistical dependencies that previous hand-crafted models, such as sparse representations or total variation (TV) regularization~\cite{rudin_nonlinear_1992}, do not capture, and have substantially raised the bar on the achievable image reconstruction performance, e.g., in accelerated MRI image reconstruction \cite{zbontar_fastmri_2019}, showing a significant 6~dB gain in peak signal-to-noise ratio (PSNR) over TV regularization. 


Despite the phenomenal success of such solutions,
their reliance on large amounts of ground truth training data is a key limitation of the technology, restricting its application to problems where access to ground truth data is readily available - ones that have therefore essentially already been "solved" beforehand. This is particularly problematic in important scientific, medical and engineering settings, as well as for sensing systems working in complex environments, where ground truth data is scarce and where prediction accuracy is of overriding importance. This, in turn has led to a growing interest in the development of new \emph{self-supervised learning} solutions that aim to learn reconstructions without direct access to ground truth data. 

The goal of this monograph is to provide a self-contained presentation of such self-supervised learning techniques that have emerged within recent years and  highlighting the links to the underpinning statistical and geometric theory for such methods.

\section{Inverse problems}
The main focus of all such methods is the solution of a mathematical \emph{inverse problem} to estimate or reconstruct a signal or image of interest. While often these may in reality be defined as continuous functions, in order to compute a solution it is necessary to represent it in a discrete form, e.g., through an appropriate basis function expansion \cite{fessler_model_based_2010}. At the risk of committing an inverse crime \cite{Shimron_data_crimes_2022} we will focus in this manuscript on discrete signals, represented/approximated as a finite dimensional vector,
$\x \in \R{n}$ (or $\x \in \C{n}$) that can be estimated from measurements, $\y\in\R{m}$, through the stable \emph{inversion} of an acquisition process, also called the \emph{forward operator}, $\A:\R{n}\mapsto \R{m}$, that we assume to have already accommodated the discretization process:
\begin{equation}\label{eqs:forward_model}
    \y = \A(\x) + \bepsilon.
\end{equation}
Here $\bepsilon$ captures any noise or modelling errors and should be assumed to be possibly signal dependent, like the case of Poisson noise~\cite{luisier_pure_2010}. 

\subsection*{Examples}
We can illustrate the forward model in \Cref{eqs:forward_model} with a few idealized examples that we will use throughout this manuscript:
\begin{itemize}
    \item \textbf{Denoising} is the simplest inverse problem, where the forward operator is the identity mapping, that is $\A(\x)=\x$, and the goal is to remove the noise from the observed measurements.
    \item \textbf{Image inpainting} consists of recovering a set of missing pixels in an image, that is $\A(\x) = \diag{\m}\x$ with mask $\m \in \{0,1\}^{n}$.
    \item \textbf{Super resolution} is generally modelled as an inverse problem~\cite{hussein_correction_2020} with $\A(\x) = \boldsymbol{S}\, \text{circ}(\boldsymbol{k})\x$ where $\text{circ}(\boldsymbol{k}) \in \R{n\times n}$ is a convolution with a kernel $\boldsymbol{k} \in \R{n}$ and $\boldsymbol{S} \in \R{m \times n}$ is a subsampling operation.
    \item \textbf{Accelerated magnetic resonance imaging}  can be written as a linear inverse problem~\cite{zbontar_fastmri_2019}. In the single-coil setting, the acquisitions can be modelled as $\A(\x) =  \diag{\m}\F\x$ where $\F \in \C{n\times n}$ is the 2D discrete Fourier transform and $\m \in \{0,1\}^{n}$ is the acceleration mask.
    \item \textbf{Phase retrieval} is a non-linear inverse problem, which can be written as $\A(\x) = |\boldsymbol{B}\x|^2$ where $\boldsymbol{B}\in\C{m\times n}$ is a linear operator, which can be either random or structured according to the application~\cite{dong_phase_2023}.
    \item \textbf{Inverse scattering} is a complex non-linear inverse problem related to the Helmholtz equation, which can be written~\cite{soubies_efficient_2017} as $$\A(\x) = \text{circ}(\boldsymbol{g}) \diag{\x} (\Id - \text{circ}(\boldsymbol{g}) \diag{\x})^{-1} \boldsymbol{v}$$ where $\text{circ}(\boldsymbol{g})$ denotes a convolution with Green's kernel $\boldsymbol{g}\in\C{n}$ and $\boldsymbol{v}\in\C{n}$ is the incident field.
\end{itemize}

\section{From analytic reconstruction to machine learning}

\JT{Solving an inverse problem consists in devising a reconstruction function $f:\R{m}\mapsto\R{n}$ which removes the noise and inverts the effect of the forward operator, such that $f(\y) \approx \x$ approximately recovers the underlying signal.}
In many imaging and sensing scenarios, the forward model \eqref{eqs:forward_model} is linear, and early systems were explicitly designed to ensure that sufficient measurements were acquired in order that reconstruction through iterative or direct inversion $f(\y)=\A^{-1}\y$ could be used. However, as sensing and imaging problems became more challenging there was a need for more sophisticated reconstruction techniques.

\subsection{Why it is hard to invert?}
The key challenges in solving any ill-conditioned inverse problem are two-fold. First, the measurements acquired are generally not noise free. For example, low flux imaging results in observing only a limited number of photons at each measurement - something that is typically modeled statistically as Poisson noise. Part of the role of the inversion process is therefore to be able to infer clean signals from noisy observations.

The second major challenge is due to an inability to acquire a `complete' set of measurements. Sometimes this is a result of explicit undersampling, for example, in the accelerated MRI example above. In other scenarios the level of incompleteness is more subtle, such as in deconvolution problems where the forward operator might be full rank but severely ill-posed, or in super-resolution example, where the notion of undersampling is to a certain extent user defined. 

In either case, incomplete measurements means that there are insufficient measurements to simply directly invert the problem. \JT{For example, in the case of linear problems,} the forward operator may be rank deficient with a non-trivial null space resulting in an infinity of possible measurement consistent solutions, or the forward operator may be full rank, but severely ill-conditioned, meaning that there will be no general \emph{stable} inverse.

Geometrically, solving the problem of incomplete measurements requires, at least implicitly, the restriction of the signal model to a low dimensional set, as the image of a stable (i.e. Lipschitz) mapping, $f:\R{m} \mapsto \R{n}$, can at most have dimension $m$. This, for example, was the underpinning idea behind the compressed sensing revolution~\cite{candes_introduction_2008}, that popularized the notion of sparse signal models.

\subsection{Model based reconstruction}
Tackling inverse problems with noise and incomplete measurements historically used statistical techniques that combined a model-based consistency loss with the addition of some statistical constraint to capture the desired properties of the signals of interest. For example, this is often achieved by solving a regularized variational optimization problem composed of an $\ell_2$ consistency loss\footnote{In settings with non-Gaussian noise, the $\ell_2$ consistency is often replaced by the negative log-likelihood or other robust alternatives.},  $\|\y-\A(\x)\|^2$, and a regularization term, $\rho(\x)$, that captures the prior knowledge of the set of signals of interest:
\begin{equation}
    \JT{f(\y)} = \argmin_{\x} \|\y-\A(\x)\|^2 + \rho(\x).
\end{equation}

Many different regularizers have been used depending on the precise application, ranging from classical Tikhonov regularization to those that encourage sparse or low rank solutions, such as TV or nuclear norm regularization. However, such hand-crafted regularization can rarely capture all the sophisticated statistical dependencies within the problem leading researchers to explore the possibility of developing superior data-driven solutions. 

\section{Supervised learning}
The standard (supervised) way of learning inverse problem solvers from data consists of using a neural network, $f$, as the reconstruction function, $\hat{\x} = f_{\btheta}(\y)$ with weights, $\btheta \in \R{p}$, learned directly from training data that consist of pairs of ground truth signals and their associated measurements: $\{\x_i,\y_i\}_{i=1}^N$.
This is typically achieved by minimizing some supervised loss, 
\begin{equation} \label{eq: sup training}
    f^* = \argmin_{f} \sum_{i=1}^{N} \lossarg{SUP}{\x_i,\y_i}
\end{equation}
such as the $\ell_2$ loss
\begin{equation}\label{eq:suploss}
 \lossarg{SUP}{\x,\y} =  \frac{1}{n}\| f(\y) - \x \|^2 
\end{equation}
to give the learned solution, $f^*$, where we have dropped the explicit dependence on the weight vector, $\btheta$, and instead consider the optimization in the space of admissible functions. 

In principle, if the class of admissible neural network functions is sufficiently flexible \emph{and} we have sufficient training data, such an approach should allow us to approximate the optimal reconstruction function, which in the case of the $\ell_2$ loss is the conditional mean estimator: 
\begin{equation}
    f^*(\y) \approx \E{\x|\y}{\x}
\end{equation}
where the expectation\footnote{Throughout the manuscript we will use the notation $\E{\x|\y}{\phi(\x)}$ to denote the expectation of $\phi(\x)$ under $p(\x|\y)$.
} 
here is taken with respect to the posterior distribution $p(\x|\y)$.

\JT{The} learning approach transforms the problem into one of regression and potentially enables us to fully exploit the structure available within the training data. In practice, as we will briefly discuss in \Cref{sec:inductive bias}, the choice of the neural network architecture will also play an important role in the performance of the learned inverse mapping. 

\subsection{Commonly used network models}
Various different neural networks configurations for the inverse mapping, $f$, have been proposed for inverse problem solvers. Here, we focus on the two main classes of solutions that have been considered, noting that this will inevitably be incomplete in such a rapidly evolving field.

Most imaging solutions leverage an efficient low-level vision subnetwork structure that provides a image-to-image mapping, $g: \R{n} \mapsto \R{n}$, and is typically realized through either ResNet \cite{he_deep_2015} or UNet \cite{jin_deep_2017,ronneberger_unet_2015} style architectures with more recent incarnations incorporating attention mechanisms, e.g.~\cite{liang_swinir_2021}. This subnetwork is then used in various ways that differ primarily in how the acquisition model, $\A$, is incorporated into the overall solution. There are two broad approaches.
\begin{description}
   \item[Back-projection networks] A popular, simple and yet effective solution  
   is to first map the measurements back into the image/signal domain using a  back-projection operator. In the linear case, this can be done using the linear pseudo-inverse $\A^\dagger$ or some easily computable surrogate to this, such as $\A^\top$. The subnetwork, $\rev{\phi}(\ubold)$, is then used to map the backprojected signal to the clear reconstructed one. The full reconstruction function then takes the form $f(\y) = \rev{\phi}(\A^\dagger \y)$ and is trained in an end-to-end manner.
   \item[Unrolled architectures] These networks are motivated by attempting to mimic the structure of an iterative optimization algorithm unrolled for a small number of iterations, with the image-to-image mapping playing the role of a \emph{proximal} type operator. Probably the simplest such algorithm is the proximal gradient descent variant that takes the form:
   \begin{equation} \label{eq: unrolled}
       \x^{(k+1)} = \rev{\phi_k}\Big(\x^{(k)} - \tau  \JT{\der{\A}{\x}^{\top}}(\A(\x^{(k)}) - \y)\Big)
   \end{equation}
   where the weights in the subnetwork at each iteration, $k$, can be tied or trained independently. A range of different optimization algorithms have been unrolled in this manner, including primal-dual methods~\cite{adler_PDnet_2018} and gradient solvers for variational losses~\cite{Hammernik_varnet_2018}. Such networks tend to perform better than simple back-projection networks.
\end{description}
In each case, for best results, training tends to be performed in an end-to-end manner using~\Cref{eq: sup training}.


\subsection{Limitations of Supervised Learning}
While a supervised learning approach seems to offer the possibility of learning an approximation to the statistically optimal estimator, this is based on access to large quantities of ground truth data on which to train the model. This restricts its application to problems that have essentially been already solved previously (in order to generate the ground truth) and is particularly problematic in important scientific, medical and engineering settings, such as astronomical imaging or microscopy and for systems working in complex environments, where ground truth data is scarce and where prediction accuracy is of overriding importance.

One solution that is often adopted in the machine learning community to counter a lack of ground truth training data is to generate data from simulation. Although this provides access to potentially infinite quantities of data, such data is limited to the model from which the simulations are generated and even advanced simulations cannot fully capture the subtle complexities and dependencies that exist in the real setting. 

A related issue is the problem of distribution shift, where there is a change between the distribution of the training data and the measurements acquired at test time. For example, ground truth data may be available for a different but related set of signals or images that do not exactly represent the signals being targeted at test time.
Even when ground truth data is apparently available, such data is often generated through extended or repeated acquisitions, e.g., in MRI, or increased levels of illumination/radiation, such as in x-ray imaging or microscopy. This can significantly affect the nature of the imaging process and also result in a distribution shift between the acquired training data and the measurements acquired at test time. Unfortunately, supervised learning is notoriously poor at generalizing to such distribution shift~\cite{Recht_imagenet_classifier_generalization_2019}.

These challenges have led researchers to seek to develop new self-supervised learning methods that rely solely on the measurement data and knowledge of the acquisition process. Whether trained on just measurements from scratch or used to fine-tune existing models trained on simulations or related data~\cite{terris_reconstruct_2025}, such methods offer the potential in scientific imaging to learn to image structures and patterns for which no ground truth images yet exist~\cite{belthangady_applications_2019}.

\begin{figure}[t]
    \centering
    \includegraphics[width=1\textwidth,alt={An schematic illustrating supervised training on the left and self-supervised training on the right. In the supervised case, a dataset of paired noisy and clean cryo electron microscopy images are displayed, whereas in the self-supervised setting, the dataset consists of noisy images only.}]{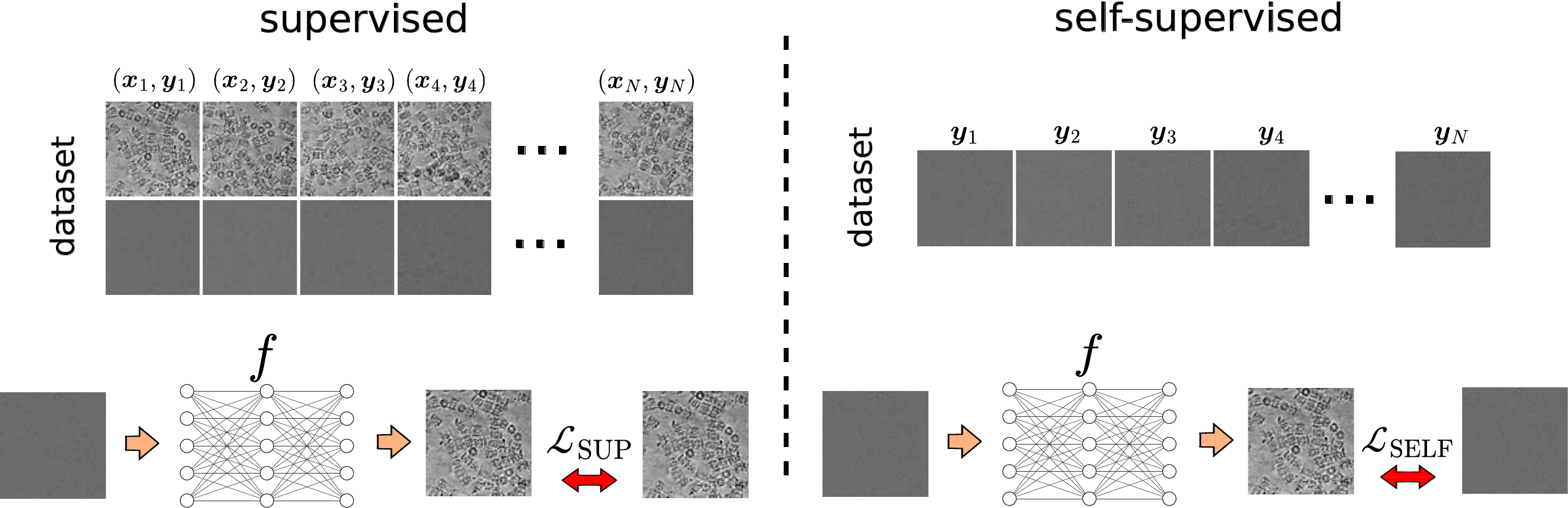}
    \caption{\JT{\textbf{Supervised and self-supervised learning.} Supervised learning requires a dataset of paired data $\{(\x_i,\y_i)\}_{i=1}$, whereas self-supervised learning, the main focus of this manuscript, relies on measurement data alone $\{\y_i\}_{i=1}$, and consists of constructing losses that do not require ground truth data, and can approximate the supervised loss.} }
    \label{fig: self-sup schematic}
\end{figure}

\section{Self-supervised learning}
The essential goal of self-supervised learning methods in imaging and sensing inverse problems is to replace a desired supervised loss function in \Cref{eq:suploss}, $\lossarg{SUP}{\x,\y}$, with an self-supervised loss, $\loss{}$, that is only a function of the measurement data, \JT{as illustrated in~\Cref{fig: self-sup schematic}.}

The general strategy is to develop a proxy that can be used in a self-supervised manner to either replicate or approximate the supervised loss. Here, we will see that the acquisition physics and noise model play an essential role in enabling one to formulate such an appropriate self-supervised loss. 
Examples of the growing body of work include applications to audio restoration~\cite{kashyap_speech_2021}, point cloud~\cite{casajus_total_2019} and image denoising~\cite{krull_noise2void_2019, krull_probabilistic_2020, laine_high-quality_2019, dalsasso_sar2sar_2021}, and image reconstruction~\cite{hendriksen_noise2inverse_2020, chen_equivariant_2021, yaman_self-supervised_2020}.

In many applications, we can expect to have many $n\gg 1$ samples/pixels, while the dominant statistical dependencies tend to be local. Hence the law of large numbers tells us that even when considering the loss for a single training sample $\lossarg{}{\y} \approx \E{\y}{\loss{}}$ and we can achieve relatively stable estimates of the expected losses with a modest number of samples $N$ (and can sometimes even get away with a single sample).
Thus, in most of the analyzes in this monograph,  we will assume that we have access to a sufficiently large dataset of measurements $\{\y_i\}_{i=1}^N$ such that we can replace sums over the dataset by expectations over the measurement distribution $p_{\y}$. The effects of having a finite training dataset are discussed in~\Cref{chap: sample complexity}.

Depending on the knowledge of the noise distribution and the range of forward operators giving rise to the measurements, we can obtain different levels of approximation of the supervised loss.

\paragraph{Unbiased losses}
The best we can hope for is to build a self-supervised loss that is an unbiased estimate of the supervised loss, i.e.,
\begin{equation}
   \E{\y}{\loss{}} = \E{\x,\y}{ \lossarg{SUP}{\x,\y}} + \const
\end{equation}
where the constant is generally a function of the variance of the noise.
Here we can expect to learn a reconstruction function that is as good as the one learned with supervised learning, as
long as we have enough measurement data. 

In some specific cases, we can have an even stronger result, where the loss is 
an unbiased estimate of the reconstruction error of a single instance, $\x$, that is
\begin{equation}
   \E{\y|\x}{\loss{}} = \E{\y|\x}{ \lossarg{SUP}{\x,\y}} + \const
\end{equation}
where the constant is known. This means that we can also use the self-supervised loss to quantify the 
reconstruction error of $\x$ at test time. This is the case, for example, with Stein's Unbiased Risk Estimator and its variants which will be discussed in~\Cref{chap: noisy}.

\paragraph{Constrained losses}
In some cases, we will not be able to build a loss that is unbiased over the whole space of  possible reconstruction functions, but can obtain unbiased estimates over a constrained set of reconstruction functions, $\mathcal{F}$:
\begin{equation}
   \E{\y}{\loss{}} = \E{\x,\y}{ \lossarg{SUP}{\x,\y}} + \const \; \text{ for } f\in \mathcal{F}
\end{equation}
The choice of the constraint set can be motivated by either a restricted function class designed to make the problem learnable, e.g.,~\cite{krull_noise2void_2019,tachella_unsure_2025}, or to incorporate additional prior information within the model, such as imposing an equivariance constraint~\cite{chen_equivariant_2021}. 
If the optimal supervised reconstruction function does not belong to the constrained set $\mathcal{F}$, the learned reconstruction function will inevitably not be optimal and will not match the performance achievable through supervised learning. However, in some cases we will be able to quantify the bias introduced by the constraint and therefore control the performance gap.

\paragraph{Losses sharing global minimum}
A final case is when the self-supervised loss is not an unbiased estimate of the supervised loss,
but the two losses share a common global minimum, i.e.,
\begin{equation}\label{eq: same minimum}
   \argmin_{f} \, \E{\y}{\loss{}} = \argmin_{f} \, \E{\x,\y}{\lossarg{SUP}{\x,\y}}
\end{equation}
Thus, we can expect to learn a reconstruction function that is close to the one learned with supervised learning, but may not be able to quantify the associated reconstruction error.

Throughout this survey we will mainly focus on proxies for the $\ell_2$ supervised loss \eqref{eq:suploss} as this is where the theory is most well developed. However, along the way we will highlight where the theory extends beyond $\ell_2$ and/or where practitioners have applied similar techniques using other loss functions in a more heuristic manner. 

\subsection{Learning a generative model}



Going beyond self-supervised proxies for the supervised loss, we can ask whether it is possible to learn a generative model for the full signal distribution $p_{\x}(\x)$ or the posterior distribution $p(\x|\y)$ from measurement data alone. 
If we were able to learn such a generative model, we would be able to compute not only the conditional mean $f^{*}(\x)=\E{\x|\y}{\x}$ but also any other posterior statistic.
We will see that this is indeed possible in some scenarios, and it often also relies on training on a self-supervised loss that approximates the $\ell_2$ supervised loss (e.g., diffusion models require learning a conditional mean estimator).
However, it is worth noting that learning a generative model is typically a harder task, and in some cases, learning such a model can be impossible even when constructing a self-supervised loss that approximates the supervised one is still possible~\cite{raphan_least_2011} (see also Appendix~\ref{app: moments} for some examples). The question of whether it is possible (or not) to identify the signal distribution from measurement data alone is discussed in~\Cref{sec: model identification}.

\section{What this manuscript is not about}
\label{sec:inductive bias}

\paragraph{Self-supervised representation learning}
\rev{
It is important to draw a distinction at this point between the notion of self-supervised learning in imaging and sensing covered in this manuscript, and self-supervised representation learning (SSRL) techniques~\cite{ericsson_self-supervised_2022}, such as SimCLR~\cite{chen_simple_2020}, BYOL~\cite{grill_bootstrap_2020}, DINO~\cite{oquab2023dinov2}, or masked autoencoders~\cite{he_masked_2022}, that learn powerful representations by training on a set of pretext tasks.

SSRL methods typically require a dataset of clean data
$\{\x_i\}_{i=1}^N$, and aim to learn powerful high-level representations for downstream tasks such as classification or segmentation.
On the contrary, the self-supervised methods presented here rely on noisy and/or incomplete data alone $\{\y_i\}_{i=1}^N$ and aim to recover the underlying clean images associated to these measurements.
Moreover, self-supervised losses in this manuscript serve as proxies for the gold standard supervised reconstruction loss in~\Cref{eq: sup training}, whereas pretext tasks used in SSRL do not aim at approximating a supervised classification or segmentation loss. 

Despite these differences, some of the fundamental principles behind the design of pretext tasks, such as invariance to transformations or masking, are also pillars of the self-supervised losses used for imaging inverse problems, and a better understanding of the connections between these two fields remains an open research problem.
}



\paragraph{Deep image prior and inductive bias}
While we will generally focus on the behaviour of the expected loss, in practice, there will only be a finite amount of training data and thus the inductive bias of the learning system will also play an important role on actual observed performance.

There are various sources of inductive bias in neural network systems, from the choice of model architecture and weight initialization, to the inclusion of regularization terms in the loss function, e.g., weight decay, and even the optimization procedure, e.g., Adam versus stochastic gradient descent, or the use of early stopping.

For example, Ulyanov et al.~\cite{ulyanov_deep_2018} showed that various convolutional neural network architectures could be trained to solve inverse problems from a single set of measurements (the one being restored). Something they called the deep image prior (DIP). This has motivated many researchers \cite{heckel_deep_2019,darestani_accelerated_2021,ren_neural_2020} to try to exploit this concept for unsupervised image reconstruction. However, the nature of the inductive bias is poorly understood~\cite{tachella_neural_2020}, and while, as demonstrated in the original DIP paper, the performance is highly dependent on the specific network architecture
and is generally well below that obtained by self-supervised methods covered in this manuscript, e.g., see practical comparisons in~\cite{lehtinen_noise2noise_2018}.
Thus, although the DIP is certainly an intriguing phenomenon, we do not consider it further here.

\paragraph{Pretrained diffusion and plug-and-play models} 
\rev{Denoising diffusion models~\cite{daras_survey_2024} and plug-and-play (PnP) solutions~\cite{kamilov_plug-and-play_2023} have become popular for solving inverse problems. Such methods typically rely on \emph{pre-trained} denoising neural networks,
where the denoisers are used to define an implicit signal prior through the score function and Tweedie's formula. While these solutions are often termed unsupervised, this is not wholly accurate as the creation of the pre-trained denoisers requires access to ground truth data. 
Nonetheless, there have been recent efforts to learn the denoiser in a self-supervised way~\cite{rozet_learning_2024,daras_consistent_2024}, which we will also cover in this manuscript.
}



\section{Outline}
\Cref{notation} sets out the notation that is commonly used throughout the survey. 
The outline of the rest of the survey is set out below. Most of the self-supervised methods discussed in this monograph are implemented in the DeepInverse open-source library~\cite{tachella2025deepinverse}, which contains \href{https://deepinv.github.io/deepinv/auto_examples/self-supervised-learning/index.html}{various jupyter notebook examples}, covering many of the topics in this manuscript.

\paragraph{}\textbf{Chapter 2} focuses on the problem of self-supervised learning for denoising with the forward operator, $\A = \Id$. We consider various self-supervised losses that act as proxies for the supervised loss under a range of different noise models, from presumed knowledge of various well known noise distributions (Gaussian, Poisson, etc.) to partially specified noise models. Throughout, links between newly proposed self-supervised learning strategies and theoretical results from classical statistics are highlighted. We end by considering the case of more general but invertible forward operators.

\paragraph{}\textbf{Chapter 3} goes on to consider what can be done when we have incomplete measurements, i.e., the forward operator is not invertible. Here, we focus on the case of linear forward operators 
where there exists a non-trivial null space. We describe two approaches to solving this problem. The first relies on access to a set of multiple forward operators $\{\A_g \}_{g=1}^G$, such as the case of being able to select different sampling patterns in accelerated MRI~\cite{yaman_self-supervised_2020}, where the different operators typically have distinct nullspaces. The second approach tackles the more challenging problem of a single rank-deficient forward operator, $\A$, and instead leverages the assumption that the distribution of signals of interest is invariant to a group of transformations, e.g., a shifted version of an image is still a viable image. 


\paragraph{}\textbf{Chapter 4} While the previous two chapters concentrate of the roles of the expected loss functions in enabling self-supervised learning solutions, this chapter considers 
how accurately these expectations can be approximated when there is only access to finite number of training samples. We consider the simple Noise2Noise algorithm to explore how sample complexity for self-supervised learning behaves in relation to the supervised learning case. We also show how the standard holdout method used in most supervised learning to avoid overfitting can be extended to the self-supervised learning setting and how pretrained models can be used to reduce the number of measurement samples required for good performance.

\paragraph{}\textbf{Chapter 5} sets out some open problems within the field and possible future research directions.


\subsection{Notation} \label{notation}

Following standard mathematical notation, vectors will be represented by bold lowercase letters and matrices will be represented in bold uppercase. The $i$th component of a vector $\x$ is written as $x_i$.
The identity matrix is written as $\Id$, the transpose of a matrix, $\A$, is denoted by $\A^\top$ and its pseudo-inverse is written as $\A^\dagger$. 
Other notation that is regularly used throughout the manuscript can be found in the table below. 

\begin{longtable}{c @{ } l}


\caption{Notation}\\
\endfirsthead

\multicolumn{1}{l}{$\quad$\textbf{Symbol}} & \multicolumn{1}{l}{\textbf{Description}} \\
\hline
\endfirsthead

\multicolumn{2}{c}%
{{\tablename\ \thetable{} -- continued from previous page}} \\
\multicolumn{1}{c}{\textbf{Symbol}} & \multicolumn{1}{l}{\textbf{Description}} \\
\hline
\endhead

\hline 
\multicolumn{2}{r}{{Continued on next page}} \\
\endfoot

\hline
\endlastfoot

$\mathbb{R}$        & Set of real numbers. \\
$\mathbb{C}$        & Set of complex numbers. \\
$p_{\x}(\x)$        & Signal distribution. \\
$p_{\y}(\y)$        & Measurement distribution. \\
$\signalset$        & Support of the signal distribution. \\
$\mathcal{Y}$        & Support of the measurement distribution. \\
$\x$                & Vector representing the ground truth signal or image. \\
$\y$                & Vector representing the observed measurements. \\
$\m$ & Binary vector representing a mask. \\
$n$                 & Dimension of the signal vector, $\x \in \R{n}$.\\
$m$                 & Dimension of the measurement vector, $\y \in \R{m}$.\\
$k$                 & Dimension of the signal set, $\signalset$.\\
$N$                 & Number of training samples.\\
$\A$                & Forward operator, $\A: \R{n} \rightarrow \R{m}$ where $\y = \A(\x)$. \\
$f$                 & Reconstruction network mapping $\y$ to an estimate of $\x$.\\
$\btheta$           & Weights of a neural network, $f$.\\
$\|\cdot\|$         & $\ell_2$ norm. \\
$\|\cdot\|_F$       & Frobenius norm of a matrix.\\
$\E{\boldsymbol{u}}{g(\boldsymbol{u})}$  & Expectation of $g(\boldsymbol{u})$ under the distribution $p(\boldsymbol{u})$.\\
$\E{\boldsymbol{u}|\boldsymbol{v}}{g(\boldsymbol{u},\boldsymbol{v})}$         & Expectation of $g(\boldsymbol{u},\boldsymbol{v})$ under the distribution $p(\boldsymbol{u}|\boldsymbol{v})$. \\
$\V{\boldsymbol{u}|\boldsymbol{v}}{\boldsymbol{u}}$         & Variance of $\boldsymbol{u}$ under the distribution $p(\boldsymbol{u}|\boldsymbol{v})$.\\
$\mathcal{L}_{\text{SUP}}(\x,\y,f)$       & Supervised loss.\\
$\mathcal{L}_{\text{X}}(\y,f)$       & Self-supervised loss associated with technique $\text{X}$.\\
$\nabla$            & Gradient of a scalar field.\\ 
$\const$            & Constant term that is not further quantified.\\
$\G{\x}{\bSigma}$ & Multivariate Gaussian with mean $\x$ and covariance $\bSigma$. \\
$\mathcal{P}(\x)$ & Poisson distribution with rate $\x$.\\
$\text{Ber}(\x)$ & Bernouilli distribution with probability $\x \in [0,1]^{n}$.
\end{longtable}
\chapter{Learning from noisy measurements}\label{chap: noisy}


We start by focusing on denoising problems, where the forward operator is simply the identity mapping $\A(\x)=\x$, and thus both images and measurements lie in the same space.
We present various self-supervised losses that only require measurement data and aim at approximating the supervised loss in expectation. We show that the design of the loss is dependent on the knowledge about the noise distribution:
if we fully know the noise distribution, we are generally able to build unbiased estimators of the supervised loss, whereas when the noise distribution is not fully known, we can still build self-supervised losses, but they do not achieve the same performance as supervised learning.



The chapter is divided in three parts: in the first part we assume that we observe two independent noisy realizations $(\y_1,\y_2)$ per image $\x$. In the second part, we will relax this assumption, only relying on a single noisy realization $\y$ per image, but instead assume full knowledge about the noise distribution. In the third part, we will tackle the case where we observe a single noisy realization $\y$ per image and the noise distribution is partially unknown.
Most of the results presented here are independent of the architecture or parameterization of the reconstruction network $f$.



\section{Learning from independent noisy pairs} \label{sec: noise2noise}

In some applications, it is possible to observe two (or more) independent noisy realizations $(\y_1,\y_2)$ of the same underlying signal $\x$. These can then be used to learn an estimator in a self-supervised way even without explicit knowledge of the noise distribution~\cite{mallows_comments_1973}. Noise2Noise~\cite{lehtinen_noise2noise_2018} proposed such an approach\footnote{\JT{The idea of using independent observations of the same underlying parameter for model selection can be traced back to Mallows work in the 1970s~\cite{mallows_comments_1973}. This idea has been rediscovered in the computer vision field by Noise2Noise~\cite{lehtinen_noise2noise_2018}.}} using one of the noisy measurements as input to the reconstruction network, and the other as target, building the following loss:
\begin{equation} \label{eq: noise2noise} \tag{Noise2Noise}
    \lossarg{N2N}{\y_1,\y_2} = \frac{1}{n}\|f(\y_1)-\y_2\|^2.
\end{equation}
Since the input noise is independent of the output noise, we can show
that \Cref{eq: noise2noise} is an unbiased estimator of the supervised loss up to a constant:
\begin{proposition} \label{prop: noise2noise}
Let $\y_1$ and $\y_2$ be two random variables independent conditional on $\x$, and assume that $\E{\y_2|\x}{\y_2}=\x$, then
\begin{equation}
    \E{\y_1,\y_2|\x}{\frac{1}{n}\|f(\y_1)-\y_2\|^2} = \E{\y_1|\x}{\frac{1}{n}\|f(\y_1) - \x\|^2} + \const
\end{equation}
where the constant is independent of $f$.
\end{proposition}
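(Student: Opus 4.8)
The plan is to prove the identity by a standard bias--variance style decomposition, exploiting the conditional independence of $\y_1$ and $\y_2$ together with the unbiasedness hypothesis $\E{\y_2|\x}{\y_2}=\x$. First I would add and subtract the ground truth $\x$ inside the squared norm,
\begin{equation*}
    \|f(\y_1)-\y_2\|^2 = \|f(\y_1)-\x\|^2 + 2\langle f(\y_1)-\x,\, \x-\y_2\rangle + \|\x-\y_2\|^2,
\end{equation*}
and then take the conditional expectation $\E{\y_1,\y_2|\x}{\cdot}$ of both sides, using linearity to treat the three resulting terms separately.

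The first term depends only on $\y_1$ and, after the $1/n$ scaling, is exactly the supervised loss appearing on the right-hand side of the claim. The second, cross term is where the hypotheses enter: since $\y_1$ and $\y_2$ are independent conditional on $\x$, the expectation of the inner product factorizes as $\langle \E{\y_1|\x}{f(\y_1)-\x},\, \E{\y_2|\x}{\x-\y_2}\rangle$, and the second factor equals $\x-\E{\y_2|\x}{\y_2}=\boldsymbol{0}$ by assumption, so the whole cross term vanishes. The third term, $\E{\y_2|\x}{\|\x-\y_2\|^2}$, depends only on the conditional law of $\y_2$ and not at all on the reconstruction function $f$; this is precisely the $\const$ in the statement (up to the $1/n$ factor, it is the total conditional variance of $\y_2$). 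Collecting the three contributions and multiplying through by $1/n$ yields the claim.

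I do not expect a genuine obstacle, as the result is elementary. The only point requiring care is the factorization of the cross term: one must invoke conditional independence correctly to write the expectation of the inner product as the inner product of the conditional means, rather than naively treating $f(\y_1)-\x$ as a constant (which is false, since $f(\y_1)$ is random). Once that step is justified and the unbiasedness assumption is applied to kill the surviving factor, the conclusion follows, with the constant identified explicitly and manifestly independent of $f$.
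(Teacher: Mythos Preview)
Your proposal is correct and follows essentially the same approach as the paper: add and subtract $\x$, expand the square, use conditional independence to factor the cross term, and apply $\E{\y_2|\x}{\y_2}=\x$ to make it vanish, leaving the supervised loss plus a constant independent of $f$. Your version is slightly more explicit in identifying the constant as $\frac{1}{n}\E{\y_2|\x}{\|\x-\y_2\|^2}$, but the argument is otherwise identical.
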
 
\begin{proof}
\begin{align*} 
&\E{\y_1,\y_2|\x}{\|f(\y_1)-\y_2\|^2} \\ 
&=\E{\y_1,\y_2|\x}{\|(f(\y_1)-\x)-(\y_2-\x)\|^2} \\ &= \E{\y_1|\x}{\|f(\y_1) - \x\|^2} - 2 \, \E{\y_1,\y_2|\x} {(f(\y_1)-\x)^{\top} (\y_2 -\x)} +  \const\\
 &= \E{\y_1|\x}{\|f(\y_1) - \x\|^2} - 2\left(\E{\y_1|\x}{f(\y_1)-\x}\right)^{\top}  (\E{\y_2|\x}{\y_2 -\x}) + \const \\
  &= \E{\y_1|\x}{\|f(\y_1) - \x\|^2} + \const
\end{align*}
where the fourth line uses the fact that $\y_1$ and $\y_2$ are conditionally independent and the last line relies uses $\E{\y_2|\x}{\y_2 -\x}=\boldsymbol{0}$.
\end{proof}
This result can be extended to any Bregman divergence beyond the $\ell_2$ norm~\cite{efron_estimation_2004}, but it does not hold for some other popular losses such as the $\ell_1$ norm. 
Intuitively, the estimator $f$ cannot overfit the noise in $\y_2$ as it observes an independent noise realization $\y_1$.
The result in~\Cref{prop: noise2noise} requires minimal assumptions on the noise (only that the target has zero-mean noise, i.e., $\E{\y_2|\x}{\y_2}=\x$), making it very appealing for real-world problems where the noise distribution is not known and is possible to obtain two independent observations of the same object.
We illustrate this with some imaging examples: 
\begin{itemize}
    \item In cryo-electron microscopy, we observe a series of very noisy images (micrographs) of the same underlying object. Bepler et al.~\cite{bepler_topaz-denoise_2020} show that we can drastically boost the SNR using a Noise2Noise approach. 
    \item In synthetic aperture radar (SAR), we observe complex images following a circularly-symmetric complex normal distribution, where the real and imaginary parts have independent noise. Dalsasso et al.~\cite{dalsasso_as_2022} show that it is possible to train a denoiser with real part as input and imaginary as target.
    \item In video denoising, the similarity between consecutive frames almost meets the Noise2Noise criterion. Ehret et al.~\cite{ehret_frame_to_frame_2019} show that a pretrained video denoising network can be fine-tuned using the Noise2Noise approach in combination with optical flow estimates to warp one frame onto another. 
\end{itemize}
The assumption of observing two independent measurements is not met in many applications. Nonetheless, we will see in the following section that (perhaps surprisingly!), if the noise distribution is known, we  can often obtain two independent noise realizations $(\y_1,\y_2)$ from a single measurement $\y$ without knowledge of the underlying image $\x$, and apply the same Noise2Noise loss using these independent pairs. 

\section{Known noise distribution}

In many applications, the noise distribution is approximately known, or it can be approximated using some calibration data.
There are two main approaches for building self-supervised losses that incorporate this knowledge: 
the first approach was pioneered by Noisier2Noise~\cite{moran_noisier2noise_2020} \JT{and Recorrupted2Recorrupted~\cite{pang_recorrupted--recorrupted_2021}, who showed} that it is possible to add synthetic noise to the observation $\y$ to generate two independent realizations $(\y_1,\y_2)$. A second approach is based on a classical result in statistics known as Stein's Unbiased Risk Estimate (SURE)~\cite{stein_estimation_1981}, which penalizes the divergence of the network $f$ to avoid overfitting the noise. 
In both cases, we require exact knowledge of the noise distribution in order to correctly approximate the supervised case. We will further see that, despite at first sight looking quite different, \JT{Recorrupted2Recorrupted} and SURE are closely related.

\subsection{Bootstrapping noisy measurements}

\noindent While the Noisier2Noise framework~\cite{moran_noisier2noise_2020} set out the original approach to bootstrapping noisy measurements, we will follow the equivalent\footnote{Noisier2Noise~\cite{moran_noisier2noise_2020} introduced the idea of adding noise to the inputs previous to Recorrupted2Recorrupted~\cite{pang_recorrupted--recorrupted_2021}, but the latter presented a simplified loss, showing conditional independence of the simulated pairs $(\y_1,\y_2)$. See Appendix~\ref{app: noisier2noise} for more details regarding the close links between these approaches.}  Recorrupted2Recorrupted~\cite{pang_recorrupted--recorrupted_2021} work as this sets the scene for further generalizations. 

Assuming a Gaussian noise model, $\y = \x + \bepsilon$ with $\bepsilon\sim \G{\boldsymbol{0}}{\bSigma}$, or equivalently that $\y \sim \G{\x}{\bSigma}$, we can resample two independent noisy realizations from the original measurement, $\y$, as
\begin{equation} \label{eq: gaussian r2r}
\left\{
\begin{array}{l}
    \y_1 = \y +  \, \sqrt{\frac{\alpha}{1-\alpha}} \bomega \\ 
    \y_2 = \y - \sqrt{\frac{1-\alpha}{\alpha}} \, \bomega
\end{array}
\right.
\end{equation}
where $\bomega\sim\G{\boldsymbol{0}}{\bSigma}$ follows the same distribution as the noise $\bepsilon$ and
$\alpha \in (0,1)$ is a positive scalar parameter. 

\JT{

\begin{proposition}[Pang et al.~\cite{pang_recorrupted--recorrupted_2021}]
The random variables $\y_1$ and $\y_2$ defined by~\Cref{eq: gaussian r2r} are independent conditional on $\x$ for any $\alpha\in(0,1)$.
\end{proposition}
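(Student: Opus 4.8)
The plan is to exploit the fact that, conditional on $\x$, both $\y_1$ and $\y_2$ are jointly Gaussian, and for jointly Gaussian vectors conditional independence is equivalent to vanishing cross-covariance. So the whole argument reduces to a single covariance computation.

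First I would rewrite the two resampled variables explicitly in terms of the two underlying independent Gaussian sources. Writing $\y = \x + \bepsilon$ with $\bepsilon \sim \G{\boldsymbol{0}}{\bSigma}$ and recalling that $\bomega \sim \G{\boldsymbol{0}}{\bSigma}$ is drawn independently of $\bepsilon$, the definitions in~\Cref{eq: gaussian r2r} become
\begin{equation*}
\y_1 = \x + \bepsilon + \sqrt{\tfrac{\alpha}{1-\alpha}}\,\bomega, \qquad
\y_2 = \x + \bepsilon - \sqrt{\tfrac{1-\alpha}{\alpha}}\,\bomega .
\end{equation*}
Conditional on $\x$, the pair $(\y_1,\y_2)$ is an affine image of the jointly Gaussian vector $(\bepsilon,\bomega)$, hence itself jointly Gaussian. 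This is the step that lets me replace the independence question by a linear-algebraic one, and I would state the standard fact (zero covariance $\Rightarrow$ independence for Gaussians) explicitly, since it is doing all the work.

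Next I would compute the conditional cross-covariance. Since $\bepsilon$ and $\bomega$ are independent and each has covariance $\bSigma$, the cross terms $\Cov{\bepsilon,\bomega}$ vanish and only the $\bepsilon$--$\bepsilon$ and $\bomega$--$\bomega$ contributions survive:
\begin{equation*}
\Cov{\y_1,\y_2 \mid \x}
= \bSigma \;-\; \sqrt{\tfrac{\alpha}{1-\alpha}}\,\sqrt{\tfrac{1-\alpha}{\alpha}}\;\bSigma
= \bSigma - \bSigma = \boldsymbol{0},
\end{equation*}
where the cancellation is exactly the reason the coefficients $\sqrt{\alpha/(1-\alpha)}$ and $\sqrt{(1-\alpha)/\alpha}$ were chosen to be reciprocals, so that their product is $1$ for every $\alpha\in(0,1)$.

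There is no real obstacle here; the only point requiring care is the appeal to Gaussianity. The conclusion of vanishing covariance is, on its own, merely a statement of uncorrelatedness, which in general does \emph{not} imply independence. What makes the argument valid is precisely that $(\y_1,\y_2)$ is \emph{jointly} Gaussian conditional on $\x$, so I would take care to establish joint Gaussianity before invoking the equivalence, rather than treating the two marginals separately. Given that, conditional independence of $\y_1$ and $\y_2$ for all $\alpha\in(0,1)$ follows immediately.
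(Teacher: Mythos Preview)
Your proof is correct and follows essentially the same approach as the paper: establish that $(\y_1,\y_2)$ is jointly Gaussian conditional on $\x$, then verify that the cross-covariance vanishes because the product of the two scaling factors equals one. If anything, you are slightly more careful than the paper in stressing that \emph{joint} Gaussianity (not merely Gaussian marginals) is what licenses the ``zero covariance $\Rightarrow$ independence'' step.
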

\begin{proof}
Let $\tau = \sqrt{\frac{\alpha}{1-\alpha}}$. Since $\y_1$ and $\y_2$ follow a Gaussian distribution conditional on $\x$, we can prove their independence by simply showing that they are not linearly correlated:
\begin{align*}
&\E{\y_1,\y_2|\x}{(\y_1 - \x)(\y_2 - \x)^{\top}} \\ &= \E{\bepsilon,\bomega}{(\bepsilon + \tau\bomega)(\bepsilon - \frac{1}{\tau}\bomega)^{\top}} \\
&=  \E{\bepsilon}{\bepsilon\bepsilon^{\top}} -\frac{1}{\tau}  \E{\bepsilon}{\bepsilon} \E{\bomega}{\bomega^{\top}} +  \tau\E{\bomega}{\bomega}\E{\bepsilon}{\bepsilon^{\top}} -   \E{\bomega}{\bomega\bomega^{\top}}\\
&= \E{\bepsilon}{\bepsilon\bepsilon^{\top}} -   \E{\bomega}{\bomega\bomega^{\top}} \\ &= \boldsymbol{0}
\end{align*}
The last line relies on the assumption that the added noise $\bomega$ has the same covariance as the measurement noise to achieve independence.
\end{proof}

}

Following the Noise2Noise approach, we can define the Recorrupted2Recorrupted loss
as
\begin{equation} \label{eq: R2R} \tag{R2R}
    \loss{R2R} = \E{\y_1,\y_2|\y}{\frac{1}{n}\|f(\y_1)-\y_2\|^2}
\end{equation}
which, due to the conditional independence of $(\y_1,\y_2)$ and $\E{\y_2|\x}{\y_2}=\x$, is an unbiased estimate of the supervised $\ell_2$ loss with $\y_1$ at the input of the network:
\JT{\begin{equation} \label{eq: unbiased r2r}
    \E{\y|\x}{\loss{R2R}} = \E{\y_1|\x}{\frac{1}{n}\|f(\y_1)-\x\|^2} + \const
\end{equation}
Note} that \Cref{eq: R2R} is an idealized loss, as it involves the expectation over the resampled realizations. However, in practice we can use a single resampled pair, $(\y_1,\y_2)$, per gradient step and the resulting stochastic gradient estimates of the loss will remain unbiased. 

The independence of $\y_1$ and $\y_2$ also comes at a price: the input to the network, $\y_1$, has lower signal-to-noise ratio (SNR) than the original measurement, $\y$, due to the additional synthetic noise. 
The parameter $\alpha \in (0,1)$ acts as a trade-off between the amount of additional noise injected into $\y_1$ and $\y_2$.  By defining the SNR of a random variable $\boldsymbol{z}$ as
$
\text{SNR}(\boldsymbol{z}) := \frac{\| \E{\boldsymbol{z}}{\boldsymbol{z}} \|^2}{\E{\boldsymbol{z}}{\|\boldsymbol{z}-\E{\boldsymbol{z}}{\boldsymbol{z}}\|^2}} 
$
we have that the SNR of the new variables is
\begin{align} \label{eq: SNR y1y2}
\text{SNR}(\y_1) &= (1-\alpha) \, \text{SNR}(\y), \\
\text{SNR}(\y_2) &= \alpha \, \text{SNR}(\y). \notag
\end{align}
Thus, a good strategy is to choose $\alpha$ small, to have as much SNR on the input $\y_1$ as possible, but not too small, to avoid targets $\y_2$ with very low SNR, which would result in noisier loss estimates. In practice we only use a finite number of resamplings of pairs $(\y_1,\y_2)$ for a fixed $\y$. A good choice in practice generally lies in the interval $\alpha\in(0.05, 0.2)$.
For more information about the optimal choice of $\alpha$, see the analysis by Oliveira et al.~\cite{oliveira_unbiased_2022}. 

At test time, we can improve the estimation by averaging over $J>1$ additions of synthetic noise, that is  
\begin{equation}
    f^{\text{test}}(\y) = \frac{1}{J}\sum_{j=1}^J f(\y_1^{(j)})
\end{equation}
where $\y_1^{(j)}\sim \G{\y}{\frac{\alpha}{1-\alpha}\, \bSigma}$ for $j=1,\dots,J$. 


\JT{This loss can be extended to non-Gaussian additive noise: if some first order moments of the noise distribution are known,} we can still use \Cref{eq: gaussian r2r} to generate pairs $(\y_1,\y_2)$ by adding synthetic noise $\bomega$ that matches these moments~\cite{monroy_generalized_2025}.

\paragraph{Extensions beyond additive noise}
In many applications, the noise affecting the measurements is not additive. For example, Poisson noise arises in photon-counting devices such as single-photon lidar~\cite{rapp_advances_2020}, and the Gamma distribution is often used to model speckle noise associated with synthetic aperture radar images~\cite{dalsasso_as_2022}. 

Gaussian, Poisson and Gamma distributions belong to a natural exponential family (NEF)~\cite{efron2022exponential}, and can be written as
\begin{equation} \label{eq: nef} \tag{NEF}
    p(\y|\x)= h(\y) \exp( \y^{\top} \eta(\x) - \phi(\x)),
\end{equation}
for some $h$, $\eta$ and $\phi$ functions which are specific to each distribution (note this forms a NEF with respect to the \emph{natural parameter}, $\eta(\x)$ and not with respect to the image, $\x$, itself).
We can generalize the \Cref{eq: R2R} loss to the NEF using the following theorem:
\begin{theorem}[Monroy et al.~\cite{monroy_generalized_2025}] 
\label{theo: NEF}
    Let $p(\y|\x)$ belong to the NEF with $\E{\y|\x}{\y}=\x$ and $\alpha\in(0,1)$. We can sample $\y_1$ and $\y_2$ as
    \begin{equation} \label{eq: sampling gr2r}
    \left\{
    \begin{array}{l}
        \y_1 \sim  \; p(\y_1|\y,\alpha), \\ 
        \y_2 =  \frac{1}{\alpha} \y -  \frac{(1-\alpha)}{\alpha}\y_1,
    \end{array}
    \right.
    \end{equation}
     such that
   $\y_1$ and $\y_2$ are independent random variables conditional on $\x$, with means $\E{\y_1|\x }{\y_1}= \E{\y_2|\x}{\y_2} = \x$ and variances $\V{\y_1 | \x}{\y_1}= \frac{1}{1-\alpha}\V{\y | \x}{\y} $ and $\V{\y_2 | \x}{\y_2}= \frac{1}{\alpha}\V{\y | \x}{\y}$, and their distributions $p_1(\y_1|\x)$ and $p_2(\y_2|\x)$ also belong to the NEF.
\end{theorem}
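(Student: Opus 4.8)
The plan is to exploit the convolution (reproductive) structure of natural exponential families, together with the classical fact that for such families the conditional law of a part given the whole is free of the parameter. Concretely, I will exhibit a \emph{forward} construction in which $\y_{1}$ and $\y_{2}$ are drawn independently from two NEFs with the stated means and variances and satisfy $(1-\alpha)\y_{1}+\alpha\y_{2}=\y$, and then show that the \emph{backward} scheme of the statement --- sample $\y_{1}\sim p(\y_{1}|\y,\alpha)$, then set $\y_{2}=\tfrac{1}{\alpha}\y-\tfrac{1-\alpha}{\alpha}\y_{1}$ --- reproduces this forward joint law exactly. Note that the linear relation in the statement is equivalent to $\y=(1-\alpha)\y_{1}+\alpha\y_{2}$, so $\y$ is the weight-$(1-\alpha,\alpha)$ combination of the two.

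First I would embed the base family $p(\y|\x)=h(\y)\exp(\y^{\top}\eta(\x)-\phi(\x))$ into an exponential dispersion family indexed by a weight $\lambda>0$,
\[
p_{\lambda}(\y|\x)=c(\y,\lambda)\exp\!\big(\lambda(\y^{\top}\eta(\x)-\phi(\x))\big),
\]
normalized so that $c(\y,1)=h(\y)$ and $p_{1}=p$. A short moment-generating-function calculation gives $K_{\lambda}(t)=\lambda\big[\psi(\eta(\x)+t/\lambda)-\psi(\eta(\x))\big]$, where $\psi$ is the log-partition function of the family (so $\phi(\x)=\psi(\eta(\x))$); differentiating at $t=\boldsymbol{0}$ shows the mean equals $\x$ for every $\lambda$ and the covariance equals $\lambda^{-1}\V{\y|\x}{\y}$. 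Setting $\y_{1}\sim p_{1-\alpha}(\cdot|\x)$ and $\y_{2}\sim p_{\alpha}(\cdot|\x)$ independently therefore already delivers the claimed means (both $\x$) and variances ($\tfrac{1}{1-\alpha}$ and $\tfrac{1}{\alpha}$ times $\V{\y|\x}{\y}$) and exhibits $p_{1},p_{2}$ as NEFs.

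Next I would establish the reproductive property by the same transform: the MGF of $(1-\alpha)\y_{1}$ is $\exp\!\big((1-\alpha)[\psi(\eta+t)-\psi(\eta)]\big)$ and that of $\alpha\y_{2}$ is $\exp\!\big(\alpha[\psi(\eta+t)-\psi(\eta)]\big)$, so by independence their product is the MGF of $p(\cdot|\x)$, whence $(1-\alpha)\y_{1}+\alpha\y_{2}\overset{d}{=}\y$. Changing variables from $(\y_{1},\y_{2})$ to $(\y_{1},\y)$ with $\y=(1-\alpha)\y_{1}+\alpha\y_{2}$, the exponential factors of $p_{1}(\y_{1}|\x)p_{2}(\y_{2}|\x)$ collapse to $\exp(\y^{\top}\eta(\x)-\phi(\x))$ and cancel against the marginal $p(\y|\x)$, leaving
\[
p(\y_{1}\mid\y,\x)=\frac{c(\y_{1},1-\alpha)\,c(\y_{2},\alpha)}{\alpha^{n}\,c(\y,1)},\qquad \y_{2}=\tfrac{1}{\alpha}\y-\tfrac{1-\alpha}{\alpha}\y_{1},
\]
which is manifestly free of $\x$; I take this as the definition of $p(\y_{1}|\y,\alpha)$. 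Reading the scheme backwards --- draw $\y\sim p(\cdot|\x)$, then $\y_{1}\sim p(\y_{1}|\y,\alpha)$, then set $\y_{2}$ deterministically --- multiplies this $\x$-free conditional by $p(\y|\x)$ and, after undoing the Jacobian, returns exactly the product law $p_{1}(\y_{1}|\x)p_{2}(\y_{2}|\x)$. This factorization is precisely the claimed conditional independence, and the means, variances and NEF membership are inherited from the forward construction above.

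The main obstacle is the very first step. The embedding $p_{\lambda}$ at the fractional weights $\lambda=1-\alpha$ and $\lambda=\alpha$ in $(0,1)$ requires the family to be divisible at these non-integer weights --- essentially infinite divisibility of $p(\y|\x)$ --- together with the existence of the convolution base measures $c(\y,\lambda)$. This holds for the Gaussian, Poisson and Gamma families that motivate the theorem, and more generally for reproductive NEFs; confirming that the $c(\y,\lambda)$ exist and compose correctly under convolution at non-integer $\lambda$ is the technical heart, whereas the cancellation of exponential factors that produces the $\x$-free conditional is then pure bookkeeping.
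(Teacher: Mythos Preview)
Your approach is correct and is precisely the exponential--dispersion/data--fission argument that underlies the cited result. The paper does not actually prove this theorem; it states it with a citation to Monroy et al.\ and records only, in a footnote, the form
\[
p(\y_1\mid \y,\alpha)=\frac{h_1(\y_1)\,h_2(\y-\y_1)}{h(\y)},
\]
with $h_1,h_2$ given as inverse Laplace transforms of the tilted cumulant function at weights $1-\alpha$ and $\alpha$. That is exactly your $c(\cdot,1-\alpha)$ and $c(\cdot,\alpha)$ (after absorbing the affine rescaling $(1-\alpha)\y_1+\alpha\y_2=\y$ into the base measures), so your derivation of the $\x$-free conditional via cancellation of the exponential tilt matches the paper's stated construction. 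Your identification of the one genuine assumption --- existence of the reproductive dispersion family at fractional weights $1-\alpha,\alpha\in(0,1)$, i.e.\ infinite divisibility of $p(\y|\x)$ --- is also the right caveat; the paper leaves this implicit by restricting attention to Gaussian, Poisson and Gamma in Table~2.1.
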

The generalized R2R loss is thus defined as
\begin{equation}\label{eq: GR2R} \tag{GR2R}
    \loss{GR2R} = \E{\y_1,\y_2|\y}{\frac{1}{n}\|f(\y_1)-\y_2\|^2} 
\end{equation}
where $\y_1$ and $\y_2$ are generated via~\Cref{eq: sampling gr2r}.
Since the synthetic pairs are independent conditional on $\x$, we can use again~\Cref{prop: noise2noise} to show that
\begin{equation*}
\E{\y|\x}{\loss{GR2R}} = \E{\y_1|\x}{\| f(\y_1) -\x \|^2} + \const
\end{equation*}
The definition\footnote{ In general, we have that
   $p(\y_1|\y,\alpha)=h_1(\y_1)h_2(\y-\y_1)/h(\y)$ where 
    \begin{align*}
        h_1(\y_1)&=\int e^{-\boldsymbol{s}^{\top}\y_1+(1-\alpha)\phi(\eta^{-1}(\frac{\boldsymbol{s}}{1-\alpha}))} d\boldsymbol{s} \\
        h_2(\y_2)&=\int e^{-\boldsymbol{s}^{\top}\y_2+\alpha\phi(\eta^{-1}(\frac{\boldsymbol{s}}{\alpha})) } d\boldsymbol{s}.
    \end{align*}} of $p(\y_1| \y, \alpha)$ for the Gaussian, Poisson and Gamma noise distributions is included in~\Cref{tab:NEF}. \JT{As in the Gaussian noise case, the SNR of $\y_1$ and $\y_2$ is given by \Cref{eq: SNR y1y2}, and $\alpha$ should be chosen to approximately lie in the $(0.05, 0.2)$ interval~\cite{monroy_generalized_2025}.}

The idea of generating synthetic pairs of independent data has also been recently explored in other statistical inference applications where it is known as \emph{data fission}~\cite{leiner2025data} and includes extensions beyond the NEF.

\begin{table*}[!t]
\footnotesize 
\centering
\resizebox{\linewidth}{!} {
\setlength\tabcolsep{0.009cm}
\begin{tabular}{c|c|c|c}
\toprule
\cellcolor{green!10}\textbf{Model}  & \cellcolor{green!10} \textbf{Gaussian} & \cellcolor{green!10} \textbf{Poisson} & \cellcolor{green!10} \textbf{Gamma} \\ 
\cellcolor{green!10}&\cellcolor{green!10} $\y \sim \mathcal{N}(\x,\bSigma)$ 
& \cellcolor{green!10} $ \boldsymbol{z}\sim \mathcal{P}(\x/\gamma) , \y = \gamma \boldsymbol{z}$ & \cellcolor{green!10} $\y\sim \mathcal{G}(\ell, \ell / \x) $
\\ \toprule
$\y_1$ & $ \y_1=\y + \sqrt{\frac{\alpha}{1-\alpha}} \boldsymbol{\omega}$, & $\y_1=\frac{\boldsymbol{y}- \gamma \boldsymbol{\omega}}{1-\alpha}  $, &  $\y_1=\y\circ(\boldsymbol{1}-\boldsymbol{\omega}) /(1-\alpha)$ \\ 
&  $\boldsymbol{\omega} \sim  \mathcal{N}(0,\bSigma)$ & $\boldsymbol{\omega} \sim \text{Bin}(\boldsymbol{z}, \alpha)$  & $\boldsymbol{\omega} \sim  \text{Beta} (\ell \alpha, \ell (1-\alpha))$
\\ \hline
 $\loss{GR2R-NLL}$ & 
 $\|\sqrt{\bSigma^{-1}}(f(\y_1)-\y_2)\|^2$ & $-\gamma \y_2^{\top}\log f(\y_1)+  \boldsymbol{1}^{\top}f(\y_1)$ &  
 $ \log f(\y_1) +  \y_2/f(\y_1)  $ 
 \\ \hline
 \begin{tabular}{c}
$\loss{SURE}$    =  \\
    $ \lim_{\alpha \to 0}  \lossarg{GR2R}{\y,\alpha}$
\end{tabular} &  \begin{tabular}{c}
    $\|f(\y)-\y\|^2+ $  \\
     $ 2\, \trace{\bSigma \der{f}{\y}(\y)} $
\end{tabular}  & 
\begin{tabular}{c}
    $\|f(\y)-\y\|^2 + $  \\
    $2 \sum_{i=1}^{n} y_i (f_i(\y) - f_i(\y-\gamma \boldsymbol{e}_i)) $
\end{tabular} &  
\begin{tabular}{c}
    $\|f(\y)-\y\|^2 + $  \\
    $2 \sum_{i=1}^{n} \sum_{k\geq 1} b(\ell, k)(-y_i)^{k+1}  \nder{f_i}{y_i}{k} (\y) $
\end{tabular}
\\ \hline
\end{tabular} } 
\caption{\textbf{Summary of generalized R2R losses.} Popular noise distributions belonging to the natural exponential family and the associated bootstrapping functions with $\alpha\in (0,1)$ and negative-log likelihood losses. 
} \label{tab:NEF}
\end{table*}

\paragraph{Beyond the $\ell_2$ loss}
We can also incorporate the knowledge about the distribution $p(\y_2|\x)$ in the choice of the metric of the loss.
The $\ell_2$ metric is well adapted for Gaussian loss, since it is proportional to the negative log-likelihood under the Gaussian model. Thus, if the noise model is not Gaussian but belongs to the NEF, we can use the negative log-likelihood of the appropriate noise model:
\begin{align*}
    \loss{GR2R-NLL} &= \E{\y_1,\y_2|\y}{ \frac{1}{n}\phi\left(f(\y_1)\right)  - \frac{1}{n}\y_2^{\top} \eta\left(f(\y_1)\right)} \\
&= \E{\y_1,\y_2|\y} {-  \frac{1}{n}\log p_2\Big(\y_2|\hat{\x}=f(\y_1)\Big) } + \const 
\end{align*}
\JT{
Due to the independence of $\y_1$ and $\y_2$ conditional on $\x$, the loss is an unbiased estimator of a supervised negative log-likelihood loss:
\begin{equation*}
    \E{\y|\x}{\loss{GR2R-NLL}} = \E{\y|\x}{-\frac{1}{n}\log p_2\Big(\x|\hat{\x}=f(\y_1)\Big)} + \const 
\end{equation*}
}
These losses (including $\ell_2$) correspond to Bregman divergences, and share the same global minima in expectation over the dataset, i.e., the posterior mean $f(\y)=\E{\x|\y}{\x}$~\cite{monroy_generalized_2025}. 
However, when dealing with finite datasets, using the $\ell_2$ or the negative log-likelihood will lead to different networks $f$.
The resulting negative log-likelihood losses of (anisotropic) Gaussian, Gamma and Poisson noise distributions are included in~\Cref{tab:NEF}.

\subsection{Stein's Unbiased Risk Estimate} \label{subsec: sure}

We now turn to a self-supervised loss that is not based on generating two independent noisy pairs. 
Let us return again to the Gaussian noise model, $\y|\x\sim \G{\x}{\bSigma}$. The following seminal result by Stein~\cite{stein_estimation_1981} shows that we can estimate the correlation between the prediction and the noise without knowledge of the ground truth: 
\begin{lemma}[Stein~\cite{stein_estimation_1981}] \label{lemma: stein}
Let $\y | \x \sim \G{\x}{\bSigma}$ and $f$ be weakly differentiable. Then, we have
\begin{equation}\label{eq: stein lemma}
\E{\y|\x} {f(\y)^{\top} (\y -\x)} = \E{\y|\x}{\trace{\der{f}{\y} \bSigma}}
\end{equation}
\JT{where $\der{f}{\y} \in \R{n\times n}$ is the Jacobian of $f$ at $\y$.}
\end{lemma}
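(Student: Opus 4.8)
The plan is to prove Stein's identity via integration by parts, exploiting the special structure of the Gaussian density. The key observation is that the Gaussian density $p(\y|\x) = (2\pi)^{-n/2}|\bSigma|^{-1/2}\exp\!\big(-\tfrac{1}{2}(\y-\x)^{\top}\bSigma^{-1}(\y-\x)\big)$ satisfies the crucial score-function relation $\nabla_{\y} \log p(\y|\x) = -\bSigma^{-1}(\y-\x)$, so that $\nabla_{\y} p(\y|\x) = -\bSigma^{-1}(\y-\x)\, p(\y|\x)$. Rearranging gives $(\y-\x)\, p(\y|\x) = -\bSigma\, \nabla_{\y} p(\y|\x)$, which converts the factor $(\y-\x)$ appearing on the left-hand side of \eqref{eq: stein lemma} into a gradient of the density. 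This is the structural fact that makes the whole argument work.

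First I would write the left-hand side explicitly as an integral,
\begin{equation*}
\E{\y|\x}{f(\y)^{\top}(\y-\x)} = \int f(\y)^{\top}(\y-\x)\, p(\y|\x)\, d\y,
\end{equation*}
and substitute $(\y-\x)\,p(\y|\x) = -\bSigma\,\nabla_{\y} p(\y|\x)$ to obtain $-\int f(\y)^{\top}\bSigma\, \nabla_{\y} p(\y|\x)\, d\y$. Since $\bSigma$ is symmetric, I would rewrite this as $-\int \big(\bSigma f(\y)\big)^{\top}\nabla_{\y} p(\y|\x)\, d\y$ and then apply integration by parts (equivalently, the divergence theorem) componentwise. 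The boundary terms vanish because the Gaussian density decays super-polynomially at infinity, provided $f$ grows at most polynomially — this is the role of the weak-differentiability/integrability hypothesis. After integration by parts the derivative transfers onto $f$, producing $\int \trace{\bSigma\,\der{f}{\y}}\, p(\y|\x)\, d\y$, which is exactly $\E{\y|\x}{\trace{\der{f}{\y}\bSigma}}$ after using the cyclic property of the trace.

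The main obstacle is not the algebra but the justification of vanishing boundary terms and the use of weak derivatives. I would need to argue that the boundary contributions $\lim_{\|\y\|\to\infty} f(\y)^{\top}\bSigma\, p(\y|\x)\,\text{(surface element)}$ vanish; under the standard regularity assumption that $\E{\y|\x}{\|f(\y)\|\,\|\y-\x\|}$ and $\E{\y|\x}{\|\der{f}{\y}\|}$ are finite, the Gaussian tail dominates any polynomial growth of $f$, so the boundary terms do indeed vanish. Because $f$ is only assumed \emph{weakly} differentiable, the integration by parts should be understood in the distributional sense, which is precisely what weak differentiability licenses: the identity $\int (\bSigma f)^{\top}\nabla p = -\int \trace{\bSigma\,\der{f}{\y}}\, p$ holds for $f$ in the appropriate Sobolev-type class against the smooth test-function-like density $p$. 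Verifying these integrability conditions rigorously is the delicate step; the rest is a direct consequence of the Gaussian score identity and integration by parts.
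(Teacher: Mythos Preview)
Your proposal is correct and follows exactly the approach the paper indicates: the paper states that ``the proof relies on integration by parts, and we leave it to the reader as an exercise,'' and your argument via the Gaussian score identity $(\y-\x)\,p(\y|\x) = -\bSigma\,\nabla_{\y} p(\y|\x)$ followed by integration by parts is precisely the intended derivation. Your discussion of the boundary terms and the role of weak differentiability is in fact more detailed than anything the paper provides.
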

The proof relies on integration by parts, and we leave it to the reader as an exercise. When the noise is isotropic, i.e., $\bSigma = \sigma^2 \Id$ \JT{with $\trace{\der{f}{\y}} = \sum_{i=1}^n \der{f_i}{y_i}$}, this shows that the correlation between the noise and prediction is simply proportional to the divergence of the estimator, $f$.

Building on this lemma, the Stein's Unbiased Risk Estimate (SURE) is given by:
\begin{equation}\label{eq:sure} \tag{SURE}
 \loss{SURE} = \frac{1}{n}\| f(\y) - \y \|^2 + \frac{2}{n}\, \trace{\der{f}{\y} \bSigma} - \frac{1}{n}\trace{\bSigma}
\end{equation}
and is an unbiased estimate of the supervised loss as
\begin{align*}
&\E{\y|\x}{\frac{1}{n}\| f(\y) - \y \|^2 + \frac{2}{n}\, \trace{\der{f}{\y} \bSigma} - \frac{1}{n}\trace{\bSigma}} \\ 
&= \E{\y|\x}{\frac{1}{n}\| \left(f(\y)-\x\right) - (\y-\x) \|^2 + \frac{2}{n}\, \trace{\der{f}{\y} \bSigma}  - \frac{1}{n}\trace{\bSigma}}  \\ 
&= \E{\y|\x}{\frac{1}{n}\|f(\y) - \x\|^2 - \frac{2}{n}\, f(\y)^{\top} (\y -\x) + \frac{2}{n}\, \trace{\der{f}{\y} \bSigma} }\\
&= \E{\y|\x}{\frac{1}{n}\|f(\y) - \x\|^2} 
\end{align*}
where the third line uses Stein's lemma \JT{and  $\E{\y|\x}{\|\y-\x\|^2}=\trace{\bSigma}$}.
SURE has been widely popular in the signal processing community well before the advent of neural networks, see e.g.,~\cite{donoho1995adapting}.
To the best of our knowledge, Metzler et al.~\cite{metzler_unsupervised_2016} were the first to use SURE for training deep networks,
and it has been widely used for learning ever since, e.g.,~\cite{zhussip_extending_2019,chen_robust_2022}.

\paragraph{Extensions beyond Gaussian noise}

The SURE loss has been extended beyond Gaussian noise. 
In the case of measurements corrupted by Poisson noise, $\y \sim \gamma \mathcal{P}(\frac{\x}{\gamma})$ with gain $\gamma>0$, we can use the following extension of Stein's lemma, introduced by Hudson~\cite{hudson_natural_1978}:
\begin{equation}\label{eq: hudson poisson}
    \E{\y|\x}{(\y-\x)^{\top}f(\y)} = \E{\y|\x}{ \sum_{i=1}^{n} y_i \left(f_i(\y) - f_i(\y - \gamma \boldsymbol{e}_i) \right)}
\end{equation}
where $\boldsymbol{e}_i$ is a canonical vector containing a one in the $i$th entry and zeros in the rest. In a similar fashion to \Cref{eq:sure}, this lemma can be used to derive the following self-supervised loss~\cite{luisier_pure_2010}:
\begin{equation}\label{eq: pure} \tag{PURE}
  \loss{PURE} = \frac{1}{n}\|f(\y)-\y\|^2  + \frac{2}{n} \sum_{i=1}^{n} y_i \Big(f_i(\y) - f_i(\y + \gamma \boldsymbol{e}_i) \Big)
\end{equation}
which is an unbiased estimator of the supervised loss when the measurements are corrupted by Poisson noise.
Le Montagner et al.~\cite{le_montagner_unbiased_2014} combined the Stein~\cref{eq: stein lemma} and Hudson~\cref{eq: hudson poisson} identities to handle mixed Poisson-Gaussian noise, i.e. $\y\sim \gamma\mathcal{P}(\frac{\x}{\gamma}) + \bepsilon$ with $\bepsilon\sim\G{\boldsymbol{0}}{\bSigma}$, which appears in various imaging applications, such as fluorescence   microscopy~\cite{zhang_poisson-gaussian_2019} or low-dose computed tomography~\cite{ding_statistical_2018}. 

Other similar extensions of SURE to continuous variables belonging to the NEF can be obtained using the following lemma:
\begin{lemma}[Hudson~\cite{hudson_natural_1978}]\label{lemma: hudson}
Let $\y \sim p(\y|\x)$ be a continuous random variable whose distribution belongs to the \Cref{eq: nef}, and let $f$ be weakly differentiable. Then
    \begin{equation}
        \E{\y|\x}{\left(\nabla\log h(\y)+\eta(\x)\right)^{\top}f(\y)} = \sum_{i=1}^{n} \E{\y|\x}{\der{f_i}{y_i}(\y)}.
    \end{equation}
\end{lemma}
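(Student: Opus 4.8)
The plan is to recognize that the bracketed vector is simply the score of the conditional density with respect to $\y$, and then to apply a Stein-type integration-by-parts argument. Concretely, since $\log p(\y|\x) = \log h(\y) + \y^{\top}\eta(\x) - \phi(\x)$ and only the first two terms depend on $\y$, differentiating gives
\begin{equation*}
\nabla_{\y} \log p(\y|\x) = \nabla \log h(\y) + \eta(\x).
\end{equation*}
This is the one place where the \Cref{eq: nef} structure is genuinely used: the natural parameter $\eta(\x)$ enters the log-density linearly, so the entire $\x$-dependence of the score collapses to the single additive term $\eta(\x)$. Hence the left-hand side of the lemma is exactly $\E{\y|\x}{\left(\nabla_{\y}\log p(\y|\x)\right)^{\top}f(\y)}$, and from here the NEF assumption can be dropped — what remains is a generic score identity.

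Next I would write the expectation as an integral over the support of $p(\cdot|\x)$ and use $p(\y|\x)\,\nabla_{\y}\log p(\y|\x) = \nabla_{\y} p(\y|\x)$, giving
\begin{equation*}
\E{\y|\x}{\left(\nabla_{\y}\log p(\y|\x)\right)^{\top}f(\y)} = \sum_{i=1}^{n} \int \der{p(\y|\x)}{y_i}\, f_i(\y)\, d\y.
\end{equation*}
Applying Fubini to isolate the inner integral over $y_i$ and integrating by parts in that variable transfers the derivative from $p$ onto $f_i$, producing the divergence $\sum_i \der{f_i}{y_i}$ integrated against $p$, i.e.\ $\sum_i \E{\y|\x}{\der{f_i}{y_i}(\y)}$, plus a boundary contribution. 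The weak-differentiability hypothesis on $f$ is precisely what legitimizes this step.

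The main obstacle is controlling the boundary terms $\big[\,p(\y|\x)\,f_i(\y)\,\big]$ at the edges of the support, and here the geometry of the support matters. When the support is all of $\R{n}$, as for the Gaussian, the rapid decay of $p$ kills the boundary terms provided $f$ does not grow too fast. For NEFs supported on a half-line or orthant — the Gamma case being the relevant one — one must additionally verify that $p(\y|\x)\,f_i(\y)\to 0$ as $y_i \to 0^{+}$, which imposes a joint condition on the shape parameter (so that the density vanishes at the origin) and on the growth of $f$. Tracking the orientation of these boundaries is also what fixes the sign of the final identity, so I would pin down the decay and integrability assumptions carefully before asserting that the boundary terms vanish. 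As a sanity check I would specialize to the Gaussian, where $\nabla\log h(\y) + \eta(\x) = \bSigma^{-1}(\x - \y)$, and verify that the mechanism reproduces Stein's lemma (\Cref{eq: stein lemma}), taking care that the sign emerging from the integration by parts is tracked correctly.
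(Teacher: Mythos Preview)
The paper does not give its own proof of this lemma (it simply cites \cite{hudson_natural_1978}), and for the Gaussian special case, \Cref{lemma: stein}, it only remarks that the argument ``relies on integration by parts'' and leaves it to the reader; your proposal is precisely that argument carried out in the general NEF setting, first identifying $\nabla\log h(\y)+\eta(\x)$ as the score $\nabla_{\y}\log p(\y|\x)$ and then integrating by parts coordinatewise. The approach is correct and is exactly what the paper implicitly expects.
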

\noindent Stein's lemma is recovered for the special case of (isotropic) Gaussian noise with $\eta(\x)=-\frac{\x}{\sigma^2}$ and $h(\y)=\exp(\frac{\|\y\|^2}{2\sigma^2})$. 

Eldar~\cite{eldar_generalized_2009} used this result to build the generalized SURE (GSURE) loss. In the notation of \Cref{eq: nef}, the loss can be written as:
\begin{equation} \label{eq: gsure} \tag{GSURE}
\loss{GSURE} = \frac{1}{n}\| f(\y) + \nabla \log h(\y) \|^2 + \frac{2}{n} \sum_{i=1}^n \der{f_i}{y_i}(\y)
\end{equation}
which, similar to the Gaussian case, we can use Lemma~\ref{lemma: hudson} to show that 
\begin{align}
  \E{\y|\x}{\loss{GSURE}}  =  \E{\y|\x}{ \frac{1}{n}\| f(\y) - \eta(\x)\|^2} + \const
\end{align} 
Note, however, this is not suitable for deriving a loss equivalent to those presented in this chapter, as GSURE 
targets the natural parameter, $\eta(\x)$, instead of $\x$. The optimal $f$ is thus the estimator $f(\y)=\E{\x|\y}{\eta(\x)}$ which is not equal to the posterior mean, except for the Gaussian noise case where $\eta(\x) \propto \x$.


\paragraph{Approximating the divergence term}
For complex models such as neural networks, there are generally no analytic solutions for the divergence term in the SURE loss and it is common practice to resort to Monte Carlo estimates. A first option is to compute the divergence term in the SURE loss using automatic differentiation~\cite{soltanayev2020divergence}, 
together with Hutchinson's unbiased trace estimator~\cite{hutchinson1989stochastic}:
\begin{equation} \label{eq:hutchinson}
      \trace{ \bSigma \der{f}{\y}} \approx \bomega^{\top}\der{f}{\y}\bomega
\end{equation}
where $\bomega\sim\G{\boldsymbol{0}}{\bSigma}$ is a random standard Gaussian, and $\der{f}{\y}\bomega$ is computed using as a Jacobian-vector product via automatic differentiation.

Alternatively, one can use a finite difference approximation to the same estimator via a simple Monte Carlo approximation~\cite{ramani_monte-carlo_2008}
\begin{equation} \label{eq:ramani}
      \trace{ \bSigma \der{f}{\y}} \approx (\bSigma \frac{ \bomega }{\tau})^{\top} \left(f(\y + \tau\bomega) -  f(\y)\right)
\end{equation}
where $\bomega\sim\G{\boldsymbol{0}}{\Id}$ and small $\tau>0$. The approximation becomes exact if we take the expectation over $\bomega$ and let $\tau \to 0$. This approximation avoids the need for automatic differentiation but at the cost of requiring two evaluations of $f$ per calculation. In practice, $\tau$ can be chosen to be a small fraction, for example $1\%$, of the standard deviation of the noise.

Although unbiased, both these estimators can potentially have high variance which can be reduced by averaging over multiple samples of $\bomega$. However, for imaging applications, as argued in~\cite{ramani_monte-carlo_2008}, it is usually reasonable to assume that a single Monte Carlo sample will provide a sufficiently low variance estimate. Intuitively, this is because denoising functions tend to act locally and we are therefore spatially averaging over a large number, $n \ll 1$ of almost independent estimates per pixel.



In the case of Poisson noise, Luisier et al.~\cite{luisier_pure_2010} proposed the following approximation of the Poisson variant \Cref{eq: pure}:
\begin{align}  \label{eq: pure approx} 
      \sum_{i=1}^{n} y_i \left(f_i(\y) - f_i(\y - \gamma \boldsymbol{e}_i) \right)&\approx   \trace{\gamma\diag{\y} \der{f}{\y}}
      \\ &\approx (\gamma\diag{\y}\frac{\bomega}{\tau})^{\top} (f(\y + \tau \bomega) - f(\y)) \notag
\end{align}
The approximation relies on the assumption that the Poisson noise is approximately Gaussian with a signal dependent covariance, that is  $\bSigma\approx \diag{\y}$ when $\gamma$ is large, 
and, it is not well suited for low $\gamma$ settings.

\paragraph{Equivalence with Recorrupted2Recorrupted}
The attentive reader might have noticed that the synthetic noise in R2R~\cref{eq: gaussian r2r} and the Monte Carlo approximation of SURE~\cref{eq:ramani} are relatively similar.
It turns out that \cref{eq: R2R} can be seen as another Monte Carlo approximation of the analytic \Cref{eq:sure} as $\alpha\to0$. This observation was first made by Oliveira et al.~\cite{oliveira_unbiased_2022} for the Gaussian noise case, then extended to the (discrete) Poisson case~\cite{oliveira_unbiased_2023}, and finally extended to the continuous natural exponential family in the following proposition:
\begin{proposition}[Monroy et al.~\cite{monroy_generalized_2025}] \label{prop: r2r sure}
Assume that $f$ is analytic, $p(\y|\x)$ is a continuous distribution belonging to the NEF, and that
$a_k:\mathbb{R}\mapsto\mathbb{R}$ as 
$$
a_k(y_i) = \lim_{\alpha\to 0} \E{y_{2,i}|y_i,\alpha}{(y_{2,i}-y_i)(\alpha y_{2,i})^{k}}
$$ 
for all $i=1,\dots,n$ verifies $|a_k(y_i)|<\infty$ for all positive integers $k\geq 1$.  Then,
 \begin{multline*}
  \lim_{\alpha\to 0} \mathcal{L}_{\text{GR2R}}\left(\y, f, \alpha\right) =  \\
\frac{1}{n}\| f(\y) - \y\|^2  +  \frac{2}{n} \sum_{i=1}^n \sum_{k\geq 1} (-1)^{k+1} a_k(y_i) \frac{1}{k!}\frac{\partial^k f_i}{\partial y_i^k}(\y)  + \const
 \end{multline*}
 where the resulting SURE-like loss is an unbiased estimator of the supervised loss with input $\y$ instead of $\y_1$, that is
 $$
   \E{\y|\x}{\lim_{\alpha\to 0} \mathcal{L}_{\text{GR2R}}\left(\y, f, \alpha\right)} = \E{\y|\x}{\frac{1}{n}\|f(\y)-\x\|^2}.
 $$
\end{proposition}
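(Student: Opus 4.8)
The plan is to prove the statement in two parts: (i) compute the pointwise-in-$\y$ limit of the loss to obtain the SURE-like expression, and (ii) verify that this limit is unbiased. For (i), the starting observation is that, conditional on $\y$, the only randomness in $\mathcal{L}_{\text{GR2R}}(\y,f,\alpha)=\E{\y_1,\y_2|\y}{\frac1n\|f(\y_1)-\y_2\|^2}$ is $\y_1\sim p(\y_1|\y,\alpha)$, since $\y_2$ is then determined by \Cref{eq: sampling gr2r}; that same relation gives the deterministic identity $y_{1,i}-y_i=-\frac{\alpha}{1-\alpha}(y_{2,i}-y_i)$. Combining this with the variance formula of \Cref{theo: NEF} shows that $\V{\y_1|\y}{\y_1}=O(\alpha)$, so $\y_1$ concentrates at $\y$ as $\alpha\to0$; since $f$ is analytic, I can then Taylor-expand $f(\y_1)$ about $\y$ in powers of $\y_1-\y$.

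Next I would expand $\|f(\y_1)-\y_2\|^2=\|f(\y_1)\|^2-2f(\y_1)^\top\y_2+\|\y_2\|^2$, pushing the $f$-independent pieces $\|\y_2\|^2$ and $\|\y\|^2$ into $\const$; the term $\E{\y_1|\y}{\|f(\y_1)\|^2}$ tends to $\|f(\y)\|^2$, and together with the leading part of the cross term this assembles $\frac1n\|f(\y)-\y\|^2$. The heart of the argument is the cross term $-2\E{\y_1|\y}{f(\y_1)^\top\y_2}$: inserting the Taylor series, using $y_{1,i}-y_i=-\frac{\alpha}{1-\alpha}(y_{2,i}-y_i)$, and keeping the diagonal derivatives produces a series whose $k$-th coefficient is governed by the conditional moment $\alpha^{k}\E{\y_1|\y}{(y_{2,i}-y_i)^{k+1}}$. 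I would identify its limit with $a_k(y_i)$ by writing $\alpha y_{2,i}=\alpha(y_{2,i}-y_i)+\alpha y_i$ and noting that on the dominant large-deviation events of $y_{2,i}$ the deterministic shift $\alpha y_i$ is negligible, so $\lim_{\alpha\to0}\alpha^{k}\E{\y_1|\y}{(y_{2,i}-y_i)^{k+1}}=\lim_{\alpha\to0}\E{y_{2,i}|y_i,\alpha}{(y_{2,i}-y_i)(\alpha y_{2,i})^{k}}=a_k(y_i)$. The off-diagonal Taylor terms drop out because, conditional on $\y$, the components of $\y_1$ are independent with $\E{\y_1|\y}{y_{1,j}-y_j}=O(\alpha)$ and $\V{\y_1|\y}{y_{1,j}}=O(\alpha)$, so any derivative in a direction $j\neq i$ carries a spare factor $O(\alpha)$; and the $k=0$ term vanishes because $\E{\y_1|\y}{\y_2}=\y$, so the series begins at $k=1$, matching the statement.

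For (ii) I would not recompute the expectation from scratch but pass to the limit in an identity already available. Because $(\y_1,\y_2)$ are conditionally independent given $\x$ with $\E{\y_2|\x}{\y_2}=\x$ (\Cref{theo: NEF}), \Cref{prop: noise2noise} gives, for every $\alpha\in(0,1)$, $\E{\y|\x}{\mathcal{L}_{\text{GR2R}}(\y,f,\alpha)}=\E{\y_1|\x}{\frac1n\|f(\y_1)-\x\|^2}+\const$. As $\alpha\to0$ the law $p_1(\y_1|\x)$ converges to $p(\y|\x)$ — both lie in the NEF with mean $\x$ and, by \Cref{theo: NEF}, with variances agreeing in the limit — so the right-hand side tends to $\E{\y|\x}{\frac1n\|f(\y)-\x\|^2}+\const$. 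Equating this with the expectation of the pointwise limit from part (i) yields the stated unbiasedness; alternatively one can verify the SURE-like series is unbiased directly by applying Hudson's identity (\Cref{lemma: hudson}) term by term.

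The main obstacle is the rigorous interchange of the three limiting operations in the cross term — the $\alpha\to0$ limit, the infinite Taylor sum, and the conditional expectation. This is delicate precisely because $\y_2$ has conditional variance diverging like $1/\alpha$, so each summand pairs a small factor $(y_{1,i}-y_i)^{k}$ against the large factor $y_{2,i}$, and only the precise NEF moment scaling makes the product converge. The hypothesis $|a_k(y_i)|<\infty$ is exactly what guarantees that each term has a finite limit, while analyticity of $f$ together with quantitative bounds on the conditional moments of $\y_2$ (dominated-convergence estimates) are what is needed to control the tail of the series uniformly in $\alpha$ and justify exchanging summation with the limit and expectation.
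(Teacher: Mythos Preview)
The paper does not include its own proof of this proposition; it is stated with attribution to Monroy et al.\ and the text immediately moves on to discuss the Gaussian and Poisson special cases. There is therefore no in-paper argument to compare your proposal against.

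That said, your sketch is the natural route and captures the right mechanism. A couple of points to tighten:
\begin{itemize}
    \item You invoke \Cref{theo: NEF} to conclude $\V{\y_1|\y}{\y_1}=O(\alpha)$, but that theorem only gives variances conditional on $\x$, not on $\y$. The $O(\alpha)$ concentration of $\y_1$ around $\y$ is a property of the fission kernel $p(\y_1|\y,\alpha)$ itself (as visible in \Cref{tab:NEF}), and you should appeal to that directly rather than to \Cref{theo: NEF}.
    \item Your claim that $\E{\y_1|\y}{\y_1}=\y$ (equivalently $\E{\y_2|\y}{\y_2}=\y$), which you use to kill the $k=0$ term, is correct for the NEF fission construction but is not stated anywhere in the paper; it follows from the explicit form of $p(\y_1|\y,\alpha)$ and deserves a one-line justification.
    \item The identification $\lim_{\alpha\to0}\alpha^{k}\E{}{(y_{2,i}-y_i)^{k+1}}=a_k(y_i)$ via ``the deterministic shift $\alpha y_i$ is negligible'' is the right intuition, but in a full proof this is exactly where the finiteness hypothesis on $a_k$ is doing work; you have correctly flagged this as the main analytic obstacle.
\end{itemize}
Your part (ii), passing to the limit in the already-established unbiasedness identity for $\mathcal{L}_{\text{GR2R}}$, is clean and is the efficient way to get the second claim.
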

Interestingly, in the isotropic Gaussian case we have $a_1(y_i)=\sigma^2$ and $a_k(y_i)=0$ for $k\geq 2$, recovering the standard SURE formula. \JT{In the Poisson noise case~\cite{oliveira_unbiased_2023}, the R2R loss recovers~\Cref{eq: pure} as $\alpha\to0$, without relying on the continuous approximation in~\Cref{eq: pure approx}. Unlike \Cref{eq: gsure} that requires a single divergence term but learns the conditional estimator $\E{\x|\y}{\eta(\x)}$, \Cref{prop: r2r sure} shows that SURE-like formulas exist for learning the conditional mean $\E{\x|\y}{\x}$, albeit they often require computing higher-order derivatives of $f$.} 


\paragraph{Connection with Tweedie's formula}

The second term in~\cref{eq:sure} verifies the following equality 
$$
\E{\y}{\trace{\der{f}{\y} \bSigma}} = - \E{\y}{f(\y)^{\top} \bSigma \score }
$$
which can be shown again using integration by parts.
Using this result, we can find the optimal denoiser under a SURE loss by solving
\begin{align*}
  f^*  &= \argmin_{f} \;   \E{\y}{\loss{SURE}} \\ 
   &= \argmin_{f}   \; \E{\y}{\frac{1}{n}\| f(\y) - \y \|^2 - \frac{2}{n}\, f(\y)^{\top} \bSigma \score }  \\
   &=\argmin_{f} \;    \E{\y}{\frac{1}{n}\| f(\y) - \y - \bSigma \score  \|^2} 
\end{align*}
where the last step completes squares, and $\score$ is known as \emph{the score} of the measurement distribution. The optimal solution is thus the well-known Tweedie's formula:
\begin{equation}\label{eq: tweedie} \tag{Tweedie}
    f^*(\y)  = \y + \bSigma \, \score
\end{equation}
\JT{Since SURE is an unbiased estimator of the supervised $\ell_2$ loss, whose global minimizer is the conditional mean estimator $ f^*(\y)=\E{\x|\y}{\x}$, we have that the optimal least-squares estimator in the Gaussian noise case is given by $\E{\x|\y}{\x}=\y + \bSigma \, \score$.}

We can further combine \Cref{eq:sure} and \Cref{eq: tweedie} to compute the minimum mean squared error (MMSE) of the denoising problem, 
as a function of the score and the noise covariance:
\begin{align} \label{eq: mmse tweedie}
     \text{MMSE} &=  \E{\y}{\mathcal{L}_{\text{SURE}}(\y, f^*)} \\
     &=  \frac{1}{n}\trace{\bSigma}- \E{\y}{\frac{1}{n}\|\bSigma\score\|^2}
\end{align}

Inspired by the \JT{close} link between SURE and Tweedie's formula, the Noise2Score approach~\cite{kim_noise2score_2021,kim_noise_2022} proposes to first train a network $s(\y)$ that approximates the score, i.e., $s(\y) \approx \score$, and then applies~\cref{eq: tweedie} at test time $f(\y) = \y + \bSigma s(\y)$. 
 
Tweedie's formula also plays a significant role in diffusion models, as it allows one to evaluate the score function indirectly via a denoiser network $f(\y)$. \Cref{sec: learning from noisy py} discusses self-supervised diffusion methods that leverage this connection.

\section{Partially unknown noise distribution}
In many real-world settings, we do not have independent pairs $(\y_1,\y_2)$, and thus cannot use \Cref{eq: noise2noise}, nor do we know exactly the noise distribution and thus cannot use \Cref{eq: R2R} or \Cref{eq:sure} losses. Under certain circumstances we will see that we can still build self-supervised losses, by paying a price on the flexibility of the learned denoiser: we can only expect to minimize a \emph{constrained} supervised loss~\cite{tachella_unsure_2025}, that is
\begin{equation}\label{eq:unknown noise}
    \argmin_{f} \E{\x,\y}{\| f(\y) - \x \|^2} \; \text{ s.t. } f\in\mathcal{F}
\end{equation}
where $\mathcal{F}$ is a constrained set of functions.
Thus, we generally are not able to approximate the oracle estimator, i.e., the conditional mean $f^{*}(\y)=\E{\x|\y}{\x}$ if it does not belong to $\mathcal{F}$. However, as we will see, in some cases the constraints can be relatively mild, letting us find an $f$ performing close to the oracle. As illustrated in~\Cref{fig: constraints f},
the less we know about the distribution, the stronger the constraints needed, and we get further away from the supervised performance. For example, we might know the noise is isotropic and Gaussian, but not the noise level $\sigma^2$, or more extreme, we might not know the distribution at all, except for an assumption of independence across pixels.

\begin{figure}[t]
    \centering
    \includegraphics[width=1\textwidth,alt={A straight line illustrating the noise assumptions (below the line) and the constraints on the learned estimator (above the line, in red) of different self-supervised losses, from SURE and R2R on the left which rely on a fully known noise model, to Noise2Self and BlindSpot Nets on the right which only rely on separable noise distributions.}]{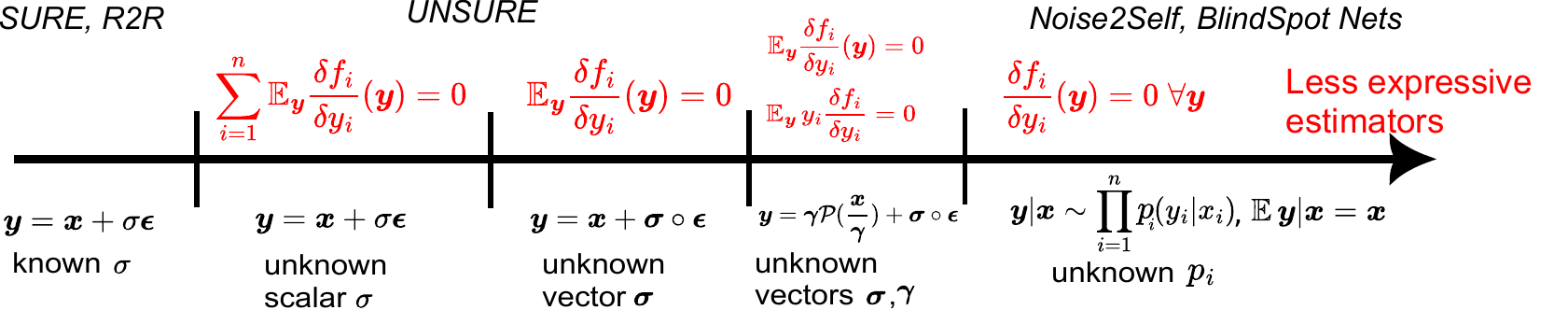}
    \caption{\textbf{The expressivity-robustness trade-off in self-supervised denoising~\cite{tachella_unsure_2025}.} As the assumptions about the noise distribution are relaxed, the learned estimator needs to be less expressive to avoid over-fitting the noise.}
    \label{fig: constraints f}
\end{figure}

\subsection{Unknown noise level Stein's Unbiased Risk Estimate}

In some applications, the noise model can be assumed to be Gaussian 
$ \y = \x + \sigma\bepsilon$ 
with $\bepsilon\sim \G{\boldsymbol{0}}{\Id}$ but the noise level $\sigma$ is unknown. 
A naive approach could be to estimate $\sigma$ from the noisy data first, and then train using \Cref{eq:sure} or \Cref{eq: R2R}. As illustrated in 
\Cref{fig: MNIST denoising}, both losses are very sensitive to a misspecified $\sigma$, as errors of more than 10\%  can result in a significant decrease of performance.

\begin{figure}[h]
    \centering
    \includegraphics[width=.8\textwidth,alt={A plot showing PSNR (y-axis) vs noise level $\sigma$ (x-axis) for different training losses, including supervised (dotted blue line), R2R assuming $\sigma=0.2$ (pink line), SURE assuming $\sigma=0.2$ (gray line), and UNSURE (orange). The plot shows how UNSURE is slightly below supervised for all noise levels, whereas SURE and R2R perform well at $\sigma=0.2$ and completely fail for other noise levels.}]{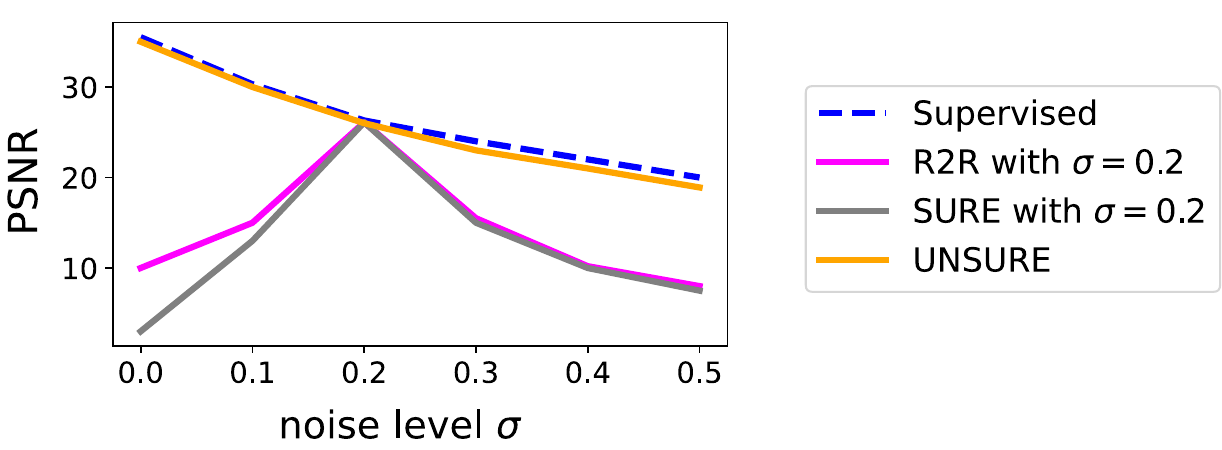}
    \caption{\textbf{Self-supervised denoising across noise levels~\cite{tachella_unsure_2025}.}  Comparison of supervised, \Cref{eq:sure}, \Cref{eq: R2R} and \Cref{eq: unsure} losses on 
    an MNIST Gaussian denoising task with a U-Net denoiser. If the noise level $\sigma$ is correctly specified in SURE and R2R, 
    the performance is close to the supervised case. However, if the noise level is misspecified, the performance drops significantly.
    The UNSURE loss is robust to noise level misspecification, and performs close to the supervised case.}
    \label{fig: MNIST denoising}
\end{figure}

We can instead build a self-supervised loss that ensures that the
divergence term in \Cref{eq:sure} is zero, and thus drop the dependency of the loss on the unknown noise level $\sigma$:
\begin{align} \label{eq: unsure constraints}
\argmin_f  \; \E{\y}{\frac{1}{n}\|f(\y)-\y\|^2} \;  \text{ s.t. } \E{\y}{\sum_{i=1}^n \der{f_i}{y_i}(\y)} = 0
\end{align}
Applying Lemma~\ref{lemma: stein}, we can show that the minimization problem is equivalent to a supervised setting with constraints:
\begin{align*} 
\argmin_{f}   \; \E{\x,\y}{\frac{1}{n}\|f(\y)-\x\|^2} \; \text{ s.t. } f\in \mathcal{F}
\end{align*}
where we only look for denoisers that have zero-expected divergence (ZED), i.e., which belong to the constrained set 
$$\mathcal{F} = \left\{f : \E{\y}{\sum_{i=1}^n \der{f_i}{y_i}(\y)} = 0 \right\}.$$
Using a Lagrange multiplier $\eta \in \R{}$, we can rewrite the constrained learning problem in~\cref{eq: unsure} as  
\begin{align} \label{eq: unsure} \tag{UNSURE}
   \min_f \max_{\eta} \, & \E{\y}{\frac{1}{n}\|f(\y)-\y\|^2  +\frac{2\eta}{n} \, \sum_{i=1}^n \der{f_i}{y_i}(\y)}.
\end{align}
Interestingly, the resulting loss is very similar to~\Cref{eq:sure}, but 
instead of having a fixed noise level $\sigma$, we replace it by a multiplier $\eta$ and maximize over it.

\paragraph{Analyzing the constrained estimator}
What is the cost of adding the zero-expected divergence constraint on the learned denoiser?
It is easy to show that the optimal denoiser has a positive divergence~\cite{gribonval_should_2011}, except for the trivial case
where the image distribution consists of a single image. However, we will see that the constraint is quite mild, and the gap with supervised learning
can be small.

Following a similar derivation as in~\cref{eq: tweedie}, we can obtain the optimal solution for the constrained learning problem again in terms of the score function, 
which can be written as $$f^{\text{ZED}}(\y)=\y + \frac{n\score}{\Es{\y}\|\score\|^2}.$$
We can also compute the expected error if the zero-expected divergence estimator by simply plugging-in its definition inside \cref{eq:sure}:
\begin{align} \label{eq: UNSURE gap}
     \E{\y,\x}{\frac{1}{n}\|f^{\text{ZED}}(\y)-\x\|^2} &= \E{\y}{\mathcal{L}_{\text{SURE}}\Big(\y, f^{\text{ZED}}(\y))}  \\ \notag
     &= \frac{n}{\Es{\y}\|\score\|^2} - \sigma^2 \\\notag
     &= \text{MMSE} \left(1- \frac{\text{MMSE}}{\sigma^2}\right)^{-1}\\\notag
     &= \text{MMSE} + \sigma^2 \sum_{j\geq 2} \left(\frac{\text{MMSE}}{\sigma^2}\right)^j
\end{align}
where the third line uses the expression of the MMSE in~\cref{eq: mmse tweedie} (i.e., the error of the optimal estimator $f(\y)=\E{\x|\y}{\x}$) and the last line relies on the geometrical series formula.
Since $\frac{\text{MMSE}}{\sigma^2}$ is the improvement in signal-to-noise ratio of the optimal estimator, we always have that $\frac{\text{MMSE}}{\sigma^2}<1$ and generally $\frac{\text{MMSE}}{\sigma^2}\ll 1$.
Thus, the additional error of $f^{\text{ZED}}$ compared to the oracle can be quite small.

\paragraph{Extensions beyond isotropic Gaussian noise}
The \Cref{eq: unsure} approach can be further extended to settings where the noise covariance is unknown, 
by considering an $s$-dimensional set of \emph{plausible} covariance matrices
$$
\mathcal{R} = \left\{ \bSigma_{\boldeta} \in \R{n\times n} : \bSigma_{\boldeta} = \sum_{j=1}^s \eta_j \bPsi_{j} , \boldeta \in \mathbb{R}^{s} \right\}
$$ 
for some \JT{fixed} basis matrices $\{\bPsi_j \in \mathbb{R}^{n\times n}\}_{j=1}^{s}$, with the hope that the true covariance belongs to this set, that is $\bSigma \in \mathcal{R} $.
In this case, we consider $s\geq 1$ constraints, and minimize the following objective
\begin{align} \label{eq: divfree general problem}
\argmin_f  & \; \E{\y}{\frac{1}{n}\|f(\y)-\y\|^2}  \\ \text{ s.t. }& \E{\y}{\trace{\bPsi_j \der{f}{\y}(\y)}}= 0, \; j=1,\dots,s 
\end{align}
Note that \Cref{eq: unsure} is recovered as a special case with $s=1$ and $\bPsi_1 = \Id$. 
The less we know about the covariance, the larger the set $\mathcal{R}$, resulting in more constraints 
on the learned estimator. Thus, the dimension $s\geq 1$ offers a trade-off between optimality of the resulting estimator and robustness to a misspecified covariance.
As with the isotropic case, we can find the closed-form expression of the optimal constrained denoiser: 

\begin{theorem}[Tachella et al. \cite{tachella_unsure_2025}] \label{theo: general solution}
  Let $p_{\y}=p_{\x} * \mathcal{N}(\boldsymbol{0}, \bSigma)$ and assume that $\{\bPsi_j \in \mathbb{R}^{n\times n}\}_{j=1}^{s}$ are linearly independent. The optimal solution of problem \cref{eq: divfree general problem} is given by 
\begin{equation}
   f(\y) =\y+ {\bSigma}_{\hat{\boldeta}} \, \nabla \log p_{\y}(\y) 
\end{equation}
where the optimal multipliers are given by
$\hat{\boldeta} = \Q^{-1} \boldsymbol{v}$,
with $$Q_{i,j}=\trace{\bPsi_i \, \E{\y}{\score \score^{\top}}\bPsi_j^{\top} }$$ and $v_i = \trace{\bPsi_i}$ for $i,j=1,\dots,s$.
\end{theorem}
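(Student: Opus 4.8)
The plan is to recognise \cref{eq: divfree general problem} as a convex problem in $f$ and to solve it by Lagrangian duality, mirroring the derivation that led to \cref{eq: tweedie}. Since the $\bPsi_j$ form a basis for a set of covariance matrices they may be taken symmetric, so $\bSigma_{\boldeta}=\sum_{j=1}^s\eta_j\bPsi_j$ is symmetric. Introducing multipliers $\boldeta=(\eta_1,\dots,\eta_s)$, I would form
\begin{equation*}
\mathcal{L}(f,\boldeta)=\E{\y}{\frac{1}{n}\|f(\y)-\y\|^2}+\frac{2}{n}\sum_{j=1}^s\eta_j\,\E{\y}{\trace{\bPsi_j\der{f}{\y}(\y)}}.
\end{equation*}
The first key step is to linearise the divergence constraints. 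Exactly as in the passage preceding \cref{eq: tweedie}, integration by parts against the smooth, everywhere-positive density $p_{\y}=p_{\x}*\mathcal{N}(\boldsymbol{0},\bSigma)$ yields, for any fixed matrix $\bPsi_j$,
\begin{equation*}
\E{\y}{\trace{\bPsi_j\der{f}{\y}(\y)}}=-\,\E{\y}{f(\y)^{\top}\bPsi_j\,\score}.
\end{equation*}
Each constraint is therefore a linear functional of $f$; together with the convex quadratic objective this makes the whole problem convex, so the stationary point found below is the global minimiser.

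For the second step, with fixed $\boldeta$ I would minimise $\mathcal{L}$ over $f$ pointwise in $\y$. Substituting the identity turns the Lagrangian into $\frac{1}{n}\E{\y}{\|f(\y)-\y\|^2-2\,f(\y)^{\top}\bSigma_{\boldeta}\score}$, and completing the square in $f$ gives the minimiser
\begin{equation*}
f_{\boldeta}^{*}(\y)=\y+\bSigma_{\boldeta}\,\score,
\end{equation*}
which already has the advertised form, with $\bSigma_{\hat{\boldeta}}$ still to be determined.

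The third step is to enforce primal feasibility by inserting $f_{\boldeta}^{*}$ back into the $s$ constraints, which the identity rewrites as $\E{\y}{f(\y)^{\top}\bPsi_j\score}=0$. The linear-in-$\score$ part gives $\E{\y}{\y^{\top}\bPsi_j\score}$, which I would evaluate by applying the same integration-by-parts identity to the identity map $f(\y)=\y$, yielding $\E{\y}{\y^{\top}\bPsi_j\score}=-\trace{\bPsi_j}=-v_j$. The quadratic part is $\sum_{i}\eta_i\,\E{\y}{\score^{\top}\bPsi_i\bPsi_j\score}$, and by the cyclic invariance of the trace and symmetry of $\E{\y}{\score\score^{\top}}$ one checks $\E{\y}{\score^{\top}\bPsi_i\bPsi_j\score}=\trace{\bPsi_i\E{\y}{\score\score^{\top}}\bPsi_j^{\top}}=Q_{i,j}$. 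The $s$ feasibility equations then collapse to the linear system $\Q\boldeta=\boldsymbol{v}$, so $\hat{\boldeta}=\Q^{-1}\boldsymbol{v}$ and the proof is complete.

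The step I expect to be the main obstacle is justifying that $\Q$ is invertible, which is precisely where the linear-independence hypothesis on $\{\bPsi_j\}$ is needed: since $Q_{i,j}=\langle\bPsi_i\score,\bPsi_j\score\rangle_{L^2(p_{\y})}$ is the Gram matrix of the functions $\y\mapsto\bPsi_j\score$, its invertibility demands that these functions be linearly independent in $L^2(p_{\y})$, which follows from linear independence of the $\bPsi_j$ together with a non-degeneracy (full second moment) property of the score. A secondary technical point, which I would record as a regularity assumption on the admissible denoisers, is the vanishing of the boundary terms in the integration by parts; this holds for the Gaussian-smoothed $p_{\y}$ under mild growth conditions on $f$.
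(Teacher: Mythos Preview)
Your argument is correct and mirrors exactly the route the paper takes for the special case $s=1$ (the derivation leading to \cref{eq: tweedie} and the analysis of the zero-expected-divergence estimator): rewrite the divergence constraints via the integration-by-parts identity, minimise the resulting quadratic-plus-linear Lagrangian pointwise in $\y$ to obtain the Tweedie-type form $f_{\boldeta}^{*}(\y)=\y+\bSigma_{\boldeta}\score$, and then solve the linear system that primal feasibility imposes on $\boldeta$. The paper itself does not give a proof for the general $s\geq 1$ statement, deferring to the cited reference, so there is nothing further to contrast; your handling of the invertibility of $\Q$ (Gram matrix of the maps $\y\mapsto\bPsi_j\score$, positive definite once $\E{\y}{\score\score^{\top}}\succ 0$) is the right justification and is the only point that deserves a little more care than you currently give it.
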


 
We can apply this generalization to problems with correlated noise and unknown correlations, which is generally modeled as
$$
\y = \x + \boldsymbol{\sigma} * \bepsilon
$$
where $*$ denotes the convolution operator, $\boldsymbol{\sigma}\in \mathbb{R}^{p}$ is vector-valued and  $\bepsilon\sim\mathcal{N}(\boldsymbol{0},\Id)$. If we do not know the exact noise correlation, we can consider the set of covariances with correlations up to $\pm r$ taps/pixels\footnote{Here we consider 1-dimensional signals for simplicity but the result extends trivially to the 2-dimensional case.}, 
we can choose $\bSigma_{\boldeta}$ to be a positive definite circulant matrix parameterized by a filter $\boldeta$. In this case, the solution is
$$
    f(\y) = \y + \hat{\boldeta} * \score 
$$
with optimal multipliers \JT{given by $2r+1$ autocorrelation coefficients of the score~\cite{tachella_unsure_2025}.}

We can also extend the UNSURE approach to non-Gaussian noise distributions using Lemma~\ref{lemma: hudson}, such as Poisson-Gaussian noise with unknown parameters~\cite{tachella_unsure_2025}.

\subsection{Cross-validation methods}\label{sec: cross-validation}

In many applications, the noise is separable across pixels/measurements $p(\y|\x)=\prod_{i=1}^n p_i(y_i|x_i)$, but the noise distribution $p_i$ at each pixel is unknown except for the assumption that the mean is $\E{y_i|x_i}{x_i}=x_i$. 
In such settings, \JT{none of the losses we presented so far are applicable, but 
we can still find a loss that is an unbiased estimator of a constrained supervised loss. Since we do not know the noise distribution, we need to impose \emph{stronger constraints} than the zero expected divergence one in~\Cref{eq: unsure} which relies on a Gaussian noise assumption.

Many recent self-supervised methods, including Noise2Void~\cite{krull_noise2void_2019}, Noise2Self~\cite{batson_noise2self_2019}, blind spot networks~\cite{laine_high-quality_2019}, Neighbor2Neighbor~\cite{huang_neighbor2neighbor_2021}, can be broadly classified as \emph{cross-validation} approaches~\cite{efron_estimation_2004}, that minimize the following objective~\cite{tachella_unsure_2025}:}
\begin{align}\label{eq:cross} \tag{CV}
    \argmin_{f} &\quad  \E{\y}{\frac{1}{n}\| f(\y) - \y \|^2} \\ \notag &\text{ s.t. }    \der{f_i}{y_i}(\y)  = 0,  \; \forall \y \in \R{n}, \; i=1,\dots,n
\end{align}
where the derivative constraints are equivalent to asking that the $i$th output $f_i$ doesn't depend on the $i$th input $y_i$.

\JT{
\begin{proposition}[Adapted from Batson and Royer~\cite{batson_noise2self_2019}] \label{prop: cross val}
Let $f:\R{n}\to\R{n}$ be a function whose $i$th output does not depend on the $i$th entry, or equivalently, that $\der{f_i}{y_i}(\y)  = 0$  for all $\y \in \R{n}$ and $i=1,\dots,n$, and assume further that $p(\y|\x)=\prod_{i=1}^n p_i(y_i|x_i)$ and $\E{y_i|x_i}{y_i} = x_i$ for all $i=1,\dots,n$.
Then,
\begin{equation}
    \E{\y|\x}{\| f(\y) - \y \|^2} = \E{\y|\x}{\| f(\y) - \x \|^2} + \const
\end{equation}
\end{proposition}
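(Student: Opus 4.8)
The plan is to mimic the algebraic expansion used in the proof of \Cref{prop: noise2noise}, but with the role of the independent target now played by the measurement $\y$ itself, exploiting the blind-spot structure of $f$ to recover the needed independence. First I would insert $\pm\x$ inside the self-supervised loss and expand the square:
\begin{equation*}
\E{\y|\x}{\| f(\y) - \y \|^2} = \E{\y|\x}{\| f(\y) - \x \|^2} - 2\,\E{\y|\x}{(f(\y)-\x)^{\top}(\y-\x)} + \E{\y|\x}{\|\y-\x\|^2}.
\end{equation*}
The final term is the total noise variance, which is independent of $f$ and therefore supplies the constant.

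Everything then reduces to showing that the middle cross term vanishes. I would write it as a sum over coordinates, $\E{\y|\x}{(f(\y)-\x)^{\top}(\y-\x)} = \sum_{i=1}^n \E{\y|\x}{(f_i(\y)-x_i)(y_i-x_i)}$, and argue each summand is zero. The crucial observation is that the blind-spot constraint $\der{f_i}{y_i}(\y)=0$ forces $f_i(\y)$ to depend only on the coordinates $\y_{-i}$ other than the $i$th; combined with the separability $p(\y|\x)=\prod_i p_i(y_i|x_i)$, this renders $f_i(\y)$ conditionally independent of $y_i$ given $\x$.

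I would then split the expectation via the tower property, conditioning on $\y_{-i}$ first, so that $f_i(\y)-x_i$ is a constant inside the inner expectation over $y_i$:
\begin{equation*}
\E{\y|\x}{(f_i(\y)-x_i)(y_i-x_i)} = \E{\y_{-i}|\x}{(f_i(\y)-x_i)\,\E{y_i|x_i}{y_i-x_i}}.
\end{equation*}
The inner factor is zero by the assumption $\E{y_i|x_i}{y_i}=x_i$, so every summand vanishes and the cross term is identically zero, yielding the claim with constant $\E{\y|\x}{\|\y-\x\|^2}$.

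The main obstacle, though a conceptual rather than a computational one, is to be careful that both hypotheses are genuinely needed and correctly combined: the derivative condition alone only tells us $f_i$ ignores $y_i$, whereas it is the separability of the noise that upgrades this into statistical independence of $f_i(\y)$ from $y_i$ conditional on $\x$, which is precisely what makes the inner expectation factor out. Without separability the coordinates could be coupled through correlated noise, $\E{y_i|\y_{-i},\x}{\cdot}$ would not reduce to $\E{y_i|x_i}{\cdot}$, and the cross term would not vanish.
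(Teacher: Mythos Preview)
Your proposal is correct and follows essentially the same route as the paper: expand $\|f(\y)-\y\|^2$ by inserting $\pm\x$, identify the noise-variance term as the constant, and kill the cross term coordinate-by-coordinate using the blind-spot structure of $f$ together with the separability and zero-mean assumptions. Your added remark on why both hypotheses are required is a nice pedagogical touch not spelled out in the paper.
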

\begin{proof}
\begin{align*}
&\E{\y|\x}{\| f(\y) - \y \|^2} \\ 
&= \E{\y|\x}{\|f(\y) - \x\|^2} + 2 \, \sum_{i=1}^n\E{\y|\x} {f_i(\y)^{\top} (y_i - x_i)}+\const  \\
&= \E{\y|\x}{\|f(\y) - \x\|^2} + 2 \, \sum_{i=1}^n\E{\y_{-i}|\x}{f_i(\y)}^{\top} \E{y_i|x_i}{y_i - x_i} + \const \\
&= \E{\y|\x}{\|f(\y) - \x\|^2} + \const
\end{align*}
where $\y_{-i}$ denotes the a vector with all entries of $\y$ except for the $i$th entry, the third line uses the fact that $f_i(\y)$ and $y_i$ are independent conditional on $\x$, and the last line uses  $\E{y_i|x_i}{y_i}=x_i$.
\end{proof}
}

Due to \Cref{prop: cross val}, the minimization problem in \Cref{eq:cross} is equivalent to the following constrained supervised problem
\begin{align*} 
\argmin_{f}   \; \E{\x,\y}{\frac{1}{n}\|f(\y)-\x\|^2}  \; \text{ s.t. } f \in \mathcal{F}
\end{align*}
where we only look for denoisers which do not use the $i$th input value for predicting the $i$th output value, that is $$\mathcal{F} = \left\{f :  \der{f_i}{y_i}(\y)  = 0,  \; \forall \y \in \R{n}, \; i=1,\dots,n \right\}.$$

\JT{As with \cref{eq: unsure}, the conditional mean $f^{*}(\y)=\E{\x|\y}{\x}$ does not belong to the constrained set $\mathcal{F}$, and we cannot expect to achieve the same performance as supervised learning. In this case, the optimal solution of~\Cref{eq:cross} is $f_i^{\text{CV}}(\y)=\E{x_i|\y_{-i}}{x_i}$ for $i=1,\dots,n$. Here, unlike the UNSURE case where the suboptimality gap is available closed form in \Cref{eq: UNSURE gap}, the gap between the $f^{*}$ and $f^{\text{CV}}$ does not admit a simple closed-form expression and will be highly dependent on the signal distribution: we expect the gap to be smaller in signals exhibiting strong spatial correlations, and larger in sparse signals~\cite{batson_noise2self_2019}.
}

Two main approches have been proposed for enforcing the zero derivatives constraints: (1) blind spot networks, which use a specific architecture that enforce the constraint, and (2) splitting losses, which enforce it during training.

\paragraph{Blind-spot networks}
Laine et al.~\cite{laine_high-quality_2019} proposed an image-to-image network architecture $f$ which only relies on the neighbors of a pixel (and not the pixel itself) to predict its denoised value, which is equivalent to imposing $\der{f_i}{y_i}(\y)  = 0$ for all pixels $i=1,\dots,n$. This blind-spot network relies on a fully convolutional architecture which combines shifted (upwards, downwards, leftwards and rightwards) receptive fields.
This approach is very efficient since it allows one to train the denoiser $f$ directly on the measurement consistency loss $\| f(\y) - \y \|^2$, but imposes strong architectural constraints on $f$ above and beyond the zero gradient constraint. 

\paragraph{Splitting methods}
A second approach to training networks that do not rely on the central pixel for denoising is to randomly mask this pixel out during the training procedure~\cite{krull_noise2void_2019,batson_noise2self_2019}. This approach can be written as the following loss
\begin{equation}\tag{SPLIT} \label{eq: noise2self}
    \loss{SPLIT} = \E{\m}{\frac{1}{n}\| (\boldsymbol{1} - \m)\circ \left(f(\m \circ \y) - \y\right)\|^2}
\end{equation}
where $\circ$ denotes the elementwise (Hadamard) product and $\m\in \{0,1\}^{n}$ are random binary masks. 
There is a large literature regarding the choice of splitting distribution $p(\m)$, which generally depends on the structure of the data (e.g., images, videos, etc.).
Some of the most popular choices are:

\begin{figure}[t]
    \centering
    \includegraphics[width=1\textwidth,alt={Three plots illustrating the choice of input (in blue) and target pixels (in red) for the Noise2Void, Noise2Self and Neighbor2Neighbor losses. Noise2Void inputs all pixels of a patch except for the central pixel, which is used as target. Noise2Self uses a random non-overlapping split of pixels, whereas Neighbor2Neighbor chooses two pixels per $2\times 2$ neighborhood, one as input and the other as target.}]{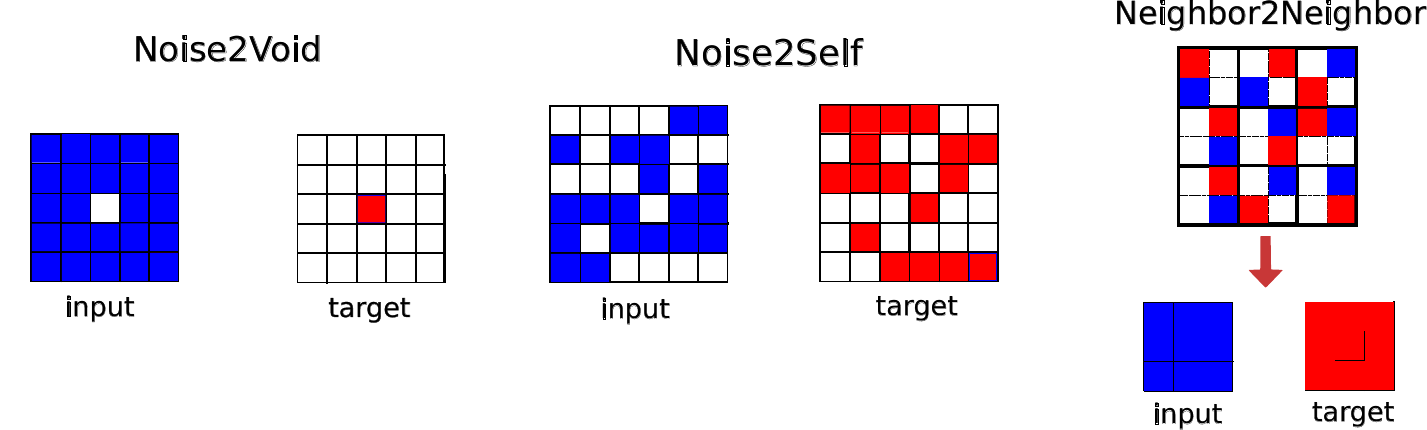}
    \caption{\JT{\textbf{Pixel splitting strategies.} Noise2Void and Noise2Self zero-fill or copy neighboring values to the pixels removed from the input image, whereas Neighbor2Neighbor splits using two random subsamplings of every $2\times 2$ neighborhood, one as input and the other as the target.}}
    \label{fig: splitting strategies}
\end{figure}
\begin{enumerate}   
    \item Noise2Void~\cite{krull_noise2void_2019} replaces a central input pixel with a random neighboring pixel, and computes the loss only on the pixels that have been replaced.
    \item Neighbor2Neighbor~\cite{huang_neighbor2neighbor_2021} constructs two non-overlapping subsampled versions of an image by randomly
    choosing a pixel from every $2 \times 2$ patch for the input and another pixel from the same patch for the target. This loss implicitly assumes that 
    the images are scale-invariant, and thus the denoiser can be trained on undersampled images and tested at full resolution.
    \item Noise2Self~\cite{batson_noise2self_2019} partitions an image using $J$ disjoint masks, $\m^{(j)}$, such that $\frac{1}{J}\sum_{j=1}^J b_{i}^{(j)} = 1$ for $i = 1, \dots, n$, and then trains an estimator, $f$, constructed in the following manner:
 \begin{equation*}   
    f(\y) = \frac{1}{J}\sum_{j=1}^J \m^{(j)} \circ  g\Big((1-\m^{(j)}) \circ \y + \m^{(j)} \circ \ubold\Big)
\end{equation*}
    where $g$ is a based neural network and $\ubold$ is an i.i.d. uniformly distributed random vector\footnote{Although not theoretically justified,~\cite{batson_noise2self_2019} also observed good performance by simply using the trained based neural network, $g$, as the final estimator.}.
\end{enumerate}
\Cref{fig: splitting strategies} illustrates the different masking approaches. It is also possible to use a more general collection of $J$ random masks, $\m^{(j)}$, as long as they cover the entire image (that is $\sum_{s=1}^J (1-b_i^{(s)}) > 0$ for all pixels $i=1,\dots,n$). Then, in a similar fashion to \Cref{eq: R2R}, at test time we can average over multiple splittings~\cite{millard_theoretical_2023}:
\begin{equation}
    f^{\text{test}}(\y) = \sum_{j=1}^J \boldsymbol{w}^{(j)} \circ f(\m^{(j)}\circ \y)
\end{equation}
with weights
$$
w_i^{(j)} = \frac{1-b_i^{(j)}}{\sum_{s=1}^J (1-b_i^{(s)})} \; \text{ for } \; i=1,\dots,n
$$


In the next chapter, we will present an extension of this idea to general inverse problems, where the forward operator is non-trivial $\A(\x)\neq \x$, and splitting strategies are developed in an operator-specific way.

\section{Learning generative models from noisy data} \label{sec: learning from noisy py}
We have seen that approximating the posterior mean is possible with a self-supervised loss, as long as the noise distribution is known. We can also ask whether other posterior statistics beyond the mean can be approximated, or more generally, if we can estimate the signal distribution, $p_{\x}$, from measurement data alone. 
\JT{
Since we have that
\begin{equation} \label{eq: p_x inv prob}
   p_{\y}(\y) = \int_{\x} p(\y|\x) p_{\x}(\x) d\x
\end{equation}
this problem can be seen as a linear inverse problem in the space of measures, which can be written as $p_{\y} = \mathcal{A}(p_{\x})$, where $\mathcal{A}$ is associated to the integral with a kernel $p(\y|\x)$. This formulation can be traced back to Robbins work~\cite{robbins_empirical_1964} in empirical Bayes estimators.

Here} we consider the simplest setup: a denoising problem with additive noise, i.e., $\y = \x + \bepsilon$. Since the noise is additive, we have that the measurement distribution is a convolved version of the signal distribution, i.e.,
\begin{equation}
    p_{\y} = p_{\x} * p_{\bepsilon}
\end{equation}
and thus model identification can be seen as a deconvolution problem. The Fourier analog of this problem is
\begin{equation}\label{eq: char}
    \phi_{\y}(\bomega) = \phi_{\x}(\bomega) \phi_{\bepsilon}(\bomega)
\end{equation}
where $\phi_{\y}(\bomega)=\E{\y}{\exp(\mathrm{i}\y^{\top}\bomega)}$, $\phi_{\x}(\bomega)=\E{\x}{\exp(\mathrm{i}\x^{\top}\bomega)}$ and $\phi_{\bepsilon}(\bomega)=\E{\bepsilon}{\exp(\mathrm{i}\bepsilon^{\top}\bomega)}$ 
are the characteristic functions of $p_{\x}$, $p_{\y}$ and $p_{\bepsilon}$ and $\bomega\in\R{n}$. By simple inspection of \cref{eq: char}, we can deduce that, if the noise model is known and hence $\phi_{\bepsilon}(\bomega)$ is known, it is possible to identify the signal distribution as long as $\phi_{\bepsilon}(\bomega)\neq 0$ for all $\bomega\in\R{n}$:
\begin{proposition}[Tachella et al.~\cite{tachella_sensing_2023}] \label{prop:noise}
If the characteristic function of the noise distribution $\phi_{\bepsilon}$ is nowhere zero, then there is a one-to-one mapping between the spaces of clean measurement distributions and noisy measurement distributions. 
\end{proposition}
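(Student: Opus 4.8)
The plan is to exploit the Fourier relation in~\eqref{eq: char} and reduce the claim to the injectivity of pointwise multiplication by a nowhere-vanishing function, combined with the uniqueness theorem for characteristic functions. I would fix the noise distribution $p_{\bepsilon}$ and regard the forward map $p_{\x} \mapsto p_{\y} = p_{\x} * p_{\bepsilon}$ as the object whose invertibility we wish to establish, so that ``clean measurement distribution'' here means the signal distribution $p_{\x}$ (since $\A=\Id$) and ``noisy measurement distribution'' means $p_{\y}$.

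To prove the map is one-to-one, I would suppose that two signal distributions $p_{\x}^{(1)}$ and $p_{\x}^{(2)}$ yield the same noisy distribution $p_{\y}$. Passing to characteristic functions via~\eqref{eq: char} gives $\phi_{\x}^{(1)}(\bomega)\,\phi_{\bepsilon}(\bomega) = \phi_{\x}^{(2)}(\bomega)\,\phi_{\bepsilon}(\bomega)$ for all $\bomega \in \R{n}$. Since by hypothesis $\phi_{\bepsilon}(\bomega) \neq 0$ for every $\bomega$, I can divide pointwise to conclude $\phi_{\x}^{(1)} = \phi_{\x}^{(2)}$ everywhere. Invoking the uniqueness theorem for characteristic functions (equivalently, the Fourier inversion theorem for finite measures on $\R{n}$), equality of characteristic functions forces $p_{\x}^{(1)} = p_{\x}^{(2)}$, establishing injectivity. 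For the reverse direction, surjectivity onto the space of noisy distributions is automatic once that space is defined as the image of the forward map, and the inverse is given explicitly in the Fourier domain by $\phi_{\x}(\bomega) = \phi_{\y}(\bomega)/\phi_{\bepsilon}(\bomega)$, which is well defined precisely because $\phi_{\bepsilon}$ never vanishes.

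The step I would treat most carefully — and the main subtlety — is ensuring that the recovered $\phi_{\y}/\phi_{\bepsilon}$ genuinely corresponds to a probability distribution rather than merely to some integrable function. This is guaranteed here because $p_{\y}$ is assumed to arise from convolving some valid $p_{\x}$ with $p_{\bepsilon}$, so $\phi_{\y}/\phi_{\bepsilon}$ is exactly the characteristic function of that underlying $p_{\x}$; the non-vanishing condition is what rules out two distinct signal distributions being ``blurred'' to the same measurement law. I would note that the condition is genuinely necessary: if $\phi_{\bepsilon}$ vanished on some set, one could add to $\phi_{\x}$ any perturbation supported there without changing $\phi_{\y}$, breaking injectivity, which motivates why the hypothesis is stated as a nowhere-zero characteristic function.
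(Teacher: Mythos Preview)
Your argument is correct and is precisely the approach the paper takes: it derives the proposition ``by simple inspection of~\eqref{eq: char}'', i.e., the identity $\phi_{\y}=\phi_{\x}\phi_{\bepsilon}$ together with the nowhere-vanishing of $\phi_{\bepsilon}$ and uniqueness of characteristic functions. You have simply written out in full the injectivity step the paper leaves implicit.
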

For example, the Gaussian noise distribution has a nowhere zero characteristic function, and we can thus uniquely identify the signal distribution from noisy measurements\footnote{Note that this says nothing about the sample complexity of this problem, i.e., how hard this would be from finite data.}. We now present some recent methods aiming to learn a generative model from noisy data alone.

\paragraph{Variational autoencoders} 
Assuming that the noise model $p(\y|\x)$ is known (or estimated in a calibration step), we can model the distribution of measurement data as 
\begin{equation} 
\left\{
\begin{array}{l}
    \z \sim \mathcal{N}(\boldsymbol{0},\Id ), \;
    \hat{\x} = f(\z)\\
    \hat{\y} \sim p(\y|\x=\hat{\x}) \\ 
\end{array}
\right.
\end{equation}
where $f$ is a deep network, and $\z$ are latent variables that follow a standard Gaussian distribution. Prakash et al.~\cite{prakash_fully_2020,prakash_interpretable_2021} propose to learn such a generative model using a variational autoencoder, which requires learning an additional encoder network to approximate the distribution $p(\z|\y)$. Once both encoder, $p(\z|\y)$, and decoder, $f$, are learned, at inference we can either generate clean samples from $p_{\x}$ as 
$
\hat{\x}^{(i)}=f(\z^{(i)})
$
for $i=1,\dots,N$ where $ \z^{(i)} \sim \mathcal{N}(\boldsymbol{0},\Id )$, 
or generate clean samples from the posterior distribution by first sampling the latent variables from the encoder, $\z^{(i)} \sim p(\z|\y)$ and then passing these through the decoder network. In the next chapter, we will present  adversarial methods~\cite{bora_ambientgan_2018} for learning a similar generative model in the case of incomplete measurements.

\paragraph{Diffusion methods} 

\Cref{eq: tweedie} shows that optimal Gaussian denoisers as a function of noise level provide access to the score function of the noisy signal distribution, $\score$. Diffusion models leverage this using neural network approximations to the score to build stochastic samplers that can approximately sample from the clean signal distribution $p(\x)$ or approximately sample from the conditional posterior $p(\x|\y)$ density given a noisy instance, $\y$~\cite{daras_survey_2024}.

As we have seen in this chapter, we can learn the denoisers in a self-supervised manner via \cref{eq:sure} or \Cref{eq: R2R} using noisy data alone. 
However, this only gives us access to an approximation for the score at a noise level greater than or equal to the observed data, $\sigma\geq \sigma_n$. Some recent approaches stop the diffusion at this noise level~\cite{kawar2023gsure}, while others attempt to go below this by imposing a consistency constraint on the learned denoiser~\cite{daras_consistent_2024}.




\section{Towards general inverse problems} \label{sec: general probs}
What happens if we want to extend the losses in this chapter beyond a simple denoising problem, where $\A:\R{n}\mapsto\R{m}$ is a general forward operator?
Defining a denoising function as $\A\circ f$,
we can use any of the losses above in the measurement space. For example, if we observe measurements with Gaussian noise $\y\sim \G{\A(\x)}{\sigma^2\Id}$ where
the noise level $\sigma$ is known, we can adapt \cref{eq:sure} as 
$$
\loss{SURE} = \frac{1}{n}\|\y - \A\circ f(\y)\|^2 + \frac{2}{n}\,\trace{\bSigma\, \der{\A\circ f}{\y}(\y)}
$$
so that we have the equivalent to the following measurement supervised loss
$$
\E{\y}{\loss{SURE}} = \E{\x,\y}{\frac{1}{n}\|\A(\x) - \A\circ f(\y)\|^2} + \const
$$
and importantly the minimizer is $f^{*}(\y)=\E{\x|\y}{\x}$ if $\A$ is a one-to-one mapping\footnote{\JT{In this case, the self-supervised loss will share the same minimizer as the supervised loss, i.e. satisfying~\cref{eq: same minimum}.}}. However, if $\A$ is not one-to-one, for example if there are more pixels $n$ than measurements $m$, then $f^{*}(\y)\neq \E{\x|\y}{\x}$, and we cannot expect to learn the same solution as in the supervised setting, even if we have a dataset with infinitely many measurement vectors. The next chapter will present some solutions to overcome this limitation.

\section{Summary}

In this chapter, we have seen how to build self-supervised losses that can handle noisy data without requiring access to clean targets. The choice of the loss is dependent on how much knowledge we have about the noise distribution: the more we know, the closer we can expect to get to the performance of fully supervised learning, the
less we know, the more constraints we need to impose on the learned denoiser and the further we get from the supervised performance.  Nonetheless, we have seen that in many cases, the gap with supervised learning can be small, and self-supervised losses can be used to train denoisers that perform well in practice. 
\JT{\Cref{tab: summary chap2} shows a summary of the different loss families covered in this chapter, highlighting the different noise assumptions of each loss.}

\begin{table}[h] \label{tab: summary chap2}
\centering
\begin{tabular}{|l|l|l|l|}
\hline
\textbf{Loss family} & \textbf{Noise assumption} & \begin{tabular}[c]{@{}l@{}} \textbf{Learns optimal} \\ $f^{*}(\y)=\E{\x|\y}{\x}$\textbf{?}\end{tabular} & \textbf{Refs.} \\ \hline
\Cref{eq: noise2noise} & 
\begin{tabular}[c]{@{}l@{}} Two independent \\ noise realizations $(\y_1,\y_2)$ \end{tabular}
& Yes & \cite{lehtinen_noise2noise_2018}  \\ \hline
\begin{tabular}[c]{@{}l@{}}\Cref{eq: R2R}\\ \cref{eq: GR2R}\end{tabular} & 
\begin{tabular}[c]{@{}l@{}}
Natural exponential family \\
or additive noise \\known parameters \end{tabular} & Yes  & \begin{tabular}[c]{@{}l@{}} \cite{pang_recorrupted--recorrupted_2021,moran_noisier2noise_2020} \\ \cite{oliveira_unbiased_2022,monroy_generalized_2025} \end{tabular}  \\ \hline
\Cref{eq:sure} & \begin{tabular}[c]{@{}l@{}} Natural exponential family \\ and Poisson-Gaussian \\ known parameters \end{tabular}  & Yes  & \begin{tabular}[c]{@{}l@{}}  \cite{stein_estimation_1981,hudson_natural_1978} \\ \cite{eldar_generalized_2009,metzler_unsupervised_2016} \\ \cite{zhussip_extending_2019,chen_robust_2022} \end{tabular}  \\ \hline
\Cref{eq: unsure} & \begin{tabular}[c]{@{}l@{}} Natural exponential family \\ and Poisson-Gaussian \\  unknown parameters \\  \end{tabular}  &  \begin{tabular}[c]{@{}l@{}}  No, but small \\ gap see~\Cref{eq: UNSURE gap}  \end{tabular}  & \cite{tachella_unsure_2025} \\ \hline 
\Cref{eq:cross} & \begin{tabular}[c]{@{}l@{}} $p(\y|\x)=\prod_{i=1}^{n} p_i(y_i|x_i)$ \end{tabular} &  \begin{tabular}[c]{@{}l@{}} No, gap depends \\ on spatial corr. \end{tabular}  & \begin{tabular}[c]{@{}l@{}} \cite{huang_neighbor2neighbor_2021,batson_noise2self_2019} \\ \cite{krull_noise2void_2019,laine_high-quality_2019} \\ \cite{wang_blind2unblind_2022} \end{tabular} \\ \hline
\end{tabular}
\caption{\textbf{Summary of the self-supervised losses covered in this chapter.} The natural exponential family includes many popular noise distributions, such as Gaussian, Poisson and Gamma. All losses assume that the noise verifies $\E{\y|\x}{\y} = \x$ (or $\E{\y_1|\x}{\y_1} = \x$ for Noise2Noise).}
\end{table}

\chapter{Learning from incomplete measurements} \label{chap: multioperators}



The self-supervised losses presented in the previous chapter can handle various types of noise model and are applicable to any one-to-one forward operator.
However, what happens when the operator is many-to-one, i.e., non-invertible?
It is easy to show that, even in the simple case of noiseless measurements and linear operator $\A(\x)=\A \x$, we do not have any information in the nullspace of $\A$, the linear subspace $\{\x\in\R{n} : \A\x=\boldsymbol{0}\}$, and thus we cannot learn the reconstruction function in this part of the space:
\begin{proposition}[Chen et al.~\cite{chen_equivariant_2021}] \label{prop:simple}
 Any reconstruction function $f:\mathbb{R}^{m}\mapsto\mathbb{R}^{n}$ of the form
\begin{equation}
\label{eq:unlearnable}
    f(\y) = \A^{\dagger}\y + \JT{(\Id - \A^{\dagger}\A) }\, v(\y)
\end{equation}
where $\A^{\dagger}$ is the pseudo-inverse of $\A$ and $v:\R{n}\to\R{n}$ is any function, verifies measurement consistency $\A f(\y)=\y$, and thus is a global minimizer of $\E{\y}{\frac{1}{m}\|\A f(\y)-\A\x\|^2}$.
\end{proposition}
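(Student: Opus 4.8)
The plan is to verify the two claims by direct computation, relying only on the defining identities of the Moore--Penrose pseudo-inverse. The key algebraic facts I will use are that $\A\A^{\dagger}\A = \A$, that $\A\A^{\dagger}$ is the orthogonal projector onto the range of $\A$, and that $\Id - \A^{\dagger}\A$ projects onto the nullspace of $\A$.

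First I would substitute the given form of $f$ into $\A f(\y)$ and expand, obtaining
\[
\A f(\y) = \A\A^{\dagger}\y + \A(\Id - \A^{\dagger}\A)v(\y) = \A\A^{\dagger}\y + (\A - \A\A^{\dagger}\A)v(\y).
\]
Applying the pseudo-inverse identity $\A\A^{\dagger}\A = \A$ makes the second term vanish identically, so that $\A f(\y) = \A\A^{\dagger}\y$ regardless of the arbitrary function $v$. This already captures the essential point: any component placed in the nullspace of $\A$ is invisible to the forward operator and hence entirely unconstrained by the measurements.

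Next I would invoke the noiseless linear model $\y = \A\x$, which guarantees that $\y$ lies in the range of $\A$. Since $\A\A^{\dagger}$ acts as the identity on $\text{range}(\A)$, it follows that $\A\A^{\dagger}\y = \y$, and therefore $\A f(\y) = \y$, establishing measurement consistency. For the minimization claim I would then note that combining $\A f(\y) = \y$ with $\y = \A\x$ yields $\A f(\y) - \A\x = \boldsymbol{0}$ pointwise, so the integrand $\tfrac{1}{m}\|\A f(\y) - \A\x\|^2$ is identically zero; being nonnegative, its expectation attains its global minimum value of zero for every admissible $v$.

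There is no genuine obstacle here, as the argument is a routine verification. The only points demanding care are to state explicitly that membership $\y \in \text{range}(\A)$ is a consequence of the noiseless forward model (so that $\A\A^{\dagger}\y = \y$ rather than merely a projection of $\y$), and to emphasize that the arbitrary $v$ precisely parameterizes the unlearnable nullspace component, which is exactly what motivates the self-supervised strategies developed in the remainder of the chapter.
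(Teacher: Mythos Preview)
Your proof is correct and is precisely the direct verification the paper has in mind; the paper itself states this proposition without proof, introducing it with ``It is easy to show that\ldots'' in the noiseless linear setting. Your explicit appeal to the noiseless model $\y = \A\x$ to ensure $\y \in \text{range}(\A)$ (so that $\A\A^{\dagger}\y = \y$) is the one point that must be made precise, and you handle it correctly.
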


In order to tackle this problem we therefore need additional information. There are two main ways to overcome this limitation: the first one\JT{, covered in~\Cref{sec: multioperators},} is to train with measurements associated to different forward operators $\{\A_g\}_{g=1}^G$, each with possibly a different nullspace. The second option\JT{, covered in~\Cref{sec: equivariance},} is to assume some invariance of the set of images to a group of transformations $\{\T_g\}_{g=1}^G$, which we will show that is equivalent to observing measurements with the set of operators $\{\A \circ \T_g\}_{g=1}^G$.

\section{Leveraging multiple operators}\label{sec: multioperators}


We assume that measurements are obtained via the following model
\begin{equation} \label{eq: multop model}
\left\{
\begin{array}{l}
    \x \sim p(\x), \;
    \A \sim p(\A) \\
    \y \sim p(\y|\A\x) \\ 
\end{array}
\right.
\end{equation}
where we  consider only \emph{linear forward operators} and assume that a different operator $\A$ is sampled for every measurement $\y$. 
In practice, the distribution of operators $p(\A)$ is generally discrete and finite. Some practical examples are (see~\Cref{fig: multiple operators}):
\begin{itemize}
    \item In accelerated MRI applications~\cite{yaman_self-supervised_2020}, the acceleration mask might change from scan to scan, where $\A=\diag{\m}\F$ where $\F$ is the discrete Fourier transform and the mask is randomly sampled as $\m \sim p(\m)$.
    \item In image inpainting problems, $\A=\diag{\m}$, the missing pixels often vary from image to image~\cite{moran_noisier2noise_2020,bora_ambientgan_2018}, resulting in a set of different masks.
\end{itemize}

\begin{figure}[t]
\centering
\includegraphics[width=1\textwidth,alt={An illustration of image inpainting and accelerated magnetic resonance imaging problems with varying forward operators. On the left, three cat images are shown, each partially obscured by a different black mask, labeled $\A_1\x_1$, $\A_2\x_2$, and $\A_3\x_3$. On the right, three k-space MRI measurements are shown, each with a different part of the Fourier space being masked out.}]{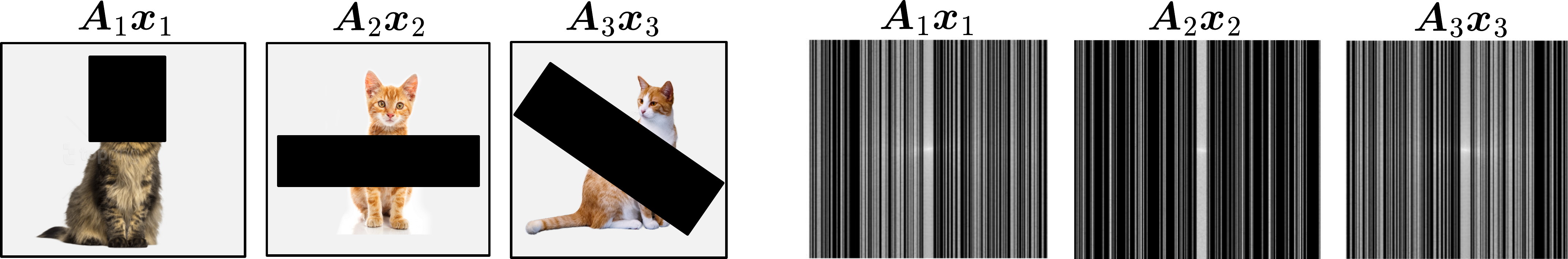}
\caption{\JT{\textbf{Learning with multiple forward operators.} In some imaging settings, such as image inpainting (left) and accelerated MRI (right), the masking operator changes across samples in the dataset, offering different views of the signal distribution.} }
\label{fig: multiple operators}
\end{figure}

Information from multiple operators can help us overcome the limitation of \Cref{prop:simple}. Imagine that there are a maximum set of $G$ forward operators to draw from, $p(\A)=\frac{1}{G}\sum_{g=1}^G \delta_{\A_g}$, and the signal distribution is composed of a single signal, $p(\x)=\delta_{\x_0}$. In this trivial case, we would have access to $G$ different measurement vectors, $\y_g = \A_g \x_0$ for $g=1,\dots,G$, all measuring the same underlying signal $\x_0$, each via one out of the $G$ different forward operators $\A_g \in \R{m \times n}$. We could then try to recover \JT{$\x_0$} by solving the following system of equations: 
\begin{equation} \label{eq: oracle multi-measurement}
   \begin{bmatrix}
    \y_{1} \\
    \vdots \\
    \y_{G}
    \end{bmatrix} =  \begin{bmatrix}
    \A_{1} \\
    \vdots \\
    \A_{G}
    \end{bmatrix} \x_0
\end{equation}
which has $mG$ equations and $n$ unknowns. As this is the maximum number of measurements that we can obtain, to be able to identify $\x_0$ from the measurement data, it is necessary that the system in \Cref{eq: oracle multi-measurement} has maximal rank, or equivalently that $\E{\A}{\A^{\top}\A}=\frac{1}{G}\sum_{g=1}\A_g^{\top}\A_g$ is an invertible matrix.

This argument gives us a necessary condition on the number and diversity of operators required to learn from incomplete measurements in the general case of non trivial signal distribution. It tells us that we need at least $G\geq n/m$ different forward operators in order to learn from incomplete measurements.
However, it does not constitute a practical algorithm nor provides a sufficient condition in the general case of a non-trivial signal distribution, since there will typically be zero probability of observing the same image more than once.


Assuming the operators are known, we can make explicit the dependency of the reconstruction network on the forward operator as $f(\y,\A)$. As discussed in the first chapter, there are various ways to incorporate knowledge of the forward operator into the architecture, with the most common being unrolled optimization algorithms, e.g., \cite{adler_PDnet_2018,Hammernik_varnet_2018}.
A naive approach for handling multiple operators is to minimize measurement consistency:
\begin{equation} 
 \argmin_{f} \E{\y,\A}{\mathcal{L}_{\text{MC}}(\y, \A \circ f)}
\end{equation}
with
\begin{equation} \label{eq: mcloss}\tag{MC}
    \mathcal{L}_{\text{MC}}(\y, \A \circ f) = \frac{1}{m}\|\A f(\y,\A) - \y\|^2.
\end{equation}
Unfortunately, this idea might fail, 
 as $f(\y,\A)=\A^{\dagger}\y$ is a global minimizer of the loss.
This trivial solution is due to the fact that $f$ can achieve zero training error without learning anything about the signal distribution. We can avoid this solution in two ways, using a splitting loss or enforcing consistency across the operators.

\paragraph{Splitting with noiseless measurements}


We can avoid the trivial solution of the measurement consistency loss by removing some of the measurements from 
the input, such that $f$ needs to predict the unobserved part.
Dividing our measurements into two
non-overlapping parts, i.e., $\y = [\y_1^{\top},\y_2^{\top}]^{\top}$ and $\A^{\top} = [\A_1^{\top}, \A_2^{\top}]^{\top}$, we can build a self-supervised loss asking the network to predict $\y$ given $\y_1$:
\begin{equation}\label{eq: multisplit} \tag{MSPLIT}
    \lossarg{MSPLIT}{\y,\A} = \E{\y_1, \A_1|\y,\A}{\frac{1}{m}\| \A f(\y_1,\A_1) - \y\|^2}
\end{equation}
where the expectation averages over some distribution $p(\y_1,\A_1|\y,\A)$ of random splittings. In practice, a single splitting is sampled per gradient step when training a network. There is an extensive literature on how to choose the splitting distribution, which is generally problem-dependent. For example, in accelerated MRI, a popular strategy~\cite{yaman_self-supervised_2020} is to split acceleration masks into non-overlapping sub masks, generally leaving most of the low-frequency information in $\A_1$ to avoid losing too much information.


It is easy to verify that the trivial solution $f(\y_1,\A_1)=\A_1^{\dagger}\y_1$ is not a global minimizer of the expected \Cref{eq: multisplit} loss. An important question is then, does the loss approximate the supervised loss? Assuming the observation model in~\Cref{eq: multop model} with noiseless measurements $\y=\A\x$, 
we have the following result:

\begin{proposition}[Adapted from Daras et al.~\cite{daras_ambient_2024}] \label{prop:split}
Assume the observation model given by \Cref{eq: multop model} with 
noiseless measurements, i.e., $\y=\A \x$.
The multioperator splitting loss is an unbiased estimator of a weighted supervised loss, that is
\begin{align*}
&\E{\y,\A}{\lossarg{MSPLIT}{\y,\A}} \\ 
&\quad =\E{\y_1,\A_1,\x}{\left(f(\y_1,\A_1) - \x\right)^{\top}\Q_{\A_1}\left(f(\y_1,\A_1) - \x\right)}
\end{align*}
where 
\begin{equation} \label{eq: QA1}
    \Q_{\A_1}=\E{\A|\A_1}{\A^{\top}\A} 
\end{equation}
and the global minimizers of the expected loss are given by  
\begin{equation}\label{eq: minimizer split}
f^{*}(\y_1,\A_1)= \Q_{\A_1}^{\dagger}\Q_{\A_1}\E{\x|\y_1,\A_1}{\x} + (\Id - \Q_{\A_1}^{\dagger}\Q_{\A_1})v(\y_1)
\end{equation}
where $v:\R{n}\to\R{n}$ is any function. 
\end{proposition}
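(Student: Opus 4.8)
The plan is to exploit the noiseless structure $\y=\A\x$ to turn the measurement residual into a quadratic form governed by $\A^{\top}\A$, and then to integrate out the unobserved part of the operator. First I would unfold the left-hand side as a single expectation over the joint law of $(\x,\A,\y_1,\A_1)$ generated by~\Cref{eq: multop model} together with the splitting kernel $p(\y_1,\A_1|\y,\A)$. Substituting $\y=\A\x$ lets me write $\A f(\y_1,\A_1)-\y=\A\bigl(f(\y_1,\A_1)-\x\bigr)$, so that
\begin{equation*}
\E{\y,\A}{\lossarg{MSPLIT}{\y,\A}}=\E{\x,\A,\y_1,\A_1}{\tfrac{1}{m}\bigl(f(\y_1,\A_1)-\x\bigr)^{\top}\A^{\top}\A\bigl(f(\y_1,\A_1)-\x\bigr)}.
\end{equation*}

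Second, and this is the crux, I would condition on $(\x,\y_1,\A_1)$ and integrate $\A$ out of the inner factor $\A^{\top}\A$. The key observation is that in~\Cref{eq: multop model} the operator $\A$ is drawn independently of $\x$, and the splitting mechanism selects a subset of the rows of $\A$ using only $\A$ itself; hence $\A_1$ carries no information about $\x$. Since $\y_1=\A_1\x$ is a deterministic function of $(\A_1,\x)$, conditioning on $\y_1$ adds nothing beyond $(\A_1,\x)$, so that
\begin{equation*}
\E{\A|\x,\y_1,\A_1}{\A^{\top}\A}=\E{\A|\A_1}{\A^{\top}\A}=\Q_{\A_1}.
\end{equation*}
Pulling $\bigl(f(\y_1,\A_1)-\x\bigr)$ outside the conditional expectation over $\A$ then gives the claimed identity; the scalar $1/m$ is a harmless normalization that can be absorbed into $\Q_{\A_1}$ and plays no role afterwards. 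I expect the careful justification of this conditional-independence step --- precisely, that the row-selection kernel does not depend on $\x$ --- to be the main subtlety; the rest is bookkeeping.

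Finally, to identify the minimizers I would use that $f$ ranges over all functions, so the objective decouples across inputs and can be minimized pointwise: for each fixed $(\y_1,\A_1)$ I set $\hat{\x}=f(\y_1,\A_1)$ and minimize the convex quadratic
\begin{equation*}
\hat{\x}\mapsto\E{\x|\y_1,\A_1}{(\hat{\x}-\x)^{\top}\Q_{\A_1}(\hat{\x}-\x)},
\end{equation*}
whose Hessian $2\Q_{\A_1}$ is positive semidefinite because $\Q_{\A_1}$ is an average of matrices $\A^{\top}\A$. Setting the gradient to zero yields the normal equations $\Q_{\A_1}\hat{\x}=\Q_{\A_1}\E{\x|\y_1,\A_1}{\x}$, and solving this possibly singular system via the pseudo-inverse gives the general solution $\hat{\x}=\Q_{\A_1}^{\dagger}\Q_{\A_1}\E{\x|\y_1,\A_1}{\x}+(\Id-\Q_{\A_1}^{\dagger}\Q_{\A_1})v$, where $\Id-\Q_{\A_1}^{\dagger}\Q_{\A_1}$ projects onto the nullspace of $\Q_{\A_1}$ and $v$ is arbitrary, matching~\Cref{eq: minimizer split}.
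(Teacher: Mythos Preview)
Your proposal is correct and follows essentially the same route as the paper: unfold the expectation, substitute $\y=\A\x$, take the inner expectation over $\A$ conditional on $\A_1$ to produce $\Q_{\A_1}$, and then minimize the resulting weighted quadratic pointwise. The only cosmetic difference is that the paper delegates the last step to a separate lemma (\Cref{prop: minimizer weighted l2 loss}) rather than deriving the normal equations directly as you do; your explicit discussion of why $\E{\A|\x,\y_1,\A_1}{\A^{\top}\A}=\E{\A|\A_1}{\A^{\top}\A}$ is in fact more careful than the paper's one-line justification.
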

\begin{proof}
\begin{align}
&\E{\y,\A}{\lossarg{MSPLIT}{\y,\A}} \\
    & = \E{\y_1, \A_1, \y, \A}{\| \A f(\y_1,\A_1) - \y\|^2}\\
    & =\E{\y_1, \A_1,\x}{\E{\A|\A_1}{\frac{1}{m}\| \A f(\y_1,\A_1) - \A\x\|^2}} \\
    & = \E{\y_1,\A_1,\x}{\frac{1}{m}\left(f(\y_1,\A_1) - \x\right)^{\top}\Q_{\A_1}\left(f(\y_1,\A_1) - \x\right)}
\end{align}
where the second line uses the definition of the observation model, the third line relies on the noiseless measurements assumption and the last line groups uses the definition of $\Q_{\A_1}$. Since this is a weighted $\ell_2$ loss, we can apply \Cref{prop: minimizer weighted l2 loss} to conclude that any minimizer of this loss is given by~\Cref{eq: minimizer split}.
\end{proof}


\begin{figure}[t]
\centering
\includegraphics[width=.8\textwidth,alt={An illustration of the splitting of two different inpainting masks. The mask on the left contains pixels in the lower region, and is split such that $\A_1$ is assigned to the upper most part of the mask. The mask on the right contains pixels on the upper part of the image, but shares the same central pixels as the mask on the left, and a similar $\A_1$ split is selected.}]{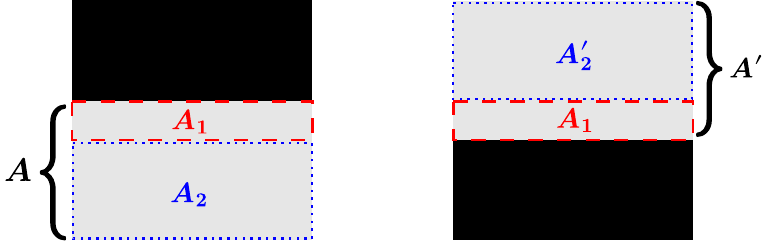}
\caption{\textbf{Illustration of \Cref{prop:split}} Consider a simple image inpainting problem $\A=\diag{\m}$ for some mask $\m$, where images are measured by one of the two masks in gray at random, $\m\sim p(\m)$. The $\A_1$ split denoted by red dashed lines is present in both masks, and thus we have that $\Q_{\A_1}=\A^{\dagger}\A + (\A')^{\dagger}\A'$ is the identity matrix. Hence, the global minimizer of the expected splitting loss is the conditional mean $\E{\x | \A_1\x}{\x}$.}

\label{fig: split intuition}
\end{figure}

If the matrix $\Q_{\A_1}$ is invertible for some split $\A_1$, then $\Q_{\A_1}^{\dagger}\Q_{\A_1}=\Id$, and the minimizer in expectation (c.f. \Cref{eq: same minimum}) is unique, and is given by the conditional mean 
\begin{equation} \label{eq: conditional mean single split}
    f^{*}(\y_1,\A_1)=\E{\x|\y_1,\A_1}{\x}.
\end{equation}

Having an invertible $\Q_{\A_1}$ is thus a sufficient condition to obtain a conditional mean estimator similar to the supervised case, but it is not necessary, since it relies on a single split $\A_1$ of the observed matrix $\A$, whereas we could average over all $J$ possible splittings $\{(\y_1^{(j)},\A_1^{(j)})\}_{j=1}^J$ of the observed $(\y,\A)$ at test time.
If the average
\begin{equation}
    \bar{\Q}_{\A}=\frac{1}{J}\sum_{j=1}^{J} \Q_{\A_1^{(j)}}=\E{\A_1|\A}{\Q_{\A_1}}
\end{equation}
is invertible, we can compute the average prediction at test time as 
\begin{equation} \label{eq: multi evals}
    f^{\text{test}}(\y,\A) = \frac{1}{J}\sum_{j=1}^J \bar{\Q}_{\A}^{-1} \Q_{\A_1^{(j)}} \; f(\y_1^{(j)},\A_1^{(j)})
\end{equation}
where $\bar{\Q}_{\A}^{-1}\Q_{\A_1^{(j)}}$ can be seen as  weighting terms that sum up to the identity, $\frac{1}{J}\sum_{j=1}^J \bar{\Q}_{\A}^{-1} \Q_{\A_1^{(j)}} = \Id$.
Applying~\Cref{prop:split}, we can show that the test time estimate approximates the following average of conditional means:
\begin{align*}
    &f^{\text{test}}(\y,\A) \\&\approx \frac{1}{J}\sum_{j=1}^J \bar{\Q}_{\A}^{-1} \Q_{\A_1^{(j)}}  \Big(\Q_{\A_1^{(j)}}^{\dagger}\Q_{\A_1^{(j)}}\E{\x|\y_1,\A_1^{(j)}}{\x} + (\Id - \Q_{\A_1^{(j)}}^{\dagger}\Q_{\A_1^{(j)}})v(\y_1)\Big) \\
    &= \frac{1}{J}\sum_{j=1}^J \bar{\Q}_{\A}^{-1} \Q_{\A_1^{(j)}} \; \E{\x|\y_1,\A_1^{(j)}}{\x} \\
    &= \E{\A_1|\A}{ \bar{\Q}_{\A}^{-1} \Q_{\A_1} \; \E{\x|\y_1,\A_1}{\x}}.
\end{align*}



In general, closed-form expressions of $\Q_{\A_1}$ and $\bar{\Q}_{\A}$ are not tractable, and practitioners often use a single split or multiple equally-weighted splits $f^{\text{test}}(\y,\A) = \frac{1}{J}\sum_{j=1}^J \; f(\y_1^{(j)},\A_1^{(j)})$ at test time, albeit without any theoretical guarantees.
However, in some specific cases, such as diagonal operators \cite{gan_self_sup_DEQ_2023}, it is possible to compute them explicitly, as illustrated in the example below.

\begin{example}
Consider an image inpainting problem $\A=\diag{\m}$ with random masks, where $b_i \sim \text{Ber}(p_i)$ taking values in $\{0,1\}$ for $i=1,\dots,n$. We can split measurements by an additional masking operation, such that $\A_1 = \diag{\m_1}$ and $\A_2 = \diag{\m_2}$ are also masking operators with $\m_1=\m \circ \bomega$ and $\m_2 = \m \circ (\boldsymbol{1}-\bomega)$ with splitting mask sampled as $\omega_i \sim \text{Ber}(q_i)$. In this case, we have that both $\Q_{\A_1}$ and $\bar{\Q}_{\A}$ are diagonal matrices, as they are given by averages over diagonal matrices.
Due to the separability across pixels
of the mask sampling distributions, we can focus the analysis on a single entry $i\in \{1,\dots,n\}$. 
Letting $[\A_2]_{i,i}=b_i(1-\omega_i)$ with fixed $[\A_1]_{i,i}=b_{1,i}$, the diagonal entries of $\Q_{\A_1}$ are given by
\begin{align*}
[\Q_{\A_1}]_{i,i} &= \E{b_{i}, \omega_{i}| b_{i} \omega_{i} = b_{1,i}}{b_{i}}
\end{align*}
Since
$
\E{b_i, \omega_i| b_i \omega_i = 0}{b_i}= \frac{p_i(1-q_i)}{1-p_iq_i}
$
and
$
\E{b_i, \omega_i| b_i \omega_i =1}{b_i}=1
$
we have that 
\begin{align*}
 [\Q_{\A_1}]_{i,i}  & = \begin{cases}
    1 & \text{if } b_{1,i}=1 \\
    \frac{p_i(1-q_i)}{1-p_iq_i}  & \text{otherwise}
 \end{cases}.
\end{align*}
Thus, if $p_i>0$ and $q_i<1$ for all $i=1,\dots,n$, then $\Q_{\A_1}$ is invertible for all splits $\A_1$.

We can now compute $\bar{\Q}_{\A}$. For a given $\A$ with $[\A]_{i,i}=b_i$, the diagonal entries of $\bar{\Q}_{\A}$ are given by
\begin{align*}
 [\bar{\Q}_{\A}]_{i,i} &= \begin{cases} \frac{p_i(1-q_i)}{1-p_iq_i} & \text{if } b_i=0 \\
    \frac{p_i(1-q_i)^2}{1-p_iq_i} & \text{if } b_i=1 
 \end{cases}.
\end{align*}
Again, if $p_i>0$ and $q_i<1$ for all $i=1,\dots,n$ to have an invertible $\bar{\Q}_{\A}$ for all possible inpainting masks\footnote{Although  presented here as a multiple operator inpainting problem, it was originally proposed in \cite{moran_noisier2noise_2020} as a multiplicative version of the Noisier2Noise self-supervised denoising technique.} $\A$~\cite{moran_noisier2noise_2020}.
\end{example} 

This example also holds for problems with $\A =  \diag{\m}\F$ where $\F$ is a fixed invertible matrix, such as the discrete Fourier transform in accelerated MRI~\cite{millard_theoretical_2023},  and diagonal values are sampled as $\m \sim p(\m)$. Moreover, in these problems, if $\Q_{\A_1}$ is invertible for all possible splits $\A_1$, it is possible to consider a weighted version of the splitting loss~\cite{millard_theoretical_2023}:
\begin{equation*}
    \lossarg{MSPLIT}{\y,\A} = \E{\y_1, \A_1|\y,\A}{\frac{1}{m}\| \Q_{\A_1}^{-\frac{1}{2}} \left(\A f(\y_1,\A_1) - \y \right)\|^2}
\end{equation*}
 to obtain an unbiased estimate of the supervised loss, that is $$\E{\y,\A}{\lossarg{MSPLIT}{\y,\A}}= \E{\y_1,\A_1}{\frac{1}{n}\|f(\y_1,\A_1)-\x\|^2}+ \const$$
 This additional weighting does not modify the global minimizer of the expected loss, but it can improve the performance of the learned $f$ in practice where the dataset is finite.

\paragraph{Splitting with noisy measurements}
What happens if the measurements have noise? We can analyze this case by decomposing the \Cref{eq: multisplit} loss as
\begin{equation*}
   \E{\y_1, \y_2, \A_1, \A_2|\y,\A}{\mathcal{L}_{\text{MC}}(\y_1, \A_1 \circ f) + \frac{1}{m}\| \A_2 f(\y_1,\A_1) - \y_2\|^2  } 
\end{equation*}
where the first term penalizes the measurement consistency with the input $\y_1$ as in~\Cref{eq: mcloss},  
and the second term is associated with the prediction of the unobserved part, $\y_2$. If we have noisy measurement data that is separable across measurements, i.e., $p(\y|\A\x) = \prod_{i=1}^{n} p(y_i|\boldsymbol{a}_i^{\top}\x)$ where $\boldsymbol{a}_i\in \R{n}$ denotes the $i$th row of $\A$, the second term is equivalent to the noiseless case, as $\y_1$ and $\y_2$ are independent given the underlying signal $\x$, that is
$$\E{\y_1,\y_2}{\| \A_2 f(\y_1,\A_1) - \y_2\|^2} = \E{\y_1,\x}{\| \A_2 f(\y_1,\A_1) - \A_2\x\|^2}$$
due to~\Cref{prop: noise2noise}.

However, the measurement consistency term is not equivalent to the noiseless case, as the noise is the same in both input and target:
$$\E{\y_1,\A_1}{\mathcal{L}_{\text{MC}}(\y_1, \A_1 \circ f) } \neq  \E{\y_1,\A_1,\x}{\frac{1}{m}\| \A_1 f(\y_1,\A_1) - \A_1\x\|^2}.$$
To account for noise in the measurement data, we need to replace the consistency loss $\mathcal{L}_{\text{MC}}$ by one of the self-supervised losses for noisy data presented in~\Cref{chap: noisy}, with the specific choice depending on the amount of knowledge we have about the noise distribution:

    \textbf{a)} If the measurements have noise with a known distribution (e.g., Poisson, Gaussian, etc.), we can use the \Cref{eq: GR2R} or \Cref{eq:sure} loss, i.e.,
    \begin{align*}
        &\lossarg{GR2R-MSPLIT}{\y,\A} = \\ &\E{\y_1, \y_2, \A_1, \A_2|\y,\A}{\mathcal{L}_{\text{GR2R}}\left(\y_1,\A_1 \circ f \right) + \frac{1}{m}\| \A_2 f(\y_1,\A_1) - \y_2\|^2  }
    \end{align*}
    As we saw in \Cref{sec: general probs}, the first term is an unbiased estimator of the noiseless measurement consistency, i.e., $$\E{\y_1,\A_1|\x}{\mathcal{L}_{\text{GR2R}}\left(\y_1,\A_1 \circ f \right)} = \E{\y_1,\A_1|\x}{\frac{1}{m}\| \A_1 f(\y_1,\A_1) - \A_1 \x\|^2}$$ and thus use a similar analysis to that in \Cref{prop:split}, to conclude that the minimizer of this loss approximates $\E{\x|\y_1,\A_1}{\x}$ if $\Q_{\A_1}$ is invertible and a single split is used at test time, or $\E{\A_1|\A}{\bar{\Q}_{\A}^{-1}\Q_{\A_1}\E{\x|\y_1,\A_1}{\x}}$ if $\bar{\Q}_{\A}$ is invertible and multiple splits are used at test time.
    
   \textbf{b)} If the measurements have an unknown noise distribution which can be assumed to be separable across measurements, we can simply remove the measurement consistency loss, such that the resulting loss, known by the name SSDU~\cite{yaman_self-supervised_2020}, can be seen as a multioperator extension of~\Cref{eq: noise2self}:
   \begin{equation*}
       \lossarg{SSDU}{\y,\A} = \E{\y_1, \y_2, \A_1, \A_2|\y,\A}{\frac{1}{m}\| \A_2 f(\y_1,\A_1) - \y_2\|^2 }
   \end{equation*}
    In this case, we can derive a similar result than that in~\Cref{prop:split}, but where $\Q_{\A_1} = \E{\A_2|\A_1}{\A_2^{\top}\A_2}$ instead of $\E{\A|\A_1}{\A^{\top}\A}$. As $\A_2$ and $\A_1$ do not overlap, $\Q_{\A_1}$ does not cover the range of $\A_1^{\top}$, and $\Q_{\A_1}$ will not be invertible for any split. Nonetheless, we can average over multiple splits at test time as in~\Cref{eq: multi evals}, such that $\bar{\Q}_{\A}=\frac{1}{J}\sum_{j=1}^J \Q_{\A_1^{(j)}}$ becomes invertible, and the test time estimator, $f^{\text{test}}$, approximates the conditional estimator $\E{\A_1|\A}{\bar{\Q}_{\A}^{-1}\Q_{\A_1}\E{\x|\y_1,\A_1}{\x}}$.
    

\paragraph{Consistency across operators}
The main drawback of splitting-based approaches is that they arguably do not input all the measurement information available to the network. 
The multi-operator imaging (MOI) loss~\cite{tachella_unsupervised_2022} addresses this issue by assuming that each of the imaging problems is invertible, i.e., there exists an $f^*$ such that $\x \approx f^*(\A \x,\A)$ for each $\A$, and then enforcing estimator consistency across the different operators. That is $f(\A \x,\A)\approx f(\A'\x,\A')$ for any pair of operators $\A\neq \A'$ belonging to $p(\A)$: 
\begin{equation}\label{eq: moi} \tag{MOI}
    \lossarg{MOI}{\y, \A} = \E{\A' \sim p(\A)}{\frac{1}{n}\|f\left(\A'f(\y,\A),\A'\right) - f(\y,\A)\|^2} 
\end{equation}
The loss is minimized together with measurement consistency, leading to the following total loss:
\begin{equation}
    \lossarg{}{\y, \A} = \mathcal{L}_{\text{MC}}(\y, \A \circ f) + \lambda \, \lossarg{MOI}{\y, \A}
\end{equation}
where $\lambda>0$ is a trade-off hyperparameter. It is easy to verify that the trivial reconstruction $f(\y,\A)=\A^{\dagger}\y$ is not a minimizer of this
loss, as long as $\A'\A^{\dagger} \neq \A{\A'}^{\dagger}$ for some $\A\neq \A'$. In the case of noisy measurements, the $\mathcal{L_{\text{MC}}}$ can be replaced 
by any of the losses introduced in~\Cref{chap: noisy} according to the amount of knowledge we have about the noise distribution, in a similar way as the robust extensions of~\Cref{eq: multisplit}. 

While the MOI loss does not have the nice equivalence to the supervised loss that we saw in with the splitting losses, it is able to leverage all the available measurements and can be theoretically motivated by the model identifiability theory discussed in \Cref{sec: model identification}.

\section{Leveraging invariance to transformations} \label{sec: equivariance}

In applications where we have a single non-invertible forward operator, Chen et al.~\cite{chen_equivariant_2021} show that it still possible learn in its nullspace if we assume that the distribution of signals is invariant to a group of  transformations $\T_g:\R{n}\to\R{n}$ for $g=1,\dots,G$, such as rotations or translations acting on a discretized image of $n$ pixels.
Mathematically speaking, if the support of the distribution is invariant, we have that for every transform $\T_g$, if $\x\in \text{supp}(p_{\x})$, then $\T_g(\x) \in \text{supp}(p_{\x})$. This condition is less strict than asking the distribution to be invariant, i.e., $p(\T_g(\x))=p(\x)$ for all transformations $\T_g$ and signals $\x$. Due to the invariance property, we have the following key observation:
\begin{equation}
    \y = \A \x = \A \circ \T_g \circ \T_g^{-1} (\x) = \A \circ \T_g (\x')
\end{equation}
where $\x'=\T_g^{-1}(\x)$ also belongs to the signal set. Thus, the invariance property \emph{implicitly} provides us with additional virtual observations through a family of different operators $\{\A_g=\A \circ \T_g\}_{g=1}^G$ and we are in a similar but slightly more constrained setup to that in~\Cref{sec: multioperators} (see~\Cref{fig: ei intuition} for an illustration of this idea). We will see that this enables us to exploit a powerful property called equivariance.

\begin{figure}[t]
    \centering
    \includegraphics[width=.8\textwidth,alt={Two rows of images of cats with missing pixels in the center of the image. In the first row, all cats are centered in the image, and their faces are blocked by the mask, whereas in the second row, the cats appear at random positions in the image, and their face is not fully blocked by the mask.}]{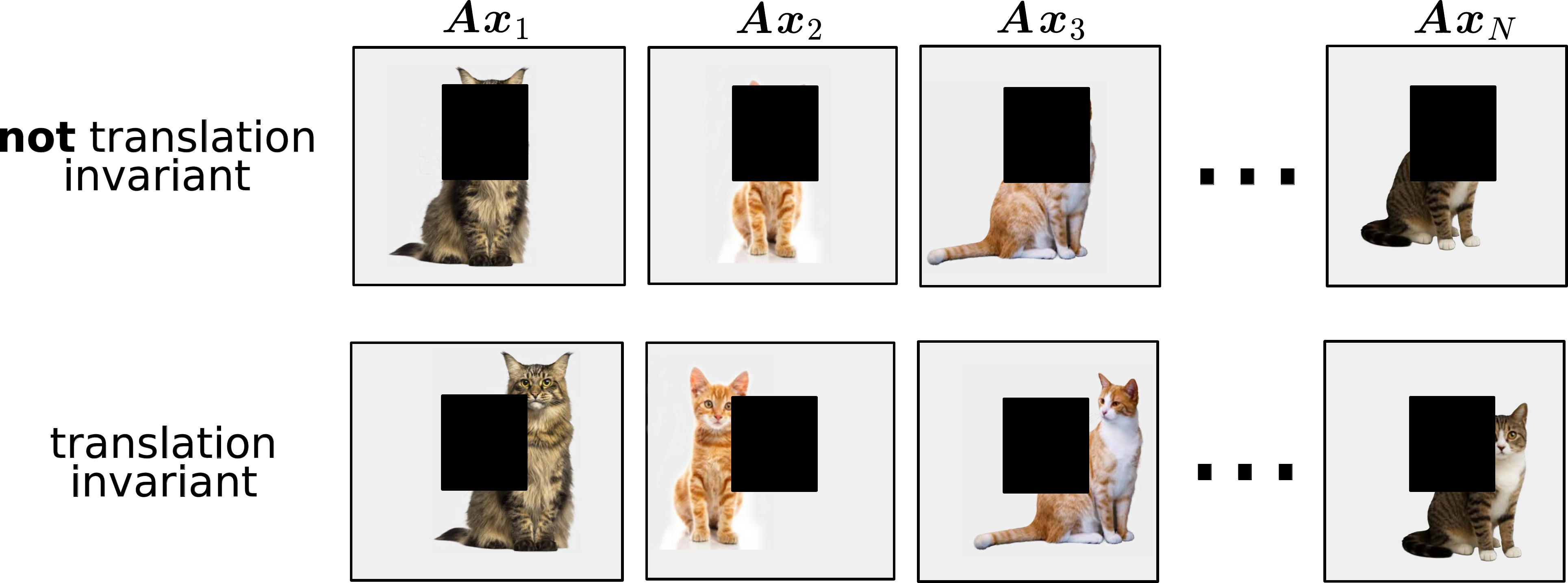}
    \caption{\JT{\textbf{Example of the equivariant imaging principle.} Consider a toy setting where we image cats through a fixed masking operator. If the cat distribution is not invariant to translations (top row), the cats have a canonical position in the image, and we never observe a part of the distribution (the heads). If the distribution is invariant (bottom row), the cats are not always centered, and we can learn the cat distribution (including their heads) despite never observing the masked pixels.} }
    \label{fig: ei intuition}
\end{figure}

\paragraph{Equivariance}
\rev{The concept of equivariance has been widely studied in machine learning, especially in the context of incorporating symmetries into neural network architectures~\cite{bronstein_geometric_2021}. Informally, a function is said to be equivariant to a group of transformations if applying a transformation to the input results in a corresponding transformation of the output. More formally, we have the following definition:}
\begin{definition}[Equivariance~\cite{cohen_steerable_2016}]
\label{def_equivariant_denoiser}
A function $\phi : {\R{n}} \rightarrow \R{m}$ is equivariant \JT{to a group action} if:
$\tilde{\T}_g (\phi(\x)) = \phi(\T_g (\x))$ for all \JT{$g=1,\dots,G$}, where $\T_g: {\R{n}} \rightarrow \R{n}$ and $\tilde{\T}_g: {\R{m}} \rightarrow \R{m}$ are (possibly the same if $n=m$) transformations satisfying the group properties.
\end{definition}
\rev{In the context of inverse problems, we cannot directly apply this definition to the reconstruction function $f$ due to its delicate interplay with the forward operator. We can use instead a more specific definition that takes into account this interplay:}
\begin{definition}[Equivariant reconstructor~\cite{sechaud2025equivariant}]\label{def: equi}
    \rev{We say that the $f(\y, \A)$ is an equivariant reconstructor if}
    \begin{equation} \label{eq:def_equiv_recon}
        f(\y,\A \T_g) = \T_g^{-1} f(\y,\A),\ \forall \y \in \R{m}, \forall g, \forall \A \in \R{m \times n}.
    \end{equation}
\end{definition}
\rev{Below we illustrate how popular reconstructors are equivariant as long as their building blocks are equivariant in the sense of~\Cref{def_equivariant_denoiser} or if the signal distribution $p_{\x}$ is invariant. Proofs can be found in~\cite{sechaud2025equivariant}.
\begin{enumerate}
    \item \textbf{Back-projection networks.} The backprojection network $f(\y,\A) = \phi(\A^{\dagger}\y)$ is an equivariant reconstructor if the image-to-image mapping $\phi$ is equivariant.
    \item \textbf{Unrolled network.} 
     The unrolled network defined in~\Cref{eq: unrolled} is an equivariant reconstructor for any stepsize $\tau \in \mathbb{R}$ if the proximal operators $\phi_1,\dots,\phi_k$ are equivariant.
    \item \textbf{Reynolds averaging.} Let $\tilde{f}(\y, \A)$, be any (non-equivariant) reconstructor, then
    \begin{equation} \label{eq:def_equiv_recon_reynolds}
        f(\y, \A) = \frac{1}{G}\sum_{g=1}^G \T_g \tilde{f}(\y, \A \T_g)
    \end{equation}
    is an equivariant reconstructor.
    \item
    \textbf{Variational methods.} The variational estimate
    \begin{equation} \label{eq:def_equiv_recon_map}
        f(\y, \A) = \argmin_{\x} \|\y - \A(\x)\|^2 + \rho(\x).
    \end{equation}
    is an equivariant reconstructor if the regularization term $\rho$ is an invariant distribution.
    In particular, for observations with isotropic Gaussian noise and $\rho(\x) = -\log p_{\x}(\x)$, then $f$ corresponds to the maximum-a-posteriori estimate, and is equivariant if $p_{\x}$ is an invariant distribution.
    \item \textbf{MMSE.} The MMSE estimate
    \begin{equation} \label{eq:def_equiv_recon_mmse}
        f(\y, \A) = \E{\x|\y,\A}{\x}
    \end{equation}
    is an equivariant reconstructor if $p_{\x}$ is an invariant distribution.
\end{enumerate}
The first three examples show how to build practical equivariant reconstructors, while the last two examples show that optimal Bayesian reconstructors are equivariant when the signal distribution is invariant. 
}

\rev{At this point, one might ask whether the equivariant reconstructor property in~\Cref{def: equi}, together with a simple measurement consistency loss~\Cref{eq: mcloss} is sufficient for learning with a single non-invertible operator $\A$. The answer is negative, since the simple linear pseudo-inverse $f(\y,\A)=\A^{\dagger}\y$ is an equivariant reconstructor that minimizes the measurement consistency loss, but does not recover any information in the nullspace of $\A$. As with the multi-operator case, we can use a splitting loss or enforce consistency across transforms to further constrain the solution space.}

\paragraph{\rev{Equivariant splitting loss}} \rev{
We can follow a similar approach to the \Cref{eq: multisplit} loss in the previous section, but this time using operators related by a transformation, resulting in the equivariant splitting loss~\cite{sechaud2025equivariant}:
\begin{align} \label{eq: loss ES} \tag{ESPLIT}
    \lossarg{ESPLIT}{\y} &= \frac{1}{G}\sum_{g=1}^G \lossarg{MSPLIT}{\y,\A \T_g} \\ 
    & =\frac{1}{G}\sum_{g=1}^G \E{\y_1, \A_1|\y, \A}{\| \A\T_g f(\y_1, \A_1\T_g) - \y\|^2} \nonumber 
\end{align}
where $\A_1 \sim p(\A_1|\A)$ is a random split of $\A$. As with \Cref{prop:split} in the multi-operator splitting case, minimizing this loss under the assumption of an invariant signal distribution $p_{\x}$ can recover the MMSE estimator, as long as $\Q_{\A_1}=\sum_{g\in \mathcal{S}_{\A_1}}\T_g^{\top}\A^{\top}\A\T_g$, where $\mathcal{S}_{\A_1}$ is comprised by all transforms for which $\A_1$ may arise as a random split of $\A \T_g$, or $\bar{\Q}_{\A} = \E{\A_1|\A}{\Q_{\A_1}}$ are full rank. Since summing over the whole group of transformations can be expensive, the loss can be evaluated by randomly sampling a transformation at each training iteration. As with the multi-operator splitting loss, the test time estimator can be computed by averaging over multiple splits and transformations as in~\Cref{eq: multi evals}, and a noise-robust   extension of the loss can be derived in a similar way by replacing the term enforcing consistency with the input $\y_1$ by \Cref{eq: GR2R} or \Cref{eq:sure}.
}

\rev{
If we choose $f$ to be an equivariant reconstructor by design, for example using a back-projection network with an equivariant denoiser architecture, the equivariant splitting loss in~\Cref{eq: loss ES} reduces to the simpler splitting loss in~\Cref{eq: multisplit} with a single operator, as stated in the following proposition:}
\begin{proposition}
\label{prop: es to split}
\rev{If $f$ is an equivariant reconstructor, then \Cref{eq: loss ES} is equivalent to the splitting loss}
\begin{equation} \label{eq:loss_reduces_to}
    \mathcal L_{\mathrm{ESPLIT}}(\y, f) = \mathcal L_{\mathrm{MSPLIT}}(\y, \A, f).
\end{equation}
\end{proposition}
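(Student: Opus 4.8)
The plan is a direct substitution that exploits the defining property of an equivariant reconstructor, namely $f(\y, \A\T_g) = \T_g^{-1} f(\y, \A)$ from \Cref{def: equi}. First I would fix an arbitrary group element $g$ and isolate the corresponding summand in the expansion of \Cref{eq: loss ES},
\[
\E{\y_1, \A_1|\y, \A}{\frac{1}{m}\| \A\T_g\, f(\y_1, \A_1\T_g) - \y\|^2}.
\]
The only factor inside the norm that depends on $g$ is $f(\y_1, \A_1\T_g)$, so the entire argument reduces to simplifying this one quantity.

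Next I would apply the equivariance identity with the operator taken to be the split $\A_1$ and the input taken to be $\y_1$, which gives $f(\y_1, \A_1\T_g) = \T_g^{-1} f(\y_1, \A_1)$. Substituting this into the summand and using that each $\T_g$ is invertible (being a group element), the composition collapses: $\A\T_g\, f(\y_1, \A_1\T_g) = \A\T_g\T_g^{-1} f(\y_1, \A_1) = \A\, f(\y_1, \A_1)$. Crucially, the resulting expression no longer depends on $g$, so the summand becomes exactly $\E{\y_1, \A_1|\y, \A}{\frac{1}{m}\|\A f(\y_1, \A_1) - \y\|^2} = \mathcal{L}_{\mathrm{MSPLIT}}(\y, \A, f)$.

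Finally, since all $G$ summands are now identical to $\mathcal{L}_{\mathrm{MSPLIT}}(\y, \A, f)$, the average over $g$ returns the same value, yielding $\mathcal{L}_{\mathrm{ESPLIT}}(\y, f) = \mathcal{L}_{\mathrm{MSPLIT}}(\y, \A, f)$. There is no serious obstacle here; the one point that deserves care is that the expectation in \Cref{eq: loss ES} draws the split $\A_1$ from $p(\A_1 \mid \A)$, a split of the \emph{original} operator $\A$, rather than independently resampling for each $\T_g$. This is precisely what ensures that, after applying equivariance, the same distribution over the pair $(\y_1, \A_1)$ governs every summand, so that the collapse to a single common $\mathcal{L}_{\mathrm{MSPLIT}}$ term is legitimate and the factorization out of the average is valid.
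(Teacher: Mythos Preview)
Your proposal is correct and is essentially the same approach as the paper's: the paper simply states that the result follows directly from the definition of an equivariant reconstructor, and you have spelled out exactly that one-line substitution, namely $\A\T_g\, f(\y_1,\A_1\T_g) = \A\T_g\,\T_g^{-1} f(\y_1,\A_1) = \A\, f(\y_1,\A_1)$, so that every summand collapses to $\mathcal{L}_{\mathrm{MSPLIT}}(\y,\A,f)$. Your remark that the splitting distribution $p(\A_1\mid\A)$ is common to all $g$ is a nice sanity check, though it is already built into the way \Cref{eq: loss ES} is written.
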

\rev{
The proof follows directly from the definition of equivariant reconstructor in~\Cref{def: equi}.
This result shows how training on a simple splitting loss with a single operator can be effective even in the case of incomplete measurements, as long as the network is an equivariant reconstructor by construction.}
    
\paragraph{Consistency across transforms}
Following the same logic behind~\Cref{eq: moi}, if we assume that the imaging problem is approximately invertible, i.e.,  there exists a function, $f^*$, such that $\x \approx f^*(\A \x, \A)$ for all $\x \in \text{supp}(p_{\x})$, then, as illustrated in~\Cref{fig: equivariant system}, a good reconstruction function $f$ should be such that the \emph{full imaging/sensing system}, $f \circ \A$ is approximately equivariant
\begin{equation}
 f(\A\T_g(\x), \A)\approx \T_g f(\A\x, \A).
\end{equation}
The equivariant imaging (EI) loss aims at enforcing the system equivariance via training~\cite{chen_equivariant_2021}:
\begin{equation}\label{eq: ei} \tag{EI}
    \lossarg{EI}{\y} = \frac{1}{G}\sum_{g=1}^G \frac{1}{n}\|f(\A \T_gf(\y,\A), \A) - \T_g f(\y,\A)\|^2.
\end{equation}
\rev{The equivariant system condition is stronger than asking $f$ to be an equivariant reconstructor as in~\Cref{def: equi}, and in some cases will only hold exactly if the imaging problem is invertible\footnote{\rev{For example, the MMSE estimate under an invariant distribution $p_{\x}$ is always an equivariant reconstructor, but it only satisfies this stronger condition if the imaging problem is invertible, i.e., $\E{\x|\y,\A}{\x}=\x$.}}. Thus, the EI loss is necessary even when using equivariant reconstructors, as it further constrains the solution space.}

As with the MOI loss, this loss should be minimized together with measurement consistency:
\begin{equation}
    \lossarg{}{\y} = \lossarg{MC}{\y} + \lambda \, \lossarg{EI}{\y}
\end{equation}
where $\lambda>0$ is a hyperparameter and the $\mathcal{L_{\text{MC}}}$ can be replaced in the case of noisy measurements by any of the losses introduced in~\Cref{chap: noisy}, 
such as \Cref{eq:sure}~\cite{chen_robust_2022}.

\begin{figure}[t]
    \centering
    \includegraphics[width=.9\textwidth,alt={A diagram illustrating the lack of equivariance of an imaging system with a non-equivariant operator (an inpainting mask). The top workflow shows an image of a cat, which is masked by the operator and then reconstructed by a neural network. The bottom workflow shows the same steps on a translated version of the cat image. The end reconstructions of both workflows are not the same up to a translation.}]{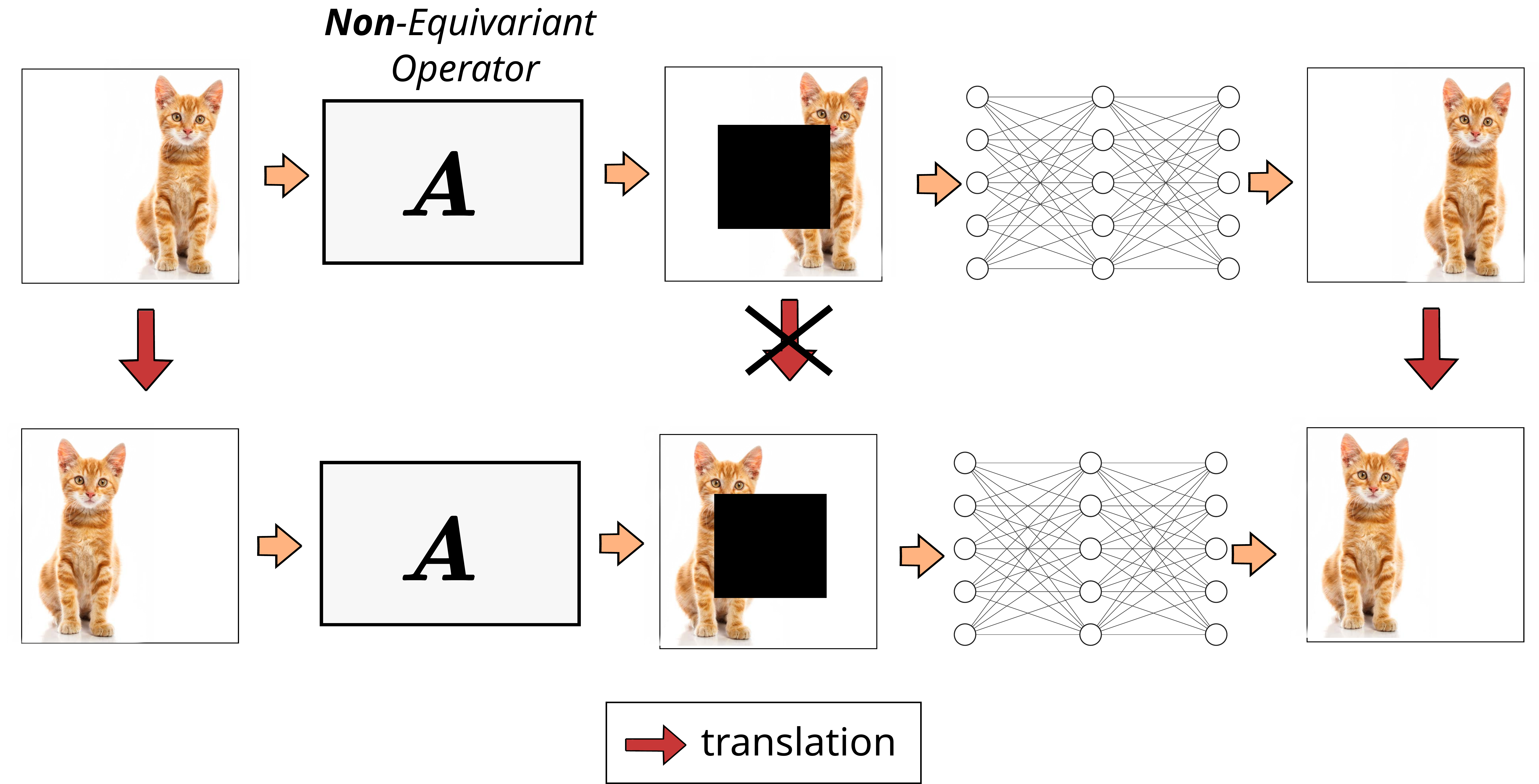}
    \caption{The equivariant imaging loss attempts to enforce equivariance of the imaging system to the a group of transformations (here translations). This loss can enable learning in the nullspace of the operator (here missing pixels), as the transformations implicitly \emph{translate} the nullspace.}
    \label{fig: equivariant system}
\end{figure}

\paragraph{Conditions on $\A$ with linear transformations} \JT{Typical transforms such as rotations or translations are linear operators, and we can think of them as a collection of invertible matrices $\{\T_g\}_{g=1}^G$. In this case, we can follow a similar analysis to the multiple operator setting in the previous section with operators defined as $\A_g=\A \T_g$.} As we saw in \Cref{sec: multioperators}, we need that $\A\T_g$ have different nullspaces, or in other words, that $\sum_{g} \T_g^{\top}\A^{\top}\A\T_g$ is an invertible matrix. An important consequence 
is that the forward operator should \emph{not} be equivariant to the group of transformations:
\begin{proposition}[Tachella et al.~\cite{tachella_sensing_2023}]
If $\A^{\top}\A$ is equivariant to the group of transformations $\{\T_g\}_{g=1}^G$, then all operators $\A \T_g$ share the same nullspace. 
\end{proposition}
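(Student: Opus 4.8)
The plan is to reduce the statement to a commutation argument about the symmetric operator $\A^{\top}\A$, using the elementary fact that $\ker \A = \ker(\A^{\top}\A)$, and then to show that equivariance forces this kernel to be invariant under every $\T_g$. First I would make the hypothesis precise. Since the transformations $\T_g$ are invertible linear maps and $\A^{\top}\A:\R{n}\to\R{n}$ has coinciding domain and codomain, equivariance in the sense of \Cref{def_equivariant_denoiser} (with $\tilde{\T}_g=\T_g$) reads $\T_g(\A^{\top}\A\,\x)=\A^{\top}\A(\T_g\x)$ for all $\x$ and $g$, that is, $\A^{\top}\A\,\T_g=\T_g\,\A^{\top}\A$. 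Conjugating by $\T_g^{-1}$ then shows that $\A^{\top}\A$ also commutes with $\T_g^{-1}$, i.e. $\A^{\top}\A\,\T_g^{-1}=\T_g^{-1}\,\A^{\top}\A$.

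Next I would express the nullspace of each $\A\T_g$ in terms of $\ker\A$. Recall the standard identity $\ker\A=\ker(\A^{\top}\A)$, which follows because $\A^{\top}\A\x=\boldsymbol{0}$ implies $\|\A\x\|^2=\x^{\top}\A^{\top}\A\x=0$ and hence $\A\x=\boldsymbol{0}$, the converse being immediate. Since $\T_g$ is invertible, $\A\T_g\x=\boldsymbol{0}$ holds iff $\T_g\x\in\ker\A$, so $\ker(\A\T_g)=\T_g^{-1}\ker\A=\T_g^{-1}\ker(\A^{\top}\A)$.

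The key step is then to show $\T_g^{-1}\ker(\A^{\top}\A)=\ker(\A^{\top}\A)$. Taking any $\x$ with $\A^{\top}\A\x=\boldsymbol{0}$ and using commutation with $\T_g^{-1}$ gives $\A^{\top}\A(\T_g^{-1}\x)=\T_g^{-1}(\A^{\top}\A\x)=\boldsymbol{0}$, so $\T_g^{-1}\x\in\ker(\A^{\top}\A)$; thus $\T_g^{-1}$ maps $\ker(\A^{\top}\A)$ into itself, and being invertible (hence dimension-preserving) maps it onto itself. Combining the three steps yields $\ker(\A\T_g)=\ker(\A^{\top}\A)=\ker\A$ for every $g$, so all the operators $\A\T_g$ share the common nullspace $\ker\A$.

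I do not anticipate any serious obstacle: the argument is essentially a commutation computation. The only point requiring genuine care is the correct reading of the hypothesis "$\A^{\top}\A$ is equivariant" as the algebraic relation $\A^{\top}\A\,\T_g=\T_g\,\A^{\top}\A$, since the whole proof hinges on translating equivariance into this commutation, after which the invariance of $\ker(\A^{\top}\A)$ and the identity $\ker\A=\ker(\A^{\top}\A)$ close the argument routinely.
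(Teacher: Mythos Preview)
Your proposal is correct and follows the same route as the paper: both start from the commutation relation $\A^{\top}\A\,\T_g=\T_g\,\A^{\top}\A$ and deduce that $\A$ and $\A\T_g$ have the same nullspace. The paper's proof is a single sentence that asserts this implication directly, whereas you have (appropriately) filled in the intermediate steps via $\ker\A=\ker(\A^{\top}\A)$ and the invariance of this kernel under $\T_g^{-1}$.
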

Equivariance means that $\A^{\top}\A\T_g= \T_g \A^{\top}\A$ for all $g=1,\dots,G$, 
and therefore that $\A$ shares the same nullspace with $\A \T_g$ for all $g$.

As illustrated in the following examples, the requirement for $\A$ not being equivariant cannot be taken for granted, and in any given scenario the choice of group actions will depend on specific properties of the forward operator:
\begin{itemize}
    \item Compressive random operators $\A\sim\G{\boldsymbol{0}}{\frac{1}{m}\Id}$ with $m<n$ are not equivariant to any (non-trivial) set of transformations $\{\T_g\}_{g=1}^{G}$, except for the amplitude scalings $\T_g=g\Id$, with probability 1.
    \item Image inpainting $\A=\diag{\m}$ is a generally not equivariant to pixel shifts, as missing pixels have fixed locations in the image.
    \item Operators that admit a diagonalization $\A= \Q \,\diag{\m}\F$ where $\F$ is the Fourier transform and $\Q$ is any unitary basis, such as any blurring operation or MRI, are equivariant to pixel shifts or translations. However, if the blurs have a specific orientation or the MRI masks are accelerated using a Cartesian subsampling pattern, these operators are not equivariant to rotations.
    \item Isotropic blurs and downsampling with antialiasing filters are equivariant to both rotations and translations. However, they are not equivariant to scaling transformations which can be used to learn to reconstruct the missing high-frequencies~\cite{scanvic_self-supervised_2023}.  
\end{itemize}


While the EI loss does not carry with it a strong equivalence with a supervised loss, it shares the same motivation as MOI from the model identifiability theory discussed in \Cref{sec: model identification}.

\section{Learning generative models from incomplete measurements}

So far, we have presented multiple losses than can approximate the posterior mean of the problem. Here we go further and ask whether we can identify the signal distribution, $p_{\x}$, from incomplete measurement data alone.
In this section, we present the main approaches that have been explored so far, including generative adversarial networks (GANs) and diffusion models.

\paragraph{Generative adversarial networks}
In the unconditional generation setting, we aim to train a generator $f:\R{k}\mapsto \R{n}$ mapping latents $\z\in\R{k}$ following a simple distribution such as an isotropic Gaussian, to samples $\x\in\R{n}$ of the image distribution $p_{\x}$. We can aim to match distribution of measurements associated with the generated distribution $p_{\hat{\y}}$ defined as
\begin{equation} 
\left\{
\begin{array}{l}
    \z \sim \mathcal{N}(\boldsymbol{0},\Id ), \;
    \A \sim p(\A) \\ 
    \hat{\x} = f(\z)\\
    \hat{\y} \sim p(\y|\A\hat{\x}) \\ 
\end{array}
\right.
\end{equation}
to the distribution of observed measurements $p_{\y}$. AmbientGAN~\cite{bora_ambientgan_2018} proposes to train a Wasserstein GAN~\cite{gulrajani_improved_2017} to achieve this goal:
\begin{equation} \label{eq: ambientGAN}
    \min_{f} \max_{d} \; \E{\z,\A}{d(\A f(\z), \A)} - \E{\y,\A}{d(\y,\A)}
\end{equation}
where $d:\R{n} \times \R{m\times n} \mapsto \R{}$ is a 1-Lipschitz discriminator network trained jointly with the generator, which can incorporate information about the forward operator. The first term in~\Cref{eq: ambientGAN} pushes $f$ to generate realistic measurements by fooling the discriminator, whereas the second term trains $d$ to discriminate real measurements from generated ones.


This idea can be also extended to a conditional GAN model~\cite{cole_fast_2021,bendel_regularized_2023}, where the generator, $f(\z, \y, \A)$, is conditioned on a measurement and operator, and aims to generate posterior samples with varying latents $\z$.

\paragraph{Diffusion models}
AmbientDiffusion~\cite{daras_ambient_2024} extends the AmbientGAN idea to diffusion models, in the case of  \emph{noiseless} but incomplete measurements from multiple forward operators. In this setting, a reconstruction network is trained using the~\Cref{eq: multisplit} loss at different noise levels by adding synthetic Gaussian noise with standard deviation, $\sigma$, to the input measurements $\y_1$, thus approximating the conditional estimator $f(\y_1,\A_1, \sigma)=\E{\x|\y_1 + \sigma\epsilon, \A_1}{\x}$. 
Once the network is trained, samples of the signal distribution~\cite{daras_ambient_2024} or posterior~\cite{aali2025ambient} are obtained by fixing a random split $\A_1$, and using $f$ as a proxy for the Gaussian denoiser $\E{\x|\x + \sigma\epsilon}{\x}$ required to run the diffusion.

In a similar multioperator setting, Rozet el al.~\cite{rozet_learning_2024} propose a different approach based on expectation-maximization, which consists of iterating between i) generating posterior samples using a diffusion approach with a fixed denoiser network, and then ii) updating the denoiser on the generated samples. At initialization, the denoiser is initialized to sample from a Gaussian distribution.


\section{Model identification theory} \label{sec: model identification}




While we have seen that splitting losses can approximate the supervised $\ell_2$ loss under certain assumptions, it is important to ask if the harder task of learning a generative model is even well defined, or in other words, if we can uniquely identify $p_{\x}$ from incomplete measurements\footnote{We do not consider noise in this part, since we can take noise into account by reasoning in two steps: first we apply the identification results from noisy data in \Cref{prop:noise} to first identify the clean measurement distribution from the noisy one for each forward operator, and second, identify the signal distribution from the set of clean measurement distributions~\cite{tachella_sensing_2023}.}, without any additional assumptions on $p_{\x}$? The question can be answered by analyzing the available information about the characteristic function of the signal distribution, $\phi_{\x}(\bomega)$. For each operator $\A \sim p(\A)$, we have
\begin{align} \label{eq: char func arg}
    \phi_{\y|\A}(\tilde{\bomega}) &= \E{\y|\A}{e^{\mathrm{i}\tilde{\bomega}^{\top} \y }}
 = \E{\x}{e^{\mathrm{i}\tilde{\bomega}^{\top} \A \x }}
    = \E{\x}{e^{\mathrm{i}(\A^{\top}\tilde{\bomega})^{\top} \x }} \\
    &= \phi_{\x}(\bomega=\A^{\top}\tilde{\bomega}) \text { with } \tilde{\bomega}\in \R{m}
\end{align}
Thus, for each operator we observe the characteristic function of $\x$ in the range of $\A^{\top}$, which is an $m$-dimensional linear subspace of $\R{n}$ as $m<n$. Since a distribution is uniquely determined by its characteristic function, we need to observe infinitely-many operators in order to fully cover the characteristic function of $\x$! 
This observation dates back to the work by Cramer and Wold, which focuses in the case $m=1$:
\begin{theorem}[Cramér and Wold~\cite{cramer_theorems_1936}]
A probability distribution $p(\x)$ is uniquely determined by \emph{the totality} of its one-dimensional projections.
\end{theorem}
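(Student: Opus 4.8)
The plan is to prove uniqueness by exploiting the bijective relationship between a probability distribution and its characteristic function, exactly in the spirit of \Cref{eq: char func arg} developed just above. The key observation is that knowing \emph{all} one-dimensional projections of $\x$ is equivalent to knowing the characteristic function $\phi_{\x}(\bomega)$ at \emph{every} frequency $\bomega \in \R{n}$, and the characteristic function determines the distribution uniquely.

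First I would set up the correspondence between projections and the characteristic function. Fix an arbitrary direction $\boldsymbol{t} \in \R{n}$ and consider the one-dimensional random variable $Z_{\boldsymbol{t}} = \boldsymbol{t}^{\top}\x$. Its (scalar) characteristic function is $\phi_{Z_{\boldsymbol{t}}}(s) = \E{\x}{e^{\mathrm{i} s \,\boldsymbol{t}^{\top}\x}}$ for $s \in \R{}$. Comparing this with the full multivariate characteristic function $\phi_{\x}(\bomega) = \E{\x}{e^{\mathrm{i}\bomega^{\top}\x}}$, we see immediately that $\phi_{Z_{\boldsymbol{t}}}(s) = \phi_{\x}(s\,\boldsymbol{t})$. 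Knowing the law of $Z_{\boldsymbol{t}}$ for every $\boldsymbol{t}$ gives $\phi_{Z_{\boldsymbol{t}}}(s)$ for all $s$ and all $\boldsymbol{t}$, and as $s$ and $\boldsymbol{t}$ range freely the product $s\,\boldsymbol{t}$ sweeps out all of $\R{n}$. Hence the totality of one-dimensional projections determines $\phi_{\x}$ at every point of $\R{n}$.

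Second, I would invoke the uniqueness theorem for characteristic functions: two probability distributions on $\R{n}$ with the same characteristic function are identical (this follows from the Fourier inversion theorem). Concretely, suppose $p$ and $q$ are two distributions with identical one-dimensional projections in every direction. By the previous step their characteristic functions agree everywhere, $\phi_{p}(\bomega) = \phi_{q}(\bomega)$ for all $\bomega \in \R{n}$, and uniqueness of the Fourier transform forces $p = q$. This establishes that $p(\x)$ is uniquely determined.

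The only delicate point — and where I would be careful rather than where genuine difficulty lies — is ensuring that the distribution of the scalar projection $Z_{\boldsymbol{t}}$ really does pin down $\phi_{Z_{\boldsymbol{t}}}(s)$ for \emph{all} $s$, not merely at isolated frequencies; this is immediate because the characteristic function of a scalar random variable is determined by its law. No additional regularity or moment assumptions are needed, since characteristic functions always exist and the inversion/uniqueness theorem is unconditional. The main conceptual step is simply recognizing that the one-parameter family $\{s\,\boldsymbol{t} : s\in\R{}, \boldsymbol{t}\in\R{n}\}$ exhausts $\R{n}$, which connects the one-dimensional data to the full $n$-dimensional characteristic function.
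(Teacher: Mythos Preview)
Your proof is correct and follows precisely the characteristic function route the paper itself sketches in~\Cref{eq: char func arg}: identify the one-dimensional projection $\boldsymbol{t}^{\top}\x$ with an evaluation of $\phi_{\x}$ along the ray $\{s\,\boldsymbol{t}:s\in\R{}\}$, observe that ranging over all directions covers $\R{n}$, and conclude via uniqueness of the characteristic function. The paper does not supply its own proof of the theorem (it is cited as a classical result of Cram\'er and Wold), but the argument it develops immediately preceding the statement is exactly the one you give, specialised from a general operator $\A$ to the rank-one case $\A=\boldsymbol{t}^{\top}$.
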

The theorem says that if we observe (unpaired) scalar measurements $y=\boldsymbol{a}^{\top}\x \in \R{}$ with $\x\sim p_{\x}(\x)$ and a measurement distribution $\boldsymbol{a}\sim p(\boldsymbol{a})$ that covers the whole space of possible projections (i.e., is dense in $\R{n}$), we can uniquely identify the $p_{\x}$.  Unfortunately, this result is not very practical, since it only holds in the limit of observing \emph{all possible} projections in $\R{n}$, whereas we generally only obtain observations via a finite number of operators, and thus the distribution is not dense in $\R{n}$ (e.g., is limited to varying masks in image inpainting or accelerated MRI).

\subsection{Identification of low-dimensional distributions}

In order to obtain sufficient conditions in the more realistic setting of a finite number of operators, we need to consider some additional assumptions on the signal distribution, $p_{\x}$. In the following, we will see that assuming that the signal distribution is low-dimensional, or in other words, that the support of $p_{\x}$ is a low-dimensional subset of $\R{n}$, is sufficient for obtaining model identification guarantees. Low-dimensionality is a common assumption in imaging and data science, and it is often referred to as the \emph{manifold hypothesis}~\cite{fefferman_reconstructing_1993}.

\begin{example}
We can illustrate why and how low-dimensionality can help by considering a simple example where $p_{\x}$ is supported on a $k$-dimensional subspace of dimension $k$, such that we can write any signal as $\x = \Q \z$ for some low-dimensional latent vector $\z\sim p_{\z}$ and a fixed linear decoder $\Q \in \R{n \times k}$.
Assuming that the linear decoder $\Q$ is known, we can observe the characteristic function of the latent variable as $\phi_{\y|\A}(\tilde{\bomega})=\phi_{\z}(\bomega=(\A\Q)^{\top}\tilde{\bomega})$ for all $\tilde{\bomega}\in \R{m}$ and $g=1,\dots,G$.
If we further assume that $m\geq k$ and $\rank{\A\Q} = k$ for some $\A$, we can follow a similar argument to that in~\Cref{eq: char func arg} to conclude that we can uniquely identify the latent distribution $p_{\z}$, and thus also uniquely identify $p_{\x}$ as the linear decoder is known.
\end{example}

The intuition of the linear subspace can be generalized to more general $k$-dimensional sets.
In the example, the two key steps for model identification are to i) identify the low-dimensional support of the distribution, which we denote as $\text{supp}(p_{\x}):=\signalset$ (in the linear case, the support is given by the range of the linear decoder $\Q$), and ii) require that $\A$ is one-to-one when restricted to $\signalset$ (in the linear case, this is equivalent to asking $\rank{\A\Q}=k$).  

In the noiseless measurements case, if there is a one-to-one reconstruction map $f$ between the measurement set, $\text{supp}(p_{\y})=\yset$, and the signal set, $\signalset$, then we can identify the signal distribution $p_{\x}$ by simply applying $f$ to all measurements in $\yset$. We thus focus on the problem of identifying the support $\signalset$ from the measurement distribution one.

While multiple definitions of low-dimensionality of a set exist~\cite{falconer2013fractal}, here we focus on the upper box-counting dimension\footnote{The box-counting dimension~\cite[Chapter~2]{falconer2013fractal} is defined for a compact subset $\mathcal S\subset\R{n}$ as
\begin{equation}
   \bdim{\mathcal S} = \lim \sup_{\epsilon\to 0}  \frac{\log \mathds{N}(S,\epsilon)}{-\log \epsilon}
\end{equation}
where $\mathds{N}(\mathcal S,\epsilon)$ is the minimum number of closed balls of radius $\epsilon$ with respect to  the norm $\|\cdot\|$ that are required to cover $\mathcal S$.}, which is convenient for the theoretical results, and covers both well behaved models such as compact manifolds where the definition coincides with the more intuitive topological dimension, as well as more general sets. The following theorem provides a sufficient condition for uniquely identifying the signal set from measurements associated to multiple forward operators (c.f.,~\Cref{sec: multioperators}):

\begin{theorem}[Tachella et al.~\cite{tachella_sensing_2023}] \label{theo: multiple op}
Assume that the signal set $\signalset$ is a bounded set with box-counting dimension $k$. For almost every set of $G$ operators $\A_1,\dots,\A_{G}\in \R{m\times n}$, the signal model $\signalset$ can be uniquely identified from the measurement sets $\{\A_g\signalset\}_{g=1}^{G}$ if the number of measurements per operator  verifies $m> k + n/G$.
\end{theorem}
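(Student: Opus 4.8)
The plan is to recast identifiability as the absence of \emph{spurious} measurement-consistent points. Since each $\A_g$ is known, the data $\{\A_g\signalset\}_{g=1}^G$ determines the candidate set
\[
\hat\signalset \;=\; \bigcap_{g=1}^G \A_g^{-1}(\A_g\signalset) \;=\; \bigcap_{g=1}^G \big(\signalset + \ker\A_g\big),
\]
and one always has $\signalset\subseteq\hat\signalset$. A point $\x\in\hat\signalset$ is exactly one for which, for every $g$, there is some $\boldsymbol{s}_g\in\signalset$ with $\A_g(\x-\boldsymbol{s}_g)=\boldsymbol{0}$. I would therefore reduce the theorem to the statement that, for almost every $(\A_1,\dots,\A_G)$, no such $\x$ lies outside $\signalset$, i.e.\ $\hat\signalset=\signalset$; the same count applied with $\x$ restricted to $\signalset$ simultaneously rules out a proper subset $\signalset'\subsetneq\signalset$ producing the same data, yielding genuine uniqueness. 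A useful preliminary step is to check that $\hat\signalset$ is bounded: if $\x_j\in\hat\signalset$ with $\|\x_j\|\to\infty$, then $\x_j/\|\x_j\|$ converges along a subsequence to a unit vector lying in every $\ker\A_g$, which is impossible once $\bigcap_g\ker\A_g=\{\boldsymbol{0}\}$ — a property holding generically because $mG>n$.

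The dimension count that produces the threshold is as follows. An ambiguous configuration is a tuple $(\x,\boldsymbol{s}_1,\dots,\boldsymbol{s}_G)\in\R{n}\times\signalset^{G}$ with $\x\notin\signalset$ and $\A_g\boldsymbol{d}_g=\boldsymbol{0}$ for $\boldsymbol{d}_g:=\x-\boldsymbol{s}_g$, where necessarily all $\boldsymbol{d}_g\neq\boldsymbol{0}$. These configurations are parametrized by a set of box-counting dimension at most $n+kG$ (namely $n$ for $\x$ and $k$ for each $\boldsymbol{s}_g$). For a \emph{fixed} configuration with nonzero $\boldsymbol{d}_g$, the constraint $\A_g\boldsymbol{d}_g=\boldsymbol{0}$ cuts out a subspace of codimension $m$ in $\R{m\times n}$, so the operator tuples annihilating all the $\boldsymbol{d}_g$ form a set of codimension $mG$ inside $(\R{m\times n})^{G}$. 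Viewing the incidence set over the operator space and projecting, its image has measure zero precisely when the parameter dimension is strictly smaller than the total codimension, i.e.\ when $n+kG < mG$, which is exactly $m>k+n/G$. This is the transversality heart of the argument, and it explains why the multiplicity $G$ lowers the per-operator requirement below the naive single-map embedding bound $m>2k$.

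Because $\signalset$ is only a box-counting-dimension-$k$ set rather than a smooth manifold, the count must be turned into a genuine covering estimate, which I expect to be the main obstacle. Fixing $k'>k$ and a separation scale $\delta>0$, I would cover $\signalset$ by $N(\epsilon)\lesssim\epsilon^{-k'}$ balls of radius $\epsilon$ and the bounded region $\{\x:\operatorname{dist}(\x,\signalset)\ge\delta\}\cap\hat\signalset$ by $\lesssim\epsilon^{-n}$ balls, so that every candidate configuration is captured by one of at most $\epsilon^{-(n+k'G)}$ combinations of cells, each forcing $\|\boldsymbol{d}_g\|\ge\delta$. For a single combination, the Lebesgue measure of operator tuples with $\|\A_g\boldsymbol{d}_g\|\lesssim\epsilon$ for all $g$ is $\lesssim\epsilon^{mG}$ (an $m$-dimensional slab per operator, with constants controlled by $\|\boldsymbol{d}_g\|\ge\delta$). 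A union bound then gives a bad-operator measure $\lesssim \epsilon^{\,mG-k'G-n}$, which tends to $0$ as $\epsilon\to0$ once $m>k'+n/G$; summing over a dyadic sequence $\epsilon=2^{-j}$ and invoking Borel--Cantelli shows the bad set has measure zero, after which letting $k'\downarrow k$ and $\delta\downarrow0$ along countable sequences completes the proof. The delicate points I expect to fight with are keeping the covering constants uniform so the exponent closes for every $k'>k$, and the separation-from-$\signalset$ bookkeeping needed to keep $\|\boldsymbol{d}_g\|$ bounded below, so that the slab estimate $\epsilon^{mG}$ is valid and the genuine diagonal solutions $\x\in\signalset$ are not miscounted.
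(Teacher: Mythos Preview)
The paper does not actually prove this theorem: it is quoted from the cited reference \cite{tachella_sensing_2023} and only the statement is given, together with the informal characteristic-function heuristic and the trivial-distribution example around \Cref{eq: oracle multi-measurement}. There is therefore no proof in the present manuscript to compare your proposal against.

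That said, your sketch is a sound route and is in the spirit of the Sauer--Yorke--Casdagli embedology arguments the paper alludes to (\cite{sauer_embedology_1991}): parametrize spurious configurations $(\x,\boldsymbol{s}_1,\dots,\boldsymbol{s}_G)$, note that each nonzero $\boldsymbol{d}_g=\x-\boldsymbol{s}_g$ imposes $m$ independent linear conditions on $\A_g$, and convert the box-counting hypothesis into a measure-zero conclusion via a covering/union bound. The count $n+kG<mG$ is exactly the threshold $m>k+n/G$. Two small remarks. First, the Borel--Cantelli step is unnecessary: the genuine bad set of operator tuples is contained in every $\epsilon$-approximate bad set, so a single limit $\epsilon\to0$ already gives measure zero; you only need a countable union over $\delta\downarrow0$ and $k'\downarrow k$. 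Second, before the slab estimate $\lesssim\epsilon^{mG}$ is meaningful you must restrict the operators to a bounded region of $(\R{m\times n})^G$ (or use a Gaussian reference measure), which is the usual technicality. Your treatment of uniqueness is correct: any competitor $\signalset'$ with identical projections satisfies $\signalset'\subseteq\hat\signalset=\signalset$, and ruling out a proper subset reduces to the same count with parameter dimension $k(G{+}1)\le kG+n$, already covered by the hypothesis since $k\le n$.
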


It is worth noting that this result does not directly apply for \emph{any} set of $G$ operators (e.g., specific MRI or inpainting operators), but rather requires $G$ \emph{generic} operators, which removes degenerate cases. Nonetheless, it provides us with fundamental bounds on the number of measurements and operators needed to uniquely identify the signal distribution.

We can further refine this result for the more constrained case where the operators are linked by a group of transformations (c.f.,~\Cref{sec: equivariance}), i.e., $\{\A_g = \A \T_g\}_{g=1}^G$. The following theorem relies on the dimension of the largest linear subspace of $\R{n}$ which is invariant to the group of transformations\footnote{The maximum multiplicity of the group action $c_{\max}$, is given by the largest dimension of a linear subspace $\mathcal{S}\subseteq \R{n}$ such that $\T_{g}\mathcal{S} \subseteq \mathcal{S}$ for all $g=1,\dots,G$. See~\cite{tachella_sensing_2023} for more details.}, which we denote by $c_{\max}$.

\begin{figure}[t]
\centering
\includegraphics[width=.9\textwidth,alt={Two straight lines showing the necessary and sufficient conditions on the number of measurements, operators and dimensionality of the model, of model identification (on the left) and signal recovery (on the right). Different measurements regimes are highlighted in color, according to whether identification or recovery are impossible (red), might be possible (yellow) or impossible (red).}]{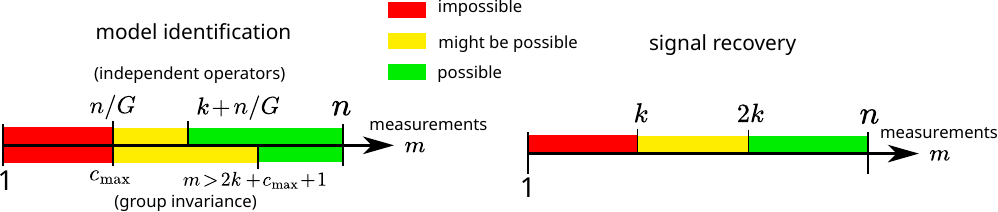}
\caption{Model identification and signal recovery regimes~\cite{tachella_sensing_2023} as a function of the number of partial observations $m$ per signal, model dimension $k$, ambient dimension $n$ and number of measurement operators $G$ (when it is possible to access multiple independent operators) or maximum multiplicity of an invariant subspace $c_{\max}$ (when the signal is group invariant or the operators are related via a group action).}
\label{fig:m regimes}
\end{figure}

\begin{theorem}[Tachella et al.~\cite{tachella_sensing_2023}] \label{theo: sufficient invariance}
Let $\{\T_g\}_{g=1}^{G}$ be a group of transformations associated with a compact cyclic group and assume that the signal set $\signalset$ is a bounded set with box-counting dimension $k$. For almost every $\A\in\R{m\times n}$, the signal set can be uniquely identified from the sets $\{\A \T_g\signalset\}_{g=1}^G$ if the number of measurements verifies $m > 2k + c_{\max} + 1$. 
\end{theorem}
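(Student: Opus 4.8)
The plan is to run a parametric transversality argument over the operator space $\A\in\R{m\times n}$, adapting the box-counting technique behind \Cref{theo: multiple op} to the structured family $\{\A\T_g\}_{g=1}^G$ generated by a single operator. First I would use the invariance of $\signalset$ exactly as in \Cref{eq: char func arg}: since $\T_g^{-1}(\x)\in\signalset$ whenever $\x\in\signalset$, observing $\A\signalset$ is equivalent to observing the \emph{virtual} measurement sets through the lifted operator $\tilde\A(\x):=(\A\T_1\x,\dots,\A\T_G\x)$, so the group case is a constrained instance of the multi-operator setting of \Cref{theo: multiple op} with $\A_g=\A\T_g$. This reduces identifiability to a separation condition: for almost every $\A$, the map $\tilde\A$ must be injective on $\signalset$, i.e.
\begin{equation*}
(\signalset-\signalset)\cap\ker\tilde\A=\{\boldsymbol 0\}.
\end{equation*}
A short computation shows $\ker\tilde\A=\bigcap_{g}\T_g^{-1}\ker\A$, which is precisely the largest $\{\T_g\}$-invariant subspace contained in $\ker\A$; this is the object through which $c_{\max}$ will enter. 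The discrepancy with the generic bound $m>k+n/G$ of \Cref{theo: multiple op} comes exactly from the operators no longer being independent, so the count must be redone for this family.

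Next I would control the two sets in the separation condition. The difference set $D:=\signalset-\signalset$ is the image of $\signalset\times\signalset$ under the Lipschitz subtraction map, so $\bdim{D}\le 2k$, supplying the $2k$ term. For the kernel, I would decompose the ambient space into the isotypic components of the generator $\T$: because the group is compact and cyclic, $\T$ acts as a root-of-unity scalar on each complex eigenspace $E_j$, and $c_{\max}=\max_j\dim E_j$ is exactly the largest multiplicity. The condition $\A\T_g w=0$ for all $g$ is equivalent to $\A$ annihilating the orbit-span $O_w=\mathrm{span}\{\T_g w\}$, an invariant subspace whose dimension counts the active frequencies of $w$; for a generic $\A$ the surviving invariant nullspace is concentrated in the largest isotypic block and has size governed by $c_{\max}$.

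The core step is then a box-counting estimate on the bad set $B=\{\A:\exists\,w\in D\setminus\{\boldsymbol 0\},\ \A|_{O_w}=0\}$. For fixed $w$ the constraint $\A|_{O_w}=0$ cuts out a subspace of codimension $m\,\dim O_w$ in $\R{m\times n}$; letting $w$ range over $D$ contributes at most $\bdim{D}\le 2k$ free parameters, while the worst case — $w$ lying in the largest isotypic block, where $\A$ can fail to resolve up to $c_{\max}$ directions — contributes the $c_{\max}$ term. Covering $D$ at scale $\epsilon$ and summing the fiber codimensions (as in the proof of \Cref{theo: multiple op}) shows that $B$ has strictly positive codimension, hence Lebesgue measure zero, once $m>2k+c_{\max}+1$, the trailing $+1$ providing the strict-codimension slack. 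Combining this with the reduction of the first paragraph yields identifiability for almost every $\A$.

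The main obstacle, as in all sensing theorems of this type, is making the genericity argument \emph{uniform} over every invariant signal set of box dimension $k$: the adversary may choose $\signalset$ after seeing $\A$, so smooth Sard/transversality is unavailable and one must instead use box-counting covering estimates that tolerate $\signalset$ being merely dimension-bounded rather than a manifold. A secondary but essential difficulty is the representation-theoretic bookkeeping that forces the two costs to combine \emph{additively} into $2k+c_{\max}$ rather than as a maximum, which hinges on tracking how $\ker\A$ meets each isotypic component separately.
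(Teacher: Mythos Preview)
This theorem is stated in the paper only as a cited result from~\cite{tachella_sensing_2023}; no proof is given here, so there is no in-paper argument to compare against directly.

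That said, your reduction step has a genuine gap. You claim that identifiability of $\signalset$ reduces to the separation condition $(\signalset-\signalset)\cap\ker\tilde\A=\{\boldsymbol{0}\}$, i.e., to injectivity of the lifted operator $\tilde\A$ on $\signalset$. But injectivity of $\tilde\A$ on $\signalset$ is a \emph{signal recovery} condition: it says the paired tuple $(\A\T_1\x,\dots,\A\T_G\x)$ determines $\x$. \emph{Model identification} asks instead whether the unpaired set $\A\signalset$ determines $\signalset$ among all invariant candidates. Since $\T_g\signalset=\signalset$, the sets $\A\T_g\signalset$ all collapse to $\A\signalset$; you never observe $\tilde\A\signalset$ as a set of $G$-tuples, only its projection onto one block, so your claimed equivalence ``observing $\A\signalset$ is equivalent to observing the virtual measurement sets through $\tilde\A$'' is false. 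Concretely: with $n=2$, $m=1$, $\A=[1\;\;0]$, and the two-element group generated by the coordinate swap, one has $\ker\tilde\A=\{\boldsymbol{0}\}$ for generic $\A$, so your separation condition holds for every $\signalset$; yet the invariant sets $\signalset=\{(1,2),(2,1)\}$ and $\signalset'=\{(1,1),(2,2)\}$ both have image $\{1,2\}$, and a similar competing diagonal set can be produced for any generic row $\A$.

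The correct argument must therefore control, for generic $\A$, the existence of \emph{pairs} $(\signalset,\signalset')$ of distinct invariant sets with $\A\signalset=\A\signalset'$. This requires comparing points of $\signalset$ against points of an unknown competitor $\signalset'$, not merely against other points of $\signalset$; the difference set $D=\signalset-\signalset$ and the bad set $B$ you define depend only on the fixed $\signalset$ and cannot see $\signalset'$ at all. You acknowledge in your closing paragraph that the argument must be uniform over candidate sets, but the body of the proposal never introduces a second set, and it is precisely the combinatorics of how an invariant $\signalset'$ can mimic $\A\signalset$ orbit by orbit --- without the pairing information --- that produces the $2k+c_{\max}$ count. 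Your isotypic decomposition is the right tool for isolating the $c_{\max}$ contribution, but it must be deployed on this two-set comparison, not on the one-set kernel intersection $\bigcap_g\T_g^{-1}\ker\A$.
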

Using the fact that for any compact cyclic group $c_{\max} \geq n/G$, when the equality holds we get the bound $m > 2k + n/G + 1$ which resembles that of \Cref{theo: multiple op}, although requiring $2k$ measurements instead of $k$. It turns out that these additional measurements are necessary in some cases, since it is possible to build a counter-example where identifying the support of $p_{\x}$ is impossible using any operator with $m \leq 2k + c_{\max} - 2$ measurements~\cite{tachella_sensing_2023}.

\paragraph{Comparison with signal recovery theory}
A well-known result from compressed sensing~\cite{puy_recipes_2017} and embedding theory~\cite{sauer_embedology_1991}, 
is that a sufficient condition for a generic $\A\in\R{m\times n}$ to be one-to-one on the support of the signal distribution $\signalset$ is to have a number of measurements larger than two times the dimension of the set, that is $m>2k$ where $k$ is the box-counting dimension of $\signalset$. These results are known as signal recovery theorems, since they specify the minimum number of measurements that guarantee the existence of a reconstruction function perfectly recovering all plausible signals. \Cref{fig:m regimes} compares the necessary and sufficient conditions for model identification presented in this chapter with those for unique signal recovery.

\paragraph{Relationship to splitting}
Building on~\Cref{prop:split}, we can also ask what are the minimum numbers of operators and measurements in order to uniquely identify the signal distribution via splitting losses and how these compare to the results in~\Cref{theo: multiple op}.

The best-case scenario of $G$ operators that verify the conditions in~\Cref{prop:split} can be constructed in the following way: consider operators given by $\A = [\A_1^{\top},\A_2^{\top}]^{\top} \in \R{m \times n}$ where $\A_1 \in \R{(2k+1) \times n}$ is fixed, and $\A_2\in\R{(m-2k+1)\times n}$ is sampled as one out of $G$ operators, i.e., $\A_2 \sim \frac{1}{G} \sum_{g=1}^G \delta_{\A_{2,g}}$, independently of $\A_1$. 
In order to learn the conditional estimator $\E{\x|\y_1,\A_1}{\x}$, \Cref{prop:split} requires that $\Q_{\A_1} = \E{\A|\A_1}{\A^{\top}\A} \in \R{n \times n}$ is invertible (and thus has full rank). Assuming that we are in the noiseless setting and that the signal distribution has dimension $k\leq n$, we can choose $2k+1$ measurements in $\A_1$, such that we have unique signal recovery~\cite{sauer_embedology_1991}. Thus, the conditional estimator obtains a perfect reconstruction, $\E{\x|\y_1,\A_2}{\x}=\x$. We can then uniquely identify $p_{\x}$ by simply reconstructing measurements from $p_{\y}$ once we have learned the conditional mean estimator.
Due to the simplified form of $\A$, we can compute $\Q_{\A_1}$ of~\Cref{prop:split} in closed form as 
\begin{align*}
    \Q_{\A_1} &= \E{\A|\A_1}{\A^{\top}\A} \\&=\E{\A_2}{\A_2^{\top}\A_2} + \E{\A|\A_1}{\A_1^{\top}\A_1}
    \\ &= \frac{1}{G}\sum_{g=1}^G \A_{2,g}^{\top}\A_{2,g} + \A_1^{\top}\A_1 .
\end{align*}
Since $\Q_{\A_1}$ is composed of the sum of $G$ matrices of rank at most $m-(2k+1)$ and one matrix of rank at most $2k+1$, we have that $\rank{\Q_{\A_1}} \leq \min \{G(m-2k-1) + 2k+1,n\}$ with equality for a generic choice of $\A_{2,g}$.

As we need $\rank{\Q_{\A_1}}=n$, it is necessary that $G(m-2k-1) + 2k+1 \geq n $. Thus we obtain the condition 
$$m \geq \frac{n}{G} + \left(1-\frac{1}{G}\right)(2k+1)$$
which is a similar, albeit less tight, version of the sufficient condition in~\Cref{theo: multiple op}.

\section{Summary}

Self-supervised learning in inverse problems where the forward operator is many-to-one is possible as long as the operators change across samples, or if we can assume that the signal distribution is invariant to a group of transformations, such as translations, rotations or scalings.

\Cref{tab: summary chap3} summarizes the assumptions behind the different families of self-supervised losses introduced in this chapter. The assumptions focus on the conditions on the forward operators rather than the noise, as all losses can be adapted to handle noise following the principles introduced in~\Cref{chap: noisy}. 

\begin{table}[h]
\centering
\begin{tabular}{|l|l|l|}
\hline
\textbf{Family} & \textbf{Assumptions} & \textbf{Refs.} \\ \hline
\Cref{eq: multisplit} & \begin{tabular}[c]{@{}l@{}}
\textbf{Necessary}\\ Multiple forward operators \\ 
$\E{\A}{\A^{\top}\A}$ is invertible \\ \hline
\textbf{Sufficient} (cf.~\Cref{prop:split})\\
$\E{\A_1|\A}{\E{\A|\A_1}{\A^{\top}\A}}$ is invertible \end{tabular}  & \cite{yaman_self-supervised_2020,daras_ambient_2024} \\ \hline
\Cref{eq: moi} & \begin{tabular}[c]{@{}l@{}}
\textbf{Necessary}\\
 Multiple forward operators \\ 
$\E{\A}{\A^{\top}\A}$ is invertible \\ 
$\exists f^*$ such that $\x \approx f^*(\A \x,\A)$ 
 \\  \hline  \textbf{Sufficient} (cf.~\Cref{theo: multiple op}) \\
$G$ generic  $\A$s  with $m > k + n/G$ \\
  \end{tabular}  & \cite{tachella_unsupervised_2022} \\ \hline
\Cref{eq: loss ES} & \begin{tabular}[c]{@{}l@{}}
\rev{\textbf{Necessary}}\\
\rev{Single forward operator $\A$} \\ \rev{$p(\x)$ invariant to transforms $\{\T_g\}_{g=1}^G$}  \\ 
\rev{$\sum_{g=1}^G \T_g\A^{\top}\A\T_g $ is invertible} \\ \hline
\rev{\textbf{Sufficient}}\\
\rev{$\E{\A_1|\A}{\sum_{g\in\mathcal{S}_{\A_1}}{\T_g^{\top}\A^{\top}\A\T_g}}$ is invertible} \\ \rev{where $\mathcal{S}_{\A_1} = \{ g : \A_1 \text{ is a split of }\A \T_g \}$} \end{tabular}  & \cite{sechaud2025equivariant} \\ \hline 
\Cref{eq: ei} & \begin{tabular}[c]{@{}l@{}} 
\textbf{Necessary}\\
    Single forward operator $\A$ \\ $\text{supp}(p_{\x})$ invariant to transforms $\{\T_g\}_{g=1}^G$  \\ 
$\sum_{g=1}^G \T_g\A^{\top}\A\T_g $ is invertible \\
$\exists f^*$ such that $\x \approx f^*(\A\x)$ \\ \hline
\textbf{Sufficient} (cf.~\Cref{theo: sufficient invariance})\\
Generic $\A$ with $m > 2k + n/G + 1$ \\ 
\end{tabular}  &  \begin{tabular}[c]{@{}l@{}} \cite{chen_equivariant_2021,chen_robust_2022} \\ \cite{scanvic_self-supervised_2023,wang_perspective-equivariant_2024} \end{tabular} \\ \hline
\end{tabular}
\caption{\textbf{Summary of losses for learning from incomplete measurements.} The first two losses rely on having measurements with multiple operators, whereas the last \rev{two assume} that the signal distribution is invariant to a group of transformations to obtain a set of virtual operators. 
} \label{tab: summary chap3}
\end{table}
\chapter{Finite dataset effects}
\label{chap: sample complexity}


So far we have seen how to build self-supervised losses that are unbiased estimators
of the (constrained or unconstrained) supervised loss in expectation. However, an important question remains, how good are the approximations with a finite number of samples?
In this chapter, we discuss existing answers, while noting that a full theoretical characterization of the sample complexity of self-supervised methods is not yet fully understood. We illustrate
the dependency of self-supervised methods on the dataset size in some practical scenarios, showing
that it typically scales similarly to the supervised setting. We also introduce some practical tools for dealing with finite datasets: applying the hold-out method to avoid under or overfitting, and starting from pretrained models to reduce the number of measurements required to obtain good performances.

\section{Hold-out method with self-supervised losses}\label{sec: hold out}

A standard practice in machine learning is to divide the dataset into non-overlapping training, validation and testing sets~\cite{hardt2022patterns}. The validation set plays a crucial role to avoid under or overfitting: if the validation loss remains always close to the training one, the model might not be expressive enough to fit all the data, and on the contrary, if the validation loss is bigger than the training one, the model is probably overfitting the training set. 

A similar practice can be done in the self-supervised setting, even if we do not have ground truth validation samples~\cite{batson_noise2self_2019,klug_analyzing_2023}. Since self-supervised losses serve as a proxy for the supervised loss, they can also be used on a validation set without ground truth to verify if the model is under or overfitting the data.

\begin{figure}[t]
\centering
\includegraphics[width=.8\textwidth,alt={Line plot showing self-supervised training and validation, and supervised test loss versus epochs (0 to 350) on a logarithmic scale. The y-axis represents loss values ($10^{0}$ to $10^{4}$). The blue solid line represents self-supervised training loss, the orange dashed line represents self-supervised validation loss, and the green line represents supervised test loss. Initially, all losses decrease and overlap. Around epoch 200, validation and test losses flatten while training loss continues to decrease, indicating overfitting. The region after epoch 200 is shaded red with the label  ‘‘overfitting’’.}]{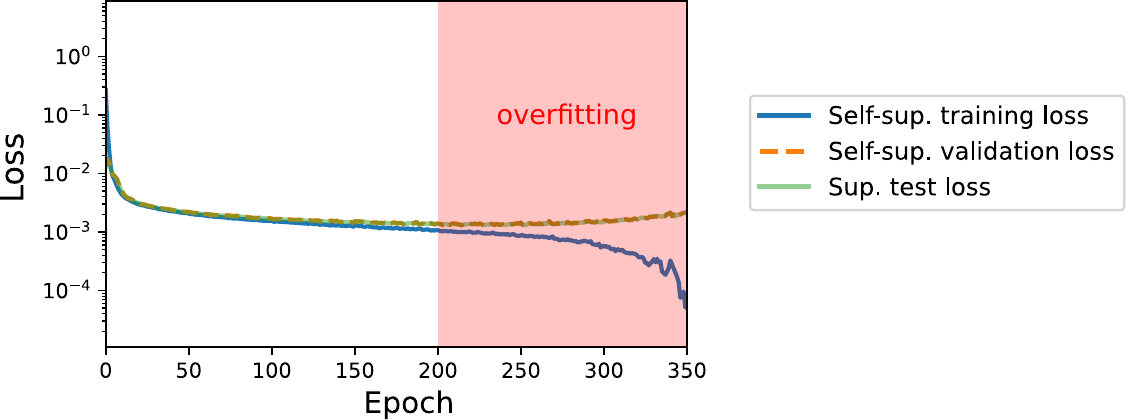}
\caption{Training, validation and test losses using a UNet network on a Gaussian denoising problem with $\sigma=0.1$. Training and validation are computed using the \Cref{eq:sure} loss presented in \Cref{chap: noisy}. Measurements are generated from the MNIST dataset, with 768 noisy images on the training set, 256 noisy images on the validation set, and 8192 (supervised pairs) of noisy and clean test images (not available in real-world settings). The self-supervised validation loss serves as a very accurate proxy for the supervised test error, and can be used to stop the training if the model starts overfitting the training data.} 
\label{fig: cross-validation}
\end{figure}

\Cref{fig: cross-validation} shows a self-supervised loss on the training and validation sets, and the supervised loss on the test set. The self-supervised validation loss tracks very well the performance on the test set, and can be used to stop the training when the model starts overfitting, i.e., when the gap between validation and training increases.

\section{Variance of the loss and its gradients}

A first step towards understanding how well self-supervised losses approximate the supervised counterparts is to study the variance of the losses and the variance of their gradients. A larger variance means that we will have a worse estimation of the supervised loss, leading to a decrease in performance in comparison with the supervised case.
\Cref{fig: variance grads} shows the average normalized mean squared error for loss and gradient estimates for \Cref{eq: noise2noise}
and \Cref{eq:sure} using a DRUNet denoiser~\cite{zhang2021plug} on $512\times 512$ patches corrupted by isotropic Gaussian noise using the Urban100 dataset. The experiment is repeated for a network with trained weights,
and one with randomly initialized weights. In both trained and untrained cases, the error with respect to the supervised loss is below $30\%$ for all noise levels. 
The gradient estimates are more accurate 
in the untrained model than the trained one. In the untrained case, all losses give errors of around $10\%$, as the 
loss is large, and the excess variance of self-supervised losses is negligible.
In the trained case, the loss is small, and the additional variance of self-supervised losses starts to play a role. 
Noise2Noise gives better estimates than SURE, as it relies on more information, i.e., two independent noisy copies of each image.

When averaging over $N$ noisy images, we should expect that the variances of all losses to decay as $1/N$ if the image samples are independent.
This can give us an idea of \emph{effective} sample complexity of a self-supervised method, by understanding how much larger $N$ needs to be to match the variance of the supervised dataset. In the case of $\sigma=.5$ in~\Cref{fig: variance grads}, we need approximately $\sqrt{10}\approx 3$ \JT{times more} noisy samples to obtain the variance with SURE compared to the supervised case.

\begin{figure}[h]
    \centering
    \includegraphics[width=.8\textwidth,alt={Four plots showing NMSE gradient and NMSE loss versus noise level sigma for trained and random models. For the trained model, SURE loss is larger than supervised and Noise2Noise losses, especially in gradients, while all losses increase with noise then plateau. For the random model, all losses overlap and change little with noise.}]{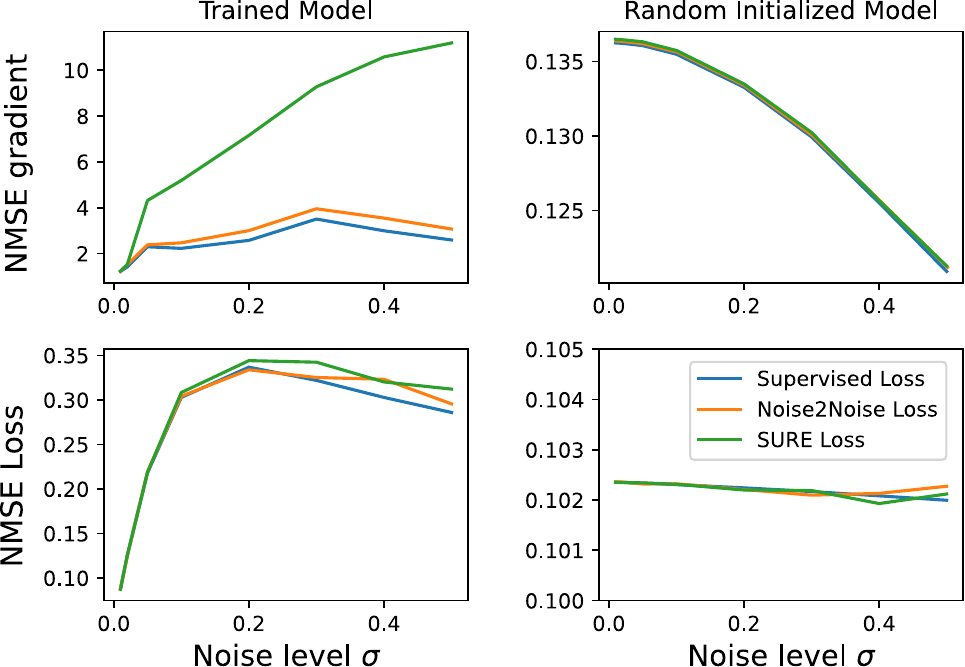}
    \caption{\textbf{Gradient approximation error of supervised and self-supervised losses.} Normalized mean squared error (NMSE) of the supervised, Noise2Noise 
    and Monte Carlo SURE losses, and of its gradients with respect to the network weights. 
    The experiments uses a DRUNet denoiser architecture evaluated on $512\times 512$ patches of the Urban100 dataset.}
    \label{fig: variance grads}
\end{figure}


\paragraph{Variance of the loss}

In the following analysis, we focus on the~\Cref{eq: noise2noise} loss for simplicity, but the intuition carries over to the other losses presented in the previous chapters. 
An in-depth analysis of the variance of the \Cref{eq:sure} loss can be found in~\cite{bellec_second-order_2021}.
Due to the independence of $\y_1$ and $\y_2$ conditional on $\x$, the variance of the Noise2Noise loss admits the following decomposition:
\begin{restatable}{proposition}{excessprop} \label{prop:excess variance}
Let $\y_1$ and $\y_2$ be two random independent random variables conditional on $\x$, such that $\E{\y_2|\x}{\y_2}=\x$. Then 
\begin{align*}
\V{\y_1,\y_2}{\lossarg{N2N}{\y_1,\y_2}} = \V{\x,\y_1}{\frac{1}{n}\| f(\y_1) - \x\|^2}  + \Delta 
\end{align*}
where the first term is the variance of the supervised loss and $\Delta$ is the additional variance with respect to the supervised case, which is given by 
\begin{align*}
  \Delta &=  \V{\x,\y_2}{ \|\y_2-\x\|^4}+ \frac{4}{n^2} \,  \E{\x}{\trace{\bPsi_{\x}\bSigma_{\x} }}
  \\ & -\frac{2}{n^2}\E{\x}{\E{\y_2|\x}{\|\y_2- \x\|^2(\y_2-\x)^{\top}}\Big(\E{\y_1|\x}{f(\y_1)}-\x\Big)}
\end{align*}
where $\bSigma_{\x} = \E{\y_2|\x}{(\y_2-\x)(\y_2-\x)^{\top}}$ is the covariance of the noisy target $\y_2$,
and $\bPsi_{\x} = \E{\y_1|\x}{(f(\y_1) - \x)(f(\y_1) - \x)^{\top}}$ is the error covariance for an image $\x$.
\end{restatable}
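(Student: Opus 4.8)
The plan is to split the Noise2Noise residual into a supervised error plus an independent, mean-zero target-noise term, and then propagate the variance through the law of total variance conditioned on $\x$. I would introduce the shorthand $\ubold := f(\y_1) - \x$ for the prediction error produced by the independent input copy and $\bepsilon := \y_2 - \x$ for the target noise, so that $n\,\lossarg{N2N}{\y_1,\y_2} = \|\ubold - \bepsilon\|^2 = \|\ubold\|^2 - 2\,\ubold^{\top}\bepsilon + \|\bepsilon\|^2$. The two structural facts I would lean on throughout are that, conditional on $\x$, the copies are independent ($\ubold \perp \bepsilon$) and the target is unbiased ($\E{\y_2|\x}{\bepsilon} = \boldsymbol{0}$); together these force every term carrying a single isolated factor of $\bepsilon$ to vanish in conditional expectation.

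First I would compute the conditional (within-$\x$) variance. Expanding $\V{\y_1,\y_2|\x}{\|\ubold\|^2 - 2\,\ubold^{\top}\bepsilon + \|\bepsilon\|^2}$ into its self- and cross-variances, the cross-covariance between $\|\ubold\|^2$ and $\ubold^{\top}\bepsilon$ vanishes because the surviving $\bepsilon$-factor is mean zero, and $\|\ubold\|^2$ is conditionally independent of $\|\bepsilon\|^2$. The one genuinely new second-order term is $\V{\y_1,\y_2|\x}{2\,\ubold^{\top}\bepsilon} = 4\,\E{\y_1,\y_2|\x}{(\ubold^{\top}\bepsilon)^2}$, which I would evaluate with the trace identity $(\ubold^{\top}\bepsilon)^2 = \trace{\ubold\ubold^{\top}\bepsilon\bepsilon^{\top}}$; conditional independence then factorizes this into $\trace{\bPsi_{\x}\bSigma_{\x}}$, which is exactly where the defined error- and noise-covariances enter. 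The remaining cross term $\Cov{\ubold^{\top}\bepsilon, \|\bepsilon\|^2 \mid \x}$ factorizes into $\E{\y_1|\x}{\ubold}^{\top}\E{\y_2|\x}{\|\bepsilon\|^2\bepsilon}$, a third moment of the target noise paired against $\E{\y_1|\x}{f(\y_1)} - \x$, and supplies the last term of $\Delta$.

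Next I would handle the between-$\x$ part. The conditional mean of the Noise2Noise loss exceeds that of the supervised loss by exactly $\trace{\bSigma_{\x}} = \E{\y_2|\x}{\|\bepsilon\|^2}$, so that
\[
\E{\y_1,\y_2|\x}{n\,\lossarg{N2N}{\y_1,\y_2}} = \E{\y_1|\x}{\|\ubold\|^2} + \trace{\bSigma_{\x}}.
\]
Reassembling via the law of total variance, the two pieces coming from $\|\ubold\|^2$ recombine into the supervised variance $\V{\x,\y_1}{\frac1n\|f(\y_1)-\x\|^2}$, while the within- and between-$\x$ contributions of $\|\bepsilon\|^2$ recombine into a single total variance of the target-noise magnitude $\|\y_2-\x\|^2$; the $\trace{\bPsi_{\x}\bSigma_{\x}}$ term and the third-moment cross term are then collected into $\Delta$. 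The main obstacle is the bookkeeping of which of the several quadratic pieces survive — in particular verifying the vanishing of $\Cov{\|\ubold\|^2, \ubold^{\top}\bepsilon \mid \x}$ and executing the trace factorization cleanly. The subtler point, and where care is most needed to match the stated three-term form of $\Delta$, is the residual between-$\x$ covariance between the prediction-error power $\E{\y_1|\x}{\|\ubold\|^2}$ and the noise power $\trace{\bSigma_{\x}}$: this coupling must be argued away (for instance it is absent whenever the total noise power $\trace{\bSigma_{\x}}$ does not depend on $\x$, as under homoscedastic noise), otherwise it contributes an extra term beyond those displayed.
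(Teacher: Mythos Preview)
Your approach—decomposing $n\,\lossarg{N2N}{\y_1,\y_2}$ into $\|\ubold\|^2 + \|\bepsilon\|^2 - 2\ubold^{\top}\bepsilon$ and tracking variances and cross-covariances—is exactly what the paper does (there with the shorthand $a=\frac{1}{n}\|\ubold\|^2$, $b=\frac{1}{n}\|\bepsilon\|^2$, $c=-\frac{2}{n}\ubold^{\top}\bepsilon$). The paper expands $\V{}{a+b+c}$ directly into self-variances plus pairwise covariances rather than routing through the law of total variance, but the conditional-independence arguments and the trace factorization $\E{\y_1,\y_2|\x}{(\ubold^{\top}\bepsilon)^2}=\trace{\bPsi_{\x}\bSigma_{\x}}$ are identical to yours.

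Your closing caveat is not a mere subtlety: it is a genuine omission that the paper's own proof shares. The paper writes the cross terms as $2\,\E{\x}{\Cov{a,b}+\Cov{a,c}+\Cov{b,c}}$ with the covariances taken \emph{conditionally} on $\x$, and then uses conditional independence to set $\Cov{a,b}=0$. But the correct expansion of the total variance needs the \emph{unconditional} covariance of $a$ and $b$, and by the law of total covariance this equals the expected conditional covariance (which does vanish) plus the between-$\x$ covariance of the conditional means (which does not). That surviving piece is exactly the coupling you flag, namely the covariance over $\x$ between $\E{\y_1|\x}{\frac{1}{n}\|f(\y_1)-\x\|^2}$ and $\frac{1}{n}\trace{\bSigma_{\x}}$, and it is silently dropped from both the stated $\Delta$ and the paper's derivation. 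Your suggested remedy—homoscedastic noise, so that $\trace{\bSigma_{\x}}$ is constant in $\x$—is indeed the cleanest sufficient condition under which the displayed three-term $\Delta$ is correct as written.
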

The proof is included in Appendix~\ref{app: proofs}. The last term in $\Delta$ is zero if the noise distribution is symmetric $\E{\y_2|\x}{\|\y_2- \x\|^2(\y_2-\x)}=\boldsymbol{0}$, or if the estimator $f$ is unbiased $\E{\y_1|\x}{f(\y_1)}=\x$ for all $\x$.
For example, in the simple case of targets with isotropic Gaussian noise, $\y_2 = \x + \sigma \bepsilon$ with $\bepsilon \sim \G{\boldsymbol{0}}{\Id}$
we have
$$
\Delta  = \frac{3}{n} \sigma^4  + \frac{4\sigma^2 }{n} \E{\x,\y_1}{\frac{1}{n}\|f(\y_1)-\x\|^2}
$$
which goes to zero as $n$ grows, as long as the mean squared error, $\E{\x,\y_1}{\frac{1}{n}\|f(\y_1)-\x\|^2}$, is approximately independent of $n$.

\paragraph{Variance of the gradients}
In order to compute the variance of the gradients, we need to consider a parameterization of the denoiser, $f_{\btheta}$, where $\btheta \in \R{p}$ are the trainable parameters of the denoiser (e.g., the network weights). 
The gradients could vary significantly even when the self-supervised loss has very small variance, if the denoiser is highly sensitive to changes in the parameters.
The gradients of the \Cref{eq: noise2noise} loss can be decomposed into two independent quantities as
\begin{align} \label{eq: additional var}
    \der{\mathcal{L}_{\text{N2N}}}{\btheta} (\y_1,\y_2,f_{\btheta}) &\propto \frac{1}{n}\der{f_{\btheta}}{\btheta} \left( f_{\btheta}(\y_1) - \y_2 \right) \\
    &\propto \underbrace{\frac{1}{n}\der{f_{\btheta}}{\btheta} \left( f_{\btheta}(\y_1) - \x \right)}_{\text{Supervised gradient}} - \underbrace{\frac{1}{n}\der{f_{\btheta}}{\btheta} \left(\y_2 - \x \right)}_{\text{Additional noise}}\label{eq: additional var2}
\end{align}
where $\der{f_{\btheta}}{\btheta}\in \R{p \times n}$ is the Jacobian of the denoiser evaluated at $\y_1$.
The first term corresponds to the gradient of the supervised loss and the second term comprises the additional randomness due to the use of a noisy target.
Since $\y_1$ and $\y_2$ are independent conditional on $\x$, the variance of the loss gradient is given by
\begin{align*} 
 &\V{\y_1,\y_2}{\| \der{\mathcal{L}_{\text{N2N}}}{\btheta} (\y_1,\y_2,f_{\btheta}) \|^2} \\
 &=\V{\y_1,\x}{\|\frac{1}{n}\der{f_{\btheta}}{\btheta} \left( f_{\btheta}(\y_1) - \x \right)\|^2} + \E{\y_1,\y_2,\x}{\|\frac{1}{n}\der{f_{\btheta}}{\btheta} \left(\y_2 - \x \right)\|^2} \\
 &= \underbrace{\V{\y_1,\x}{\|\frac{1}{n}\der{f_{\btheta}}{\btheta} \left( f_{\btheta}(\y_1) - \x \right)\|^2}}_{\text{Variance of supervised loss}} + \underbrace{\frac{1}{n^2}\E{\x}{\trace{\bSigma_{\x}\, \E{\y_1|\x}{\der{f_{\btheta}}{\btheta}^{\top}\der{f_{\btheta}}{\btheta}}}}}_{\text{Additional variance}}
\end{align*}
where the second line uses the decomposition in~\Cref{eq: additional var2} and that the additional noise has zero mean, and the third line defines $\bSigma_{\x} := \E{\y_2|\x}{(\y_2-\x)(\y_2-\x)^{\top}}$ as the covariance of the noisy target $\y_2$.
For example, in the case of targets with isotropic Gaussian noise, $\y_2 = \x + \sigma \bepsilon$ with $\bepsilon \sim \G{\boldsymbol{0}}{\Id}$
we have
$$
\text{Additional variance} = \sigma^2 \, \E{\y_1}{\|\frac{1}{n}\der{f_{\btheta}}{\btheta}\|_F^2}
$$
where the second term is the Frobenius norm of the Jacobian of the network with respect
to its parameters.
The additional variance introduced by the noisy target $\y_2$ depends on the noise level $\sigma^2$, and the average sensitivity of the network's output to changes in the input weights.

\section{Gap with supervised learning}
A key question when comparing supervised and self-supervised methods, is how self-supervised methods compare with supervised counterparts as a function of the amount of data we have for training, which amounts to computing the following gap:
\begin{equation*}  \label{eq: gap}
   \text{gap}(N) =  \min_{\btheta} \frac{1}{N}\sum_{i=1}^N \frac{1}{n}\|f_{\btheta}(\y_{1,i})- \y_{2,i}\|^2 -  \underbrace{\min_{\btheta} \E{\x,\y_1}{ \frac{1}{n} \|f_{\btheta}(\y_{1})- \x\|^2}}_{\approx\text{MMSE}}
\end{equation*}
Quantifying this gap is generally a difficult problem, since the results can be highly dependent on the data distribution, the parameterization of the estimator, and the specific learning algorithm used for estimating the parameters. Understanding the sample complexity of learning methods is an active area of research~\cite{hardt2022patterns} with many open questions~\cite{zhang2021understanding}. In particular, a challenging problem is to obtain meaningful bounds that are not highly dependent on the number of parameters of the model, which is typically very large in deep networks. For example, a line of work~\cite{hardt2016train,klug_analyzing_2023} studies the generalization error of stochastic gradient descent methods, which are the most popular optimization methods for training deep networks, obtaining bounds that are approximately independent of the parameter count.

\begin{figure}[t]
    \centering
    \includegraphics[width=.9\textwidth,alt={Four plots showing PSNR and MSE-MMSE gap versus dataset size N,  for noise levels $\sigma=0.1$ and $\sigma=0.2$. For both noise levels, PSNR improves with larger datasets, with supervised (blue) slightly outperforming Noise2Noise (orange) and SURE (green). MSE–MMSE gap decreases as dataset size grows, with supervised consistently achieving the lowest error but showing a similar dependency on $N$ than the self-supervised methods. A red dashed line indicates the theoretical $\sigma^2/\sqrt{N}$ trend.}]{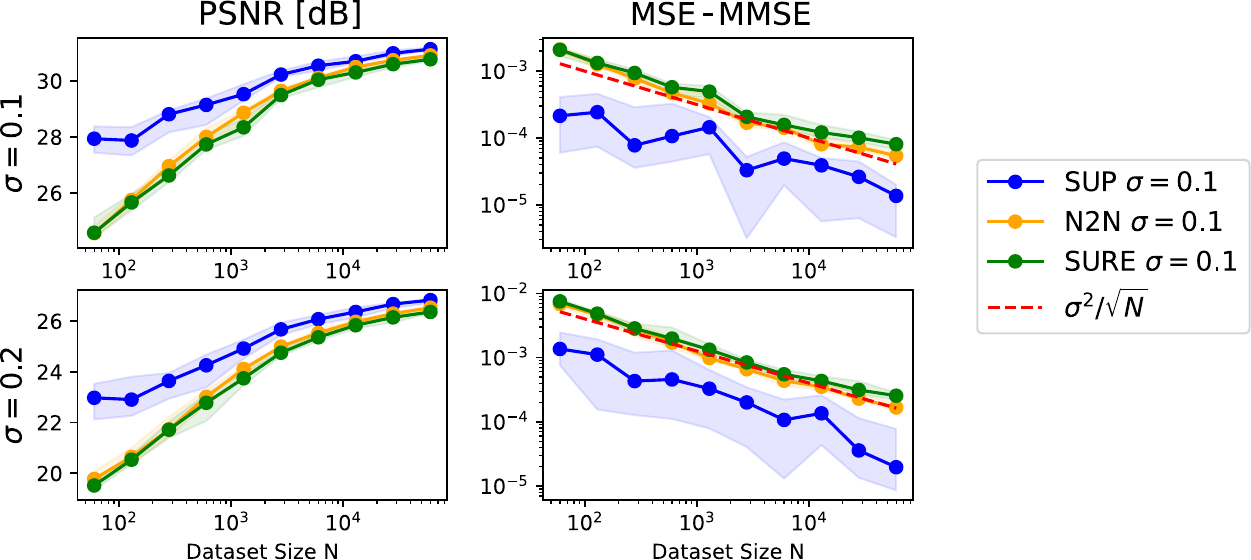}
    \caption{\textbf{Gap of Noise2Noise and SURE w.r.t. supervised learning as a function of dataset size for MNIST Gaussian denoising using a U-Net denoiser.} 
    On the left, we show the PSNR obtained by each learning method, and on the right we show the test mean squared error gap compared to the supervised baseline using the full dataset. The experiment is repeated 15 times for each dataset size $N$ and each of the two different noise levels $\sigma_1=.1$ and $\sigma_2=.2$, with shaded areas denoting the $90\%$ intervals across repetitions.  The optimality gap follows approximately $\sigma^2/\sqrt{N}$.}
    \label{fig: sample complexity}
\end{figure}

Here, we provide an empirical evaluation of the sample complexity, using the (self-supervised) hold-out method described in \Cref{sec: hold out} to obtain the best model for each dataset size.
\Cref{fig: sample complexity} shows the empirical gap for the networks trained with Noise2Noise or SURE on an MNIST Gaussian denoising problem for different dataset sizes. The observed gap follows the asymptotic behaviour $\text{gap}(N) \propto \sigma^2/\sqrt{N}$. How good is this rate? We can gain some intuition by comparing it with the setting of a trivial signal distribution consisting of a single signal: in this case, we could estimate the signal by simply averaging $N$ noisy realizations, and we would get the standard rate for estimating the mean of a Gaussian distribution of variance $\sigma^2$, that is $\sigma^2/N$. Thus, the cost of dealing with non-trivial signal distributions is a factor of $\sqrt{N}$.

\section{Fine-tuning and test-time adaptation}
While most self-supervised losses presented in this manuscript  approximate the supervised loss and can be used to train a reconstruction network from a random initialization without any ground truth references, they can also be used for fine-tuning a pretrained network on new measurement data, which might differ from the supervised dataset used for pretraining. This procedure is often referred to as test-time adaptation~\cite{mohan2021adaptive}. 

Starting from a pretrained model can significantly reduce the number of samples and training time needed to obtain good results compared with a randomly initialized model (e.g., often a couple of samples suffices). Moreover, fine-tuning can significantly improve the performance of the pretrained model on out-of-domain measurement data~\cite{terris_reconstruct_2025}.

\chapter{Extensions and open problems}\label{chap: perspectives}
The ideas of self-supervised learning are already making a considerable impact in imaging and sensing, with applications  emerging in MRI \cite{millard_theoretical_2023}, microscopy~\cite{bepler_topaz-denoise_2020}, and remote sensing~\cite{dalsasso_as_2022} to name but a few. 
In this final chapter, we review some ongoing work that is exploring extensions of the self-supervised learning framework for inverse problems and present some of the open problems in the field.

\section{Non-linear inverse problems}

Many real-world inverse problems are non-linear, such as quantized sensing~\cite{jacques_robust_2015}, phase retrieval~\cite{shechtman_phase_2015}, and non-linear problems associated to partial differential equations, such as the inverse scattering problem~\cite{soubies_efficient_2017}.
While, in principle, most of the self-supervised losses presented in~\Cref{chap: multioperators} can be applied with non-linear forward models, most of the theoretical analyses associated with these losses are restricted to the linear case and the development of a general theoretical framework for nonlinear operators is an open problem. 

Another important challenge in non-linear settings is that self-supervised losses involve the evaluation of the non-linear operator $\A$. This can both be computationally expensive, and lead to more complex loss landscapes including many local minima, compared to the supervised loss that does not involve the operator $\A$. We believe that these problems introduce new challenges, such as exploring relaxations of $\A$ in the self-supervised loss~\cite{tachella_learning_2023,sechaud_equivariance-based_2024} and problem-specific optimization algorithms.

Hu et al.~\cite{hu_spice_2022} show that the (multi-operator) splitting loss can be used in the context of accelerated MRI with unknown coil maps, which can be seen as a bilinear inverse problem.

The equivariant imaging approach has also been extended to declipping problems given by $\A(\x)=\eta(\x)$ where $\eta:\R{} \to [-1,1]$ is an elementwise clipping operator. In this case, learning beyond the clipping threshold is possible if we can assume that the signal model is invariance to amplitude scaling, $\T_g = g \Id$ with $g>0$, as the forward operator is not equivariant to these transformations~\cite{sechaud_equivariance-based_2024}.

The model identification theory in~\Cref{sec: model identification} has been extended to  quantized inverse problems~\cite{tachella_learning_2023}, in the extreme case where every measurement is quantized to a single bit, a problem that can be written as $\A_g(\x)=\text{sign}(\Q\x)$ where $\{\Q_g\in\R{m\times n}\}_{g=1}^G$ are linear operators. In this case, exact identification of the support of the signal distribution is impossible even if the set has low dimension $k\ll n$, however it is still possible to learn an approximation up to a global error of order $\mathcal{O}(\frac{k+n/G}{m}\log\frac{nm}{k+n/G})$.

\section{Towards large scale self-supervised imaging}

Deep learning imaging solutions rely on the substantial computing power offered by modern GPUs. Although this technology is advancing at a rapid pace, the current memory capacities of consumer GPUs limit the size of imaging inverse problems that can be handled. For example, this is the case for challenging high-dimensional medical imaging such as, extreme scale 3D or 4D (3D $+$ time) CT and MRI imaging~\cite{kellman_memory_efficient_imaging_2020,ong_extreme_2020,rudzusika_3d_2024}, as well as applications like ptychography in electron microscopy~\cite{ophus_4D-STEM_2019}. Such problems can generate as much as 10s of gigabytes of measurements and/or image data \emph{per reconstruction}, with the size of both image and measurements easily surpassing the memory capabilities of most GPUs.

Training deep learning solutions for such problems therefore faces additional computational complications. This is made all the more challenging when looking to use self-supervised learning techniques as the associated loss functions require the calculation of (and backpropagation through) the forward operator, $\A$, resulting in both large computation and memory usage in training and possible also in evaluation. The same issue occurs when training unrolled deep learning solutions in the supervised scenario, e.g.,~\cite{kellman_memory_efficient_imaging_2020,rudzusika_3d_2024}, where various approaches have been considered to mitigate these computational issues, e.g., forward or reverse recalculation, and gradient checkpointing for reduced storage during backpropagation, and data splitting methods which have traditionally been used in model-based image reconstruction~\cite{erdogan_ordered_subsets_1999,tang_practicality_of_stochastic_optimization_2020}.

Understanding the best approaches for tackling these issues in self-supervised learning has received much less attention and is a fruitful area for future research. One exception is~\cite{kosomaa_simulator-based_2023}, where the authors train an image reconstruction network for low-dose 3D helical CT in a self-supervised manner. Their core approach was to use a measurement splitting technique similar in spirit to the Noise2Inverse method~\cite{hendriksen_noise2inverse_2020} that only involved partial calculation of the forward operator at each iteration (they also implemented a range of other computational tricks such as gradient checkpointing, customised CUDA modules, etc.). 

While Kosoma et al.~\cite{kosomaa_simulator-based_2023} focused on an invertible imaging operator, it should be straightforward to extend these ideas to non-invertible mutliple operators using the related splitting ideas presented in \Cref{chap: multioperators}. Another interesting direction is exploring self-supervised learning solutions that explicitly use reconstruction networks based on stochastic optimization~\cite{tang_stochastic_primal_dual_deep_unrolling_2022}.

\section{Robust solutions and partially defined models}

Another practical issue that is important to address in real world inverse problems is the accuracy with which we can define the observation model. As George Box said “all models are wrong, but some are useful.” In any statistical learning or inference scenario it is important to capture as accurately as possible the underlying relationships between the variables, but also to ensure that the solutions take account of any unknown components of the forward model and are robust to any approximations/imperfections. This is particularly important in self-supervised learning, where the surrogates for the supervised loss rely heavily on the additional information from the inverse problem.

In \Cref{chap: noisy}, we have already seen instances of this that took account of partially defined noise models - constraining the class of estimators to be learned, e.g., \cite{krull_noise2void_2019,batson_noise2self_2019,tachella_unsure_2025}. This typically leads to sub-optimal solutions that nevertheless can outperform "optimal” solutions with misspecified assumptions. 

Extending these ideas to partially defined or misspecified forward models, either theoretically and algorithmically, is much more challenging and is an interesting direction for future research. For example, partially defined models include the case when there are unknown calibration parameters that must be estimated. One approach to this is to treat the unknown parameters as additional unknowns within the imaging problem that can either be estimated or marginalised out as part of the reconstruction process, e.g. self-supervised estimation of coil sensitivity maps in MRI~\cite{hu_spice_2022}, or self-supervised blind deblurring solutions~\cite{xia2019training}.

A common source of approximation within the forward model is the typical digital representation of the continuous image as a finite number of pixels/voxels. Acquisition systems are often designed to avoid introducing aliasing within such digital representations which in turn can induce correlation within the image noise which in the self-supervised learning setting must be treated with care, e.g., in SAR processing~\cite{dalsasso_as_2022}.

\section{Beyond the $\ell_2$ loss} 
Most of the losses presented in this manuscript aim at approximating the supervised $\ell_2$ loss. This loss has the benefit of having a simple decomposition 
\begin{equation*}
    \|f(\y) - \x\|^2 = \|f(\y) - \y\|^2 - 2 \, f(\y)^{\top}(\y - \x) + \const 
\end{equation*}
 where the second term captures the difference between the supervised case with simple measurement consistency. As we have seen in~\Cref{chap: noisy}, this term can be handled using independent noise realizations as in~\Cref{eq: noise2noise} and~\Cref{eq: R2R}, Stein's lemma as in~\Cref{eq:sure}, or blind-spot networks as in~\Cref{eq:cross}. While similar expressions also hold for Bregman divergences~\cite{efron_estimation_2004,oliveira_unbiased_2023}, this decomposition does not apply on other popular losses, such as the $\ell_1$ or $\ell_0$ loss.

In Noise2Noise~\cite{lehtinen_noise2noise_2018}, general $\ell_p$ losses are proposed for handling noise distributions with non zero-mean noise, such as the $\ell_1$ loss for random text removal or the $\ell_0$ loss for salt-and-pepper noise. Recalling that Noise2Noise relies on independent noisy pairs $(\y_1,\y_2)$, training on an $\ell_0$ loss leads (in expectation) to the mode estimator 
$$\text{Mode}\{\y_2|\y_1\} = \argmax_{\y_2} p(\y_2|\y_1).$$
In general, $\text{Mode}\{\y_2|\y_1\} \neq \text{Mode}\{\y_2|\x\}$, even if $\y_2$ and $\y_1$ are independent given $\x$. However, under the assumption that the (posterior) distribution of $\x$ given $\y_1$ is heavily concentrated, we have $p(\y_2|\y_1) \approx p(\y_2|\x)$, and thus $\text{Mode}\{\y_2|\y_1\} \approx \text{Mode}\{\y_2|\x\} = \x $. A similar argument holds for the $\ell_1$ loss and noise distributions whose median is given by $\x$.
While this approximation provides good empirical results~\cite{lehtinen_noise2noise_2018}, a better understanding under which conditions such approximations are reasonable, and determining how to extend \Cref{eq:sure} and other noise distribution aware self-supervised losses to general $\ell_p$ losses, are interesting directions of future research.

\section{Uncertainty quantification and generative modelling}
Most of the focus in this review has been on learning a good reconstruction mapping or denoiser, often targeting the conditional mean of the inverse problem, $\E{\x|\y}{\x}$. However, in many scenarios it is also important to quantify the uncertainty for a given estimator, so that the image estimate can be used with confidence in downstream analysis. 

Most self-supervised losses presented in this manuscript serve as estimators of the supervised loss, and can thus be used to estimate reconstruction errors at test time. For example, \Cref{eq:sure} or \Cref{eq: R2R} can be used to estimate the mean squared error, and higher-order extensions such as SURE for SURE~\cite{bellec_second-order_2021} can be used to quantify the uncertainty of this error estimate. It is also possible to use extensions of \Cref{eq: tweedie} to high order moments of the posterior distribution. For example, in the Gaussian denoising scenario, Manor and Michaeli~\cite{manor_posterior_2024} use these extensions to estimate the principal components for the covariance of the posterior distribution, $p(\x|\y)$, directly from the MMSE estimator (which itself can be estimated in a self-supervised manner). This nicely augments the estimated denoised images with the principal directions of uncertainty.
When dealing with incomplete data, \Cref{eq: ei} can be used to quantify the reconstruction error in the nullspace of the forward operator~\cite{tachella_equivariant_2024}.

The extent to which these ideas could be applied to other self-supervised learning solutions in this review is an interesting open problem.


\paragraph{Generative models}
While predicted error covariance of point estimates provide an efficient and compact characterization of the uncertainty, in certain areas of imaging science it is desirable to be able to explore the full posterior distribution of the imaging problem for downstream analysis, e.g., to characterize plausible solutions when ambiguities exist, or to test statistical hypotheses. 

A popular machine learning solution in such circumstances is to learn a generative model, such as VAEs~\cite{kingma_vae_2014}, GANs~\cite{goodfellow_gan_2014} or diffusion models~\cite{daras_survey_2024} that can act as a stochastic simulator and provide samples from the posterior distribution, $p(\x|\y)$. As such, generative models have become a popular approach for solving imaging inverse problems. However, in general such solutions currently rely on a \emph{pre-trained} generative models that have been trained on existing ground truth data.  An interesting research direction is therefore to what extent generative models can be learned in a purely self-supervised manner. Some progress has already been made on this. For example, as discussed in~\Cref{chap: noisy,chap: multioperators}, Prakash et al.~\cite{prakash_fully_2020,prakash_interpretable_2021} have developed VAEs for the Gaussian denoising problem, while GANs~\cite{bora_ambientgan_2018} and diffusion models~\cite{daras_ambient_2024,rozet_learning_2024} have been proposed that can be trained from noiseless but incomplete measurements. However, these methods generally rely on low-noise measurements from multiple operators, and it remains an open question as to whether generative models could be trained with noisy measurements taken from a single ill-posed measurement operator in a similar manner to~\Cref{eq: ei}. 




\section{Sample complexity}

Most of the theoretical analysis of self-supervised learning imaging solutions are either geometric~\cite{tachella_learning_2023} or focus on the asymptotic properties of the learning problem considering the behaviour of expected values. However, these do not indicate how hard the problem is statistically in terms of the number of measurement training samples required to achieve a good solution. This essentially comes down to how accurately we can approximate the expected risk from the empirical risk. 

We discussed this briefly in \Cref{chap: sample complexity} with respect to Noise2Noise and showed empirically that the gap between supervised learning and the equivalent (in expectation) self-supervised learning strategy scales approximately as $\mbox{gap}(N) \propto \sigma^2/\sqrt{N}$. However, analyzing sample complexity, even in the supervised case, is a challenging problem~\cite{hardt2022patterns}. Most existing results typically make significant simplifying assumptions, such as that the learning problem is convex~\cite{hardt2022patterns}, or that the reconstruction estimator is linear~\cite{klug_analyzing_2023}. There is also the question of how the sample complexity behaves as a function of the image size. Here, we have reason to be optimistic that we may be able to benefit from a blessing of dimensionality~\cite{donoho_curses_and_blessings_2000} associated with the typical high dimensionality of images, in the similar manner to how we can get accurate estimates of the SURE loss using only a single Monte Carlo sample~\cite{ramani_monte-carlo_2008}.

Understanding the nature of this supervised-self-supervised gap would help imaging practitioners to understand whether it is better to try to collect a small amount of supervised training data or whether the same result can be achieved through using a larger collection of measurement data that is usually much easier to acquire.



\section{Choosing the right self-supervised method}

In this article we have covered various self-supervised techniques offering capabilities ranging from denoising to solving ill-posed imaging inverse problems. We have also seen that the same problem can sometimes be solved through judicious choice of network architecture (but avoiding appealing to more nebulous architectural inductive biases) or through a cleverly designed loss function. However, we have refrained from explicitly promoting one solution over another. This may well leave the practitioner somewhat frustrated with a lack of guidance as to what is the right solution for a given task. 

Depending on the scenario various forms of information may be available to the practitioner, e.g., in terms of the nature of and information about the measurement noise, or the number and type of measurement operators available for creating training data. In some instances this naturally selects subsets of relevant algorithms and techniques and these have been highlighted in the tables at the end of \Cref{chap: noisy,chap: multioperators}. However, it is important to also note that not all relevant algorithms make the same use of the available information. For example, if presented with a problem where pairs of noisy realizations of the same signal are available for training and testing, one might be tempted to naturally reach for the Noise2Noise algorithms. However, if one knows something about the statistical noise model, even if this is only partial, it may be better to simply average the multiple realizations (thereby gaining 3~dB of SNR) and applying a different technique that incorporates additional statistical information\footnote{For example, Tachella et al.~\cite{tachella_unsure_2025} show that one can obtain better performance using UNSURE than Noise2Noise in a cryogenic electron microscopy denoising, where the noise model is approximately known.}.

The fact that many of the reviewed techniques exploit different information also opens up the opportunity to explore hybrid approaches. For instance, in the example above, rather than simply averaging the noise image pairs and then applying a single image technique that incorporated further knowledge of the noise model, one could explicitly construct a version that can exploit pairs of images\footnote{Note having access to pairs of noisy images immediately provides an estimate for the SNR of the signal.}. Similarly, one is not restricted to incorporating invariance properties into the inverse problem only when there is a single measurement operator. Indeed, empirical evidence suggests that exploiting multiple sources of information within the measurements tends to only make things better~\cite{wang_benchmarking_ssl_2025}.

In many real world settings, the most challenging aspect for the practitioner is to know the accuracy of the underlying forward model that is being exploited to enable self-supervised learning, as we have discussed above. This is extremely important and should probably be the first thing for the practitioner to consider when selecting the right algorithm as better defined forward models typically offer better performance but at a price of being more sensitive to any misspecifications.

\section*{Acknowledgements}
The authors are grateful for discussions with Dongdong Chen, Laurent Jacques, Marcelo Pereyra, Lo\"{i}c Denis, Andrew Wang, Victor Sechaud, Jérémy Scanvic, César Caiafa, Thomas Davies, and Brett Levac. J. Tachella acknowledges support by the French National Research Agency (ANR-23-CE23-0013). The authors thank ENS Lyon for supporting the research visit of Mike Davies as invited professor.

\appendix
\chapter{Noisier2Noise and R2R equivalence} \label{app: noisier2noise}

In this appendix, we show the asymptotic equivalence between the Noisier2Noise~\cite{moran_noisier2noise_2020} and Recorrupted2Recorrupted~\cite{pang_recorrupted--recorrupted_2021}  losses for the Gaussian denoising case, that is $\y=\x+\epsilon$ with $\bepsilon \sim \G{\boldsymbol{0}}{\bSigma}$.
Noisier2Noise proposes to train a network 
\begin{equation}
    \loss{Noisier2Noise} = \E{\y_1|\y}{\|f(\y_1)-\y\|^2}
\end{equation}
where $\y_1 = \y + \tau \bomega $ with $\bomega \sim \G{\boldsymbol{0}}{\bSigma}$ and $\tau>0$. The minimizer of this loss in expectation is 
\begin{align*}
    f^{*}(\y_1) &= \E{\y|\y_1}{\y}  \\ 
    &= \E{\x,\bepsilon|\y_1}{ \frac{\tau}{1+\tau} \x +\frac{1}{1+\tau}\x + \bepsilon) } \\
    &= \frac{\tau}{1+\tau} \E{\x|\y_1}{\x} + \frac{1}{1+\tau}\Big(\E{\x|\y_1}{\x} + \E{\bepsilon|\y_1}{\bepsilon} + \frac{1}{1+\tau} \E{\bepsilon|\y_1}{\bepsilon}\Big) \\
    &= \frac{\tau}{1+\tau}\E{\x|\y_1}{\x} + \frac{1}{1+\tau}\Big(\E{\x|\y_1}{\x} + \E{\bepsilon|\y_1}{\bepsilon} + \tau\E{\bomega|\y_1}{\bomega}\Big) \\
    & = \frac{\tau}{1+\tau}\E{\x|\y_1}{\x} + \frac{1}{1+\tau}\E{\x|\y_1}{\y_1} \\
    & = \frac{\tau}{1+\tau}\E{\x|\y_1}{\x} + \frac{1}{1+\tau}\y_1
\end{align*}
where the third line uses that $\E{\bepsilon|\y_1}{\x}= \E{\bomega|\y_1}{\bomega}$ since $\bepsilon$ and $\bomega$ are iid. This requires knowing the distribution of the noise for this result to hold.
\begin{align} 
    f^{\text{test}}(\y_1) &=  \frac{1+\tau}{\tau} f^{*}(\y_1) -  \frac{1}{\tau} \y_1 \\
    &= \E{\x|\y_1}{\x} \label{eq: noisier2noise min}
\end{align}
The  \Cref{eq: R2R} loss is defined as 
\begin{equation}
    \loss{R2R} = \E{\y_1|\y}{\|f(\y_1)-\y_2\|^2}
\end{equation}
where $\y_1 = \y + \tau \bomega $ and $\y_2 = \y - \frac{1}{\tau}\bomega $, with $\bomega \sim \G{\boldsymbol{0}}{\bSigma}$ and $\tau>0$. As we show in~\Cref{chap: noisy}, this loss is an unbiased estimator of the supervised $\ell_2$ loss with input $\y_1$, and thus its minimizer is $f^{*}(\y_1) = \E{\x|\y_1}{\x}$ which is the same as the Noisier2Noise test time function in \Cref{eq: noisier2noise min}.

\chapter{Identification of moments} \label{app: moments}
%

Recovering the signal distribution, $p_{\x}$, from the measurement distribution, $p_{\y}$, can be seen as an inverse problem in infinite dimensions defined by the forward problem
\begin{align} \label{eq: inf inv}
   p_{\y}(\y) = \int_{\x\in\signalset} p(\y|\x) p_{\x}(\x) d\x.
\end{align}
Identifying $p_{\x}$ from $p_{\y}$ is possible if the noise distribution has a nowhere zero characteristic function (see \Cref{sec: learning from noisy py}), and, in the case of incomplete observations from multiple operators, if the support of $p_{\x}$ is low-dimensional (see~\Cref{sec: model identification}).

However, in some cases, we might not be able to identify the full distribution $p_{\x}$, but we can still find some of its higher-order moments, or moments of the posterior distribution $p(\x|\y)$.

\begin{example}
Assume a Bernouilli noise model $\y|\x\sim \text{Ber}(\x)$ where
$p(\y|\x)=\prod_{i=1}^{n} x_i^{y_i}(1-x_i)^{1-y_i}$. Since measurements are binary, we identify at most $2^n-1$ different moments of $p_{\x}$ associated with all possible inputs of $p_{\y}$, which are given by
$\E{\x}{\prod_{i\in \mathcal{I}} x_i}$ where $\mathcal{I}$ is an arbitrary choice of indices in $\{1,\dots,n\}$.
\end{example}

Since we can compute any moment of $p_{\y}$, using the law of total expectation we have that
\begin{align} \label{eq: g(y) moments}
    \E{\y}{g(\y)} &= \E{\x}{\E{\y|\x}{g(\y)}} \\
    &= \E{\x}{r(\x)}
\end{align}
where we defined $r(\x):=\E{\y|\x}{g(\y)}$ for some function $g:\R{m}\mapsto \R{}$. 

In the case where $p_{\y}$ is continuous and differentiable, we can compute moments of $p_{\x}$ by differentiating \Cref{eq: inf inv} as
\begin{align} \label{eq: r(x) moments}
   \nder{p(\y)}{y_i}{k} &= \E{\x}{\nder{p(\y|\x)}{y_i}{k}} \\
   &= \E{\x}{\tilde{r}_{i,k}(\x,\y)}
\end{align}
for any $k\geq 0$ and $i=1,\dots,n$, where we defined $\tilde{r}_{i,k}(\x,\y):=\nder{p(\y|\x)}{y_i}{k}$.

\begin{example}
Consider a Gaussian noise model where we have
$p(\y|\x)=\frac{1}{(2\pi\sigma^2)^{n/2}}e^{-\frac{\|\x-\y\|^2}{2\sigma^2}}$, such that
$\tilde{r}_{i,1}(\x,\y)= \frac{y_i-x_i}{\sigma^2}p(\y|\x)$. 
Using this result, we obtain
\begin{align}
    \E{\x}{\frac{y_i-x_i}{\sigma^2}p(\y|\x)} &= \frac{y_i}{\sigma^2} p(\y)-\frac{1}{\sigma^2}\E{\x}{x_ip(\y|\x)}  \\
   \der{p(\y)}{y_i} &= \frac{y_i}{\sigma^2} p(\y)-\frac{1}{\sigma^2}\E{\x|\y}{x_i} p(\y) \\
      \sigma^2 \der{\log p(\y)}{y_i} &= y_i - \E{\x|\y}{x_i} 
\end{align}
which is the well-known \Cref{eq: tweedie} formula, $\mathbb{E}\{\x|\y\} = \y + \sigma^2 \score$, the same formula that we derived
in \Cref{subsec: sure} as the minimizer (in expectation) of the \Cref{eq:sure} loss. Higher order derivatives can be used
to estimate higher posterior moments~\cite{manor_posterior_2024}, such as the posterior variance which is
equivalent to the minimum mean square error:
$$\text{MMSE} = \sigma^2 \left( \frac{1}{n}\sum_{i=1}^n \nder{\log p_{\y}}{y_i}{2}(\y)\right).$$

\end{example}

    
\chapter{Additional proofs} \label{app: proofs}

\begin{proposition} \label{prop: minimizer l2 loss}
Let $\x \in \R{n}$ and $\y \in \R{n}$ be two random variables following the joint distribution $p(\x,\y)$. The minimizer of the following $\ell_2$ loss 
\begin{equation}
    f^{*}(\y) = \argmin_f \; \E{\x,\y}{\|f(\y) - \x\|^2} 
\end{equation}
is given by 
\begin{equation}
    f^{*}(\y) = \E{\x|\y}{\x}.
\end{equation}
\end{proposition}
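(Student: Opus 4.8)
The plan is to exploit the tower property of conditional expectation in order to reduce the global minimization over functions $f$ to a pointwise minimization at each fixed value of $\y$. First I would rewrite the objective using the law of total expectation as
\[
\E{\x,\y}{\|f(\y)-\x\|^2} = \E{\y}{\E{\x|\y}{\|f(\y)-\x\|^2}},
\]
observing that for a fixed $\y$ the quantity $f(\y)$ is merely a deterministic vector $c\in\R{n}$ on which the inner conditional expectation depends. Because the value of $f$ at one point $\y$ is unconstrained by its value at any other point, minimizing the outer expectation is equivalent to minimizing the inner conditional expectation separately for (almost) every $\y$.

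Next I would carry out this pointwise minimization. Writing $c = f(\y)$, the inner objective is $g(c)=\E{\x|\y}{\|c-\x\|^2}$, and I would use the bias--variance style decomposition
\[
\E{\x|\y}{\|c-\x\|^2} = \|c - \E{\x|\y}{\x}\|^2 + \E{\x|\y}{\|\x - \E{\x|\y}{\x}\|^2},
\]
which follows by adding and subtracting $\E{\x|\y}{\x}$ and noting that the cross term vanishes. The second term is independent of $c$, so $g$ is minimized precisely when the first term is zero, i.e. when $c = \E{\x|\y}{\x}$; since the first term is a squared norm, this minimizer is unique, giving $f^{*}(\y)=\E{\x|\y}{\x}$. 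Equivalently, one could expand $g(c)=\|c\|^2-2c^{\top}\E{\x|\y}{\x}+\const$, set the gradient $2c-2\E{\x|\y}{\x}$ to zero, and invoke strict convexity via the positive-definite Hessian $2\Id$ to conclude uniqueness.

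The main obstacle here is one of rigour rather than difficulty: justifying the interchange that permits pointwise optimization. Concretely, one must argue that the candidate function $\y\mapsto\E{\x|\y}{\x}$ is admissible (measurable) and that no joint coupling across distinct values of $\y$ can improve the objective. This follows because the inner integrand at a given $\y$ places no constraint on the choice of $f$ elsewhere, so the infimum of the outer expectation is attained by minimizing each conditional expectation independently. At the level of this manuscript this is standard, and the decomposition above makes the optimality both transparent and uniform in $\y$.
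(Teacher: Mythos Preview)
Your proof is correct and follows essentially the same route as the paper: both rely on the bias--variance decomposition $\|f(\y)-\x\|^2 = \|f(\y)-f^*(\y)\|^2 + \|\x-f^*(\y)\|^2 + \text{cross term}$ and kill the cross term via $\E{\x|\y}{\x-f^*(\y)}=\boldsymbol{0}$. The only cosmetic difference is ordering: you condition on $\y$ first and then decompose pointwise, whereas the paper decomposes the full expectation and then invokes the tower property to eliminate the cross term.
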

\begin{proof}
Letting $\mathcal{L}(f)=\E{\x,\y}{\|f(\y) - \x\|^2}$, we have that
\begin{align*}
    &\mathcal{L}(f)= \; \E{\x,\y}{\|\left(f(\y) - f^{*}(\y)\right)- \left(\x-f^{*}(\y)\right)\|^2} \\
    & \propto \; \E{\x,\y}{\|f(\y) - f^{*}(\y)\|^2}  - 2 \, \E{\x,\y}{\left(f(\y) - f^{*}(\y)\right)^{\top}\left(\x-f^{*}(\y)\right)}\\
    & = \;  \E{\x,\y}{\|f(\y) - f^{*}(\y)\|^2}  - 2 \, \E{\y}{\E{\x|\y}{\left(f(\y) - f^{*}(\y)\right)^{\top}\left(\x-f^{*}(\y)\right)}}\\
    & = \; \E{\x,\y}{\|f(\y) - f^{*}(\y)\|^2}  - 2 \, \E{\y}{\left(f(\y) - f^{*}(\y)\right)^{\top}\E{\x|\y}{\x-f^{*}(\y)}} \\
     &= \; \E{\x,\y}{\|f(\y) - f^{*}(\y)\|^2} 
\end{align*}
where the fourth line uses the fact that $\E{\x|\y}{\x-f^{*}(\y)}=0$. Thus, the global minimizer of $\mathcal{L}(f)$ is $f(\y)=f^{*}(\y)=\E{\x|\y}{\x}$.
\end{proof}

\begin{proposition} \label{prop: minimizer weighted l2 loss}
Let $\x \in \R{n}$ and $\y \in \R{m}$ be two random variables following the joint distribution $p(\x,\y)$, and let $\A \in \R{m\times n}$ be a linear operator. The minimizer of the following weighted $\ell_2$ loss 
\begin{equation}
    f^{*}(\y) \in \argmin_f \E{\x,\y}{\|\A f(\y) - \A\x\|^2} 
\end{equation}
is given by 
\begin{equation} \label{eq: A_dagger decomp f}
    f^{*}(\y) =\A^{\dagger}\A  \; \E{\x|\y}{\x} + (\Id - \A^{\dagger}\A) v(\y)
\end{equation}
where $\A^{\dagger}$ is the linear pseudoinverse of $\A$, $\A^{\dagger}\A$ is the projection into the range space of $\A^{\top}$, and $v:\R{n}\mapsto\R{n}$ is any function. 
\end{proposition}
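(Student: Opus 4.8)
The plan is to reduce this weighted problem to the unweighted conditional-mean result already established in \Cref{prop: minimizer l2 loss}. The crucial observation is that the loss $\|\A f(\y)-\A\x\|^2$ depends on $f$ only through the composite map $g:=\A\circ f$, which takes values in $\text{range}(\A)\subseteq\R{m}$. As $f$ ranges over all functions $\R{m}\to\R{n}$, the image $g=\A f$ ranges over exactly the functions valued in $\text{range}(\A)$ (given any such $g$, the choice $f=\A^{\dagger}g$ reproduces it, since $\A\A^{\dagger}g=g$ for $g$ in the range). Hence minimizing over $f$ is equivalent to minimizing $\E{\x,\y}{\|g(\y)-\A\x\|^2}$ over $g$ constrained to lie in $\text{range}(\A)$.

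First I would drop the range constraint and minimize over all functions $g:\R{m}\to\R{m}$. Treating $\A\x$ as the target variable, \Cref{prop: minimizer l2 loss} gives the unconstrained minimizer $g^{*}(\y)=\E{\A\x|\y}{\A\x}=\A\,\E{\x|\y}{\x}$. Since this minimizer already lies in $\text{range}(\A)$, it is feasible for the constrained problem, so the constraint is inactive and $g^{*}$ is also the optimal composite map. This is where all the probabilistic content of the proof is absorbed into the earlier proposition.

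The remaining step, and the only delicate point, is to recover \emph{all} functions $f$ consistent with $\A f(\y)=\A\,\E{\x|\y}{\x}$. Viewed pointwise in $\y$, this is a linear system $\A f=\m$ with right-hand side $\m=\A\,\E{\x|\y}{\x}\in\text{range}(\A)$. A particular solution is $\A^{\dagger}\m=\A^{\dagger}\A\,\E{\x|\y}{\x}$, using $\A\A^{\dagger}\m=\m$ for $\m\in\text{range}(\A)$, while the homogeneous solutions of $\A w=\boldsymbol{0}$ fill out $\text{null}(\A)=\text{range}(\Id-\A^{\dagger}\A)$. Summing a particular solution with the general homogeneous solution yields exactly
\begin{equation*}
    f^{*}(\y)=\A^{\dagger}\A\,\E{\x|\y}{\x}+(\Id-\A^{\dagger}\A)\,v(\y),
\end{equation*}
with $v:\R{n}\to\R{n}$ arbitrary, matching the claimed form and making explicit that the nullspace component of $f$ is completely unconstrained by the loss.

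As a consistency check one could instead argue directly, expanding $\|\A f(\y)-\A\x\|^2=(f(\y)-\x)^{\top}\A^{\top}\A(f(\y)-\x)$ and completing the square against $\A^{\dagger}\A\,\E{\x|\y}{\x}$ as in the proof of \Cref{prop: minimizer l2 loss}; the cross term vanishes because $\E{\x|\y}{\x-\E{\x|\y}{\x}}=\boldsymbol{0}$ and $\A^{\top}\A=\A^{\top}\A\,\A^{\dagger}\A$. I expect no genuine obstacle here; the only thing to handle with care is ensuring the substitution $g=\A f$ neither adds nor removes feasible candidates and that the pseudo-inverse parametrization of the solution set is stated correctly.
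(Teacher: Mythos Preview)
Your proof is correct and follows essentially the same route as the paper: define $g=\A\circ f$ (the paper writes $\tilde f=\A\circ f$ and $\tilde\x=\A\x$), invoke \Cref{prop: minimizer l2 loss} to obtain $\A f^{*}(\y)=\A\,\E{\x|\y}{\x}$, and then back out the general form of $f^{*}$ via the pseudoinverse. If anything, your version is more careful than the paper's, since you explicitly verify that the unconstrained minimizer lies in $\text{range}(\A)$ and spell out the particular-plus-homogeneous description of the solution set, whereas the paper simply says ``applying the linear pseudoinverse of $\A$ on both sides.''
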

\begin{proof}
Defining $\tilde{\x}=\A \x $ and $\tilde{f} = \A \circ f$, we can apply~\Cref{prop: minimizer l2 loss} to conclude that $\tilde{f}^{*}(\y)=\E{\tilde{\x}|\y}{\tilde{\x}}$, or equivalently that $\A f^{*}(\y)=\E{\x|\y}{\A\x}$. Applying the linear pseudoinverse of $\A$ on both sides, we obtain the desired equality in~\Cref{eq: A_dagger decomp f}.
\end{proof}

\excessprop*

\begin{proof}
Defining $a_{\x,\y_1} = \frac{1}{n}\|f(\y_1)- \x\|^2$, $b_{\x,\y_2}= \frac{1}{n}\|\y_2- \x\|^2$ and $c_{\x,\y_1,\y_2} = -\frac{2}{n} (f(\y_1)-\x)^{\top}(\y_2-\x)$ we have
\begin{align*}
\V{\y_1,\y_2,\x}{\frac{1}{n}\| f(\y_1) - \y_2\|^2} &= \V{\y_1,\y_2,\x}{a_{\x,\y_1}+ b_{\x,\y_2} + c_{\x,\y_1,\y_2}}
\\ &=  \V{}{a_{\x,\y_1}} + \Delta
\end{align*}
with
\begin{align*}
\Delta &= \V{}{b_{\x,\y_2}} + \V{}{c_{\x,\y_1,\y_2}} 
\\ &  + 2\, \E{\x}{ \Cov{a_{\x,\y_1},b_{\x,\y_2}}  + \, \Cov{a_{\x,\y_1},c_{\x,\y_1,\y_2}} 
+ \, \Cov{b_{\x,\y_2},c_{\x,\y_1,\y_2}} }.
\end{align*}
where $\Cov{\cdot,\cdot}$ denotes the covariance between two one-dimensional random variables with respect to $p(\y_1,\y_2|\x)$.

The first variance term is simply $\V{}{b_{\x,\y_2}}=\V{\y_2,\x}{\frac{1}{n}\|\y_2- \x\|^2}$, and the second variance can be computed as
\begin{align*}
    &\V{\x,\y_1,\y_2}{c_{\x,\y_1,\y_2}} =\frac{4}{n^2} \E{\x,\y_1,\y_2}{ \Big((f(\y_1)-\x)^{\top}(\y_2-\x)\Big)^2}
    \\&= \frac{4}{n^2} \E{\x}{\E{\y_1|\x}{(f(\y_1)-\x)(f(\y_1)-\x)^{\top}} \E{\y_2|\x}{ (\y_2-\x)(\y_2-\x)^{\top}} }
    \\ &= \frac{4}{n^2} \E{\x}{\trace{\bPsi_{\x}\bSigma_{\x}}}
\end{align*}
where the first line uses the fact that $\E{\y_1,\y_2|\x}{c_{\x,\y_1,\y_2}} = 0$, and the third line uses the definitions $\bSigma_{\x} = \E{\y_2|\x}{(\y_2-\x)(\y_2-\x)^{\top}}$ and $\bPsi_{\x} = \E{\y_1|\x}{(f(\y_1) - \x)(f(\y_1) - \x)^{\top}}$.

We have that $\Cov{a_{\x,\y_1},b_{\x,\y_2}} = 0$ since $a_{\x,\y_1}$ and $b_{\x,\y_2}$ are independent conditioned on $\x$. We also have that
\begin{align*}
   \Cov{a_{\x,\y_1},c_{\x,\y_1,\y_2}} &= 
   \E{\y_1,\y_2|\x}{ (a_{\x,\y_1} - \E{\y_1|\x}{a_{\x,\y_1}}) c_{\x,\y_1,\y_2}}
\\   &=  \E{\y_1|\x}{ (a_{\x,\y_1} - \E{\y_1|\x}{a_{\x,\y_1}}) \E{\y_2|\x}{c_{\x,\y_1,\y_2}}}
\\   &=0
\end{align*}

The remaining covariance term can be computed as
\begin{align*}
   &\Cov{b_{\x,\y_2},c_{\x,\y_1,\y_2}} 
\\   &=  \E{\y_2|\x}{ \Big(b_{\x,\y_2} - \E{\y_2|\x}{b_{\x,\y_2}}\Big) \E{\y_1|\x}{c_{\x,\y_1,\y_2}}}
\\   &=  \E{\y_2|\x}{b_{\x,\y_2}\E{\y_1|\x}{c_{\x,\y_1,\y_2}}} - \E{\y_2|\x}{b_{\x,\y_2}}\E{\y_1,\y_2|\x}{c_{\x,\y_1,\y_2}}
\\   &=  \E{\y_2|\x}{b_{\x,\y_2}\E{\y_1|\x}{c_{\x,\y_1,\y_2}}}
\\ &= -\frac{2}{n^2}\E{\y_2|\x}{\|\y_2- \x\|^2(\y_2-\x)^{\top}} \Big(\E{\y_1|\x}{f(\y_1)}-\x\Big)
\end{align*}
where the second and third lines use the fact that $\E{\y_1,\y_2|\x}{c_{\x,\y_1,\y_2}} = 0$. 
\end{proof}

\bibliographystyle{IEEEtran}
\bibliography{references}

\end{document}